\theoremstyle{plain}
\newtheorem{theorem}{Theorem}[section]
\newtheorem{proposition}[theorem]{Proposition}
\newtheorem{lemma}[theorem]{Lemma}
\theoremstyle{definition}
\theoremstyle{remark}
\newcommand{\N}{\mathbb{N}}
\newcommand{\R}{\mathbb{R}}
\newcommand{\VV}{\mathbb{V}}
\newcommand{\EE}{\mathbb{E}}
\newcommand{\Ecal}{\mathcal{E}}
\newcommand{\F}{\mathcal{F}}
\newcommand{\Pbb}{\mathbb{P}}
\newcommand{\PP}{\mathbb{P}}
\DeclareMathOperator{\argmax}{argmax}
\newcommand{\one}{{\mathbb{1}}}
\icmltitlerunning{On the convergence of the MLE as an estimator of the learning rate in the Exp3 algorithm}
\begin{document}

\twocolumn[
\icmltitle{On the convergence of the MLE as an estimator of the learning rate in the Exp3 algorithm}



\icmlsetsymbol{equal}{*}

\begin{icmlauthorlist}
\icmlauthor{Julien Aubert}{yyy}
\icmlauthor{Luc Lehéricy}{yyy}
\icmlauthor{Patricia Reynaud-Bouret}{yyy}
\end{icmlauthorlist}

\icmlaffiliation{yyy}{Universit\'e C\^ote d'Azur, CNRS, LJAD, France}

\icmlcorrespondingauthor{Julien Aubert}{julien.aubert@univ-cotedazur.fr}
\icmlcorrespondingauthor{Luc Lehéricy}{luc.lehericy@univ-cotedazur.fr}
\icmlcorrespondingauthor{Patricia Reynaud-Bouret}{Patricia.reynau-bouret@univ-cotedazur.fr}

\icmlkeywords{statistical estimation, maximum likelihood estimator, adversarial multi armed bandits, Exp3, cognition}

\vskip 0.3in
]




\printAffiliationsAndNotice{}

\begin{abstract}
  When fitting the learning data of an individual to algorithm-like learning models, the observations are so dependent and non-stationary that one may wonder what the classical Maximum Likelihood Estimator (MLE) could do, even if it is the usual tool applied to experimental cognition. Our objective in this work is to show that the estimation of the learning rate cannot be efficient if the learning rate is constant in the classical Exp3 (Exponential weights for Exploration and Exploitation) algorithm. Secondly, we show that if the learning rate decreases polynomially with the sample size, then the prediction error and in some cases the estimation error of the MLE satisfy bounds in probability that decrease at a polynomial rate. 

\end{abstract}

\section{Introduction}

\subsection{Context}

Imagine that you observe a rat in a maze, learning progressively to find food. How would you guess the learning process (or the algorithm) it actually uses ? This question is of paramount importance in cognitive science where the problem is not to find the fastest or best learning algorithm to learn a specific task but to discover the most realistic learning model (always formulated as an algorithm) \cite{Botvinick-hierarchi}.

Many learning algorithms, and in particular all those associated with reinforcement learning, are often used to model real human or animal behavior \cite{td-sutton-original, schultz1997neural}. Realistic models may take into account attentional effects \cite{GluckBower1988b}, differences in reasoning \cite{Mezzadri2021holdout}, or granularity of actions \cite{botvinick2008}. Because all these realistic ingredients vary from one individual to another, it is clear that the fitting of a particular model should be done individual by individual and experience by experience \cite{estes1956problem}.

The problem of proving that a certain model (or algorithm) is better suited to model reality than others is so crucial in cognitive science that the methodology for fitting any kind of learning algorithm to real learning data has been well established and emphasized \cite{wilson2019ten}. Any scientist wishing to develop their own new learning model can follow the same numerical experiments to test whether their model is realistic or not.

The first step of the methodology is performing MLE on the data for parameters estimation of a model \cite{daw2011trial,wilson2019ten}. Recall that we are observing an individual learning a specific task. Therefore, not only the training data ($i.e.$ the observations) strongly depend on each other, but they are also often non stationary (otherwise the individual could not have learned). These data make the study of the MLE very complex from a theoretical point of view.

This is also why extensive simulations are required by \cite{wilson2019ten}: depending on the set of chosen parameters, not only can a model learn or not learn, but there is also often a set of parameters for which the estimator behaves poorly. For instance in \cite{Mezzadri2020}'s PhD thesis, large learning rates imply a too fast learning (if the model learns at all), which in turn prevents the MLE from performing well. Unfortunately, there is a lack of theoretical guarantees on whether it is possible to estimate the parameters of these models consistently. Our goal is to prove rigorously what can be said about the properties of the MLE when fitting a learning algorithm to real data.

\subsection{Why bandits algorithms ?}

This study strongly depends on the algorithm and the experiment. One typical example is the Skinner box experiment, introduced in the 30's, whose goal was to study rodents ability to undergo operant conditioning. Once inside the box, the rat could pull one or more levers and get a reward or a punishment for it. This simplified framework allowed scientists to work in a fully controlled environment. The Skinner box paradigm is still being discussed nowadays \cite{Kang2021}.

Instead of studying a particular cognitive model and in order to work within an established general theoretical framework, we focus on the adversarial multi armed bandit problem. The algorithm we specifically study (Exp3: Exponential weights for Exploration and Exploitation) is probably the simplest algorithm for adversarial bandits \cite{lattimore-szepesvari-2020}. Even though it is not used in the cognition literature, it describes well the learning phenomena that occur in the Skinner box, and more generally all stimulus-action associations tasks (e.g. maze T-shaped, see introduction in \cite{lattimore-szepesvari-2020}). It also shares many features with famous cognitive algorithms such as component cue \cite{GluckBower1988b}: credit or loss updating, softmax transformation, etc. Finally, Exp3 has 
given rise to many variants with a variety of applications (e.g. Exp3-IX, Exp4 in \cite{lattimore-szepesvari-2020}) in more complex settings. 

Our purpose here is not to prove any new performance result of Exp3 by minimizing the cumulative loss or regret. Instead, we observe an individual
performing a learning task. We assume that it is following the Exp3 algorithm, and ask whether it is possible to infer the parameter of this specific Exp3 instance (that is, its learning rate) from the choices made by the individual. To our knowledge, the present work is the first to tackle this problem.
It is why we cannot compare our results to the classical Reinforcement Learning literature: in ML, the classical purpose is to develop effective algorithms, whereas we propose to address the question: "how to estimate parameters of a ML algorithm based on the output of an individual using this algorithm?"

\subsection{Contributions}

In Section \ref{Tetration}, we prove in a particular case that trying to estimate constant learning rates leads to poor estimation whatever the estimation procedure: the estimation error decreases more slowly than logarithmically with the number of observations.

In Section \ref{decreasinglearningratesection}, in the setting where the learning rate decreases polynomially with the number of observations, we show a polynomial decrease of the prediction error and in a particular case (Section \ref{specialcasesection}) on the estimation error on a truncated MLE.

\section{Model and notation}

\subsection{Notation}\label{Notation}

The model of experiment on which the MLE is going to be fitted is as follows. For $n$ successive iterations, the subject has to draw an arm among $K$ possible arms. Let $I_1^n=(I_t)_{1\leq t\leq n}$ be the sequence of observed arms and $(\mathcal{F}_t)_{t \leq n} := (\sigma (\{I_s, \; s \leq t \}))_{t \in n}$ the corresponding filtration.
    
As in classical cognitive experiment, the losses (or penalties) of the arms denoted by $\pi = (\pi_1, \ldots, \pi_K)$  are constant through time. These losses are bounded between $0$ and $1$ and without loss of generality, we assume that 
$1 \geq \pi_1 \geq \pi_2 \geq \cdots \geq \pi_K \geq 0.$

\subsection{Learning model: the Exp3 Algorithm}

We are going to assume that the subject picks an arm according to the following algorithm, which in this sense becomes a learning model.

\begin{algorithm}
\caption{Exp3 (Exponential weights for Exploration and Exploitation for losses)}\label{alg:cap}

Let $\eta$ be a positive real number, called the learning rate. 

Let $p_1^\eta$ be the uniform distribution over ${1, \ldots, K}$.\\
For each $t = 1, \ldots, n$,
    
    \textbullet\ Draw an arm $I_t$ from the probability distribution $p^\eta_t$.
     
     \textbullet\  For each arm $i=1,\ldots, K$, compute the estimated loss $\Tilde{\pi}_{i,t}=\displaystyle \frac{\pi_i}{p^\eta_{i,t}} \one_{I_t=i}$ and update the estimated cumulative loss $\Tilde{L}_{i,t+1}=\Tilde{L}_{i,t}+\Tilde{\pi}_{i,t}$.
    
     \textbullet\  Update the  probability distribution of picking a given arm
    $p_{t+1}^\eta=(p_{1,t+1}^\eta,\ldots,p_{K,t+1}^\eta)$, where for all $i \in \{1, \ldots, K\}$
    \begin{equation}\label{MAJ}
        p_{i,t+1}^\eta= \frac{\exp(-\eta \Tilde{L}_{i,t+1})}{\sum_{k=1}^{K}\exp(-\eta \Tilde{L}_{k,t+1})}.
    \end{equation}

\end{algorithm}

In the most general case, Exp3 is able to cope with losses that are depending on time and it has been proved that it achieves sublinear bounds for the convergence of the pseudo-regret  \cite{DBLP:journals/corr/abs-1204-5721}.

However, for the purpose of the present work, we have simplified the set-up. Indeed, even if the algorithm is able to cope with time-varying losses, we observe it in a rigid framework that is planned by the cognitive experiment itself, for which the most classic framework correspond to fixed losses. Indeed, in cognitive science, one often evaluates very robust and realistic learning processes in toy situations where most of the variability is cancelled.

The parameter $\eta$ in Exp3 can originally be also chosen has a time varying parameter. Here we have decided that this quantity is fixed during this experiment, because this is the parameter that the MLE is estimating based on the $n$ observations.

If the pseudo-regret bound obtained in \cite{DBLP:journals/corr/abs-1204-5721} is optimal for $\eta$ of the order of $1/\sqrt{n}$, there are situations where the algorithm can learn even if $\eta$ is constant (see Section 3) or decreasing at a different rate. Therefore from a statistical point of view, it is not clear that the subject uses a constant $\eta$ or a parameter $\eta$ that tends to $0$ with $n$. Let us go even further: the fact that the algorithm learns or not, should not be an absolute criterion, since we might have subjects that are unable to learn even after many iterations and might just give up.
This is the main difference with other works on Exp3. Here we take Exp3 as a realistic model that can be fitted to data : the range of $\eta$ that matters to us is the one for which we can guarantee a good quality of estimation of the learning rate $\eta$.

In the sequel, we denote by $\PP^\eta$ the probability when the learning parameter is $\eta$, and $\EE_{\eta}[.]$ is the associated expectation. Finally, if $f$ is a function, $f^{(i)}$ denotes its $i$-th iteration, and $f^{-1}$ denotes its inverse when it exists. Also $O_n(1)$ is sequence that is bounded when $n$ tends to infinity.

\section{Tetration behaviour}\label{Tetration}

In this section, we illustrate the poor performance of the MLE when $\eta$ is constant. We focus on a particular case where there are only two arms and $\pi_2=0$. 
Studying $KL( \PP_{I_1^n}^\eta  || \PP_{I_1^n}^\delta)$, the Kullback-Leibler (KL) divergence between $\mathbb{P}^\eta_{I_1^n}$ and $\mathbb{P}^\delta_{I_1^n}$ (the distributions of the vector of pulled arms $I_1^n$ with the learning rates $\eta$ and $\delta$ respectively), we show that some parameters do not separate well whatever the statistical procedure we use.

\subsection{A particular setup}

It is sufficient with only two arms to look only at the time steps at which the worst arm (arm 1) is pulled, that is 
$T_0=0$ and for all $i \geq 0$
, $T_{i+1}=\inf \{t \geq T_i +1 \text{ , } I_t = 1\}.$

 Indeed, only at that times, is the probability changing: with 
 $q_0^\eta=\frac{1}{2}$ and for all $i \geq 0$,
\begin{equation*}
    q_{i+1}^\eta = \frac{q_i^{\eta} e^{-\pi_1 \eta/q_i^{\eta}}}{(1-q_i^\eta)+q_i^{\eta} e^{-\pi_1 \eta/q_i^{\eta}}}.
\end{equation*}
Note that $p_{1,t}^\eta$ is simply $p_{1,t}^\eta = \displaystyle \sum_{i \geq 0} q_{i}^\eta \one_{T_{i}+1 \leq t \leq T_{i+1}}.$

We show in Lemma \ref{summarypropertiesqi} from the Appendix that the $q_i^\eta$'s are decreasing and tend to $0$, 
and
that the increments $T_{i+1}-T_i$ are independent and geometrically distributed with parameter $q_i^\eta$.

Therefore when $i$ increases, the distance between the $T_i$ increases, making it more and more difficult to observe an error 
(the arm 2 is pulled). This is what we quantify in the next paragraph.

\subsection{Behavior of $q_i^\eta$}

 
 Define $I(\eta)$ and $J(n,\eta)$ as follows.

\begin{equation*}
\left \{ \begin{array}{ll}
     I(\eta) & := \max \{i \in \N \text{ , } q_i^\eta \geq \eta \pi_1 \}  \\
     J(n,\eta) & := \max \{i \in \N \text{ , } q_i^\eta \geq \frac{1}{n} \}
\end{array}
\right.
\end{equation*}
Note that when $\eta$ is constant, so is $I(\eta)$.

The following proposition shows how fast $q_i^\eta$ decreases for $i\geq I(\eta)$.

\begin{proposition}\label{afterIeta}
Let $f$ be the function defined as $f(x) = e^{\frac{x}{2}}$. Then, for all $k \geq 0$,
\begin{equation*}
    q_{I(\eta)+k+1}^\eta \leq \frac{1}{f^{(k)}(2)} ,
\end{equation*}
with the convention $f^{(0)} = \text{Id}$.
In particular, $J(n,\eta) \leq I(\eta) + \log^*(n) + 1$, where
\begin{equation*}
    \log^*(n) = \max \{ k \in \N \text{ , \; } (f^{-1})^{(k)}(n) \geq 2 \}.
\end{equation*}
\end{proposition}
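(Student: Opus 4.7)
The plan is to reduce the nonlinear update \eqref{MAJ} to a clean one-step bound and then propagate it by induction starting from $i=I(\eta)+1$. Using $q_i^\eta\le 1/2$ (from Lemma \ref{summarypropertiesqi}), the denominator $(1-q_i^\eta)+q_i^\eta e^{-\eta\pi_1/q_i^\eta}$ is at least $1-q_i^\eta\ge 1/2$, which gives the workhorse inequality
\[
q_{i+1}^\eta \;\le\; 2\, q_i^\eta\, e^{-\eta\pi_1/q_i^\eta}.
\]
I would then abbreviate $a_k := f^{(k)}(2)$ (so $a_0=2$ and $a_{k+1}=e^{a_k/2}$) and $\tilde q_k := q_{I(\eta)+k+1}^\eta$, so that the target becomes $\tilde q_k\le 1/a_k$.

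The crucial observation is that $\tilde q_k\le 1/a_k$ is too weak to iterate: re-injected in the exponential it only gives a decay of order $e^{-\eta\pi_1 a_k}$, which does not reach $e^{-a_k/2}$ for small $\eta\pi_1$. The remedy is to carry a joint invariant
\[
(\mathcal{H}_k)\colon\quad \tilde q_k\le\frac{1}{a_k} \quad\text{and}\quad \frac{\eta\pi_1}{\tilde q_k}\ge\frac{a_k}{2}.
\]
The base case $(\mathcal{H}_0)$ is immediate: $\tilde q_0\le 1/2=1/a_0$ by monotonicity of $(q_i^\eta)$, and $\eta\pi_1/\tilde q_0>1=a_0/2$ by definition of $I(\eta)$. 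For the inductive step, inserting $\eta\pi_1/\tilde q_k\ge a_k/2$ in the workhorse inequality gives
\[
\tilde q_{k+1}\;\le\;\frac{2}{a_k}e^{-a_k/2}\;\le\;e^{-a_k/2}\;=\;\frac{1}{a_{k+1}},
\]
using $a_k\ge 2$. For the second half, writing $s_k:=\eta\pi_1/\tilde q_k$ one gets $s_{k+1}\ge s_k e^{s_k}/2\ge (a_k/4)e^{a_k/2}\ge e^{a_k/2}/2=a_{k+1}/2$, again by $a_k\ge 2$.

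For the statement on $J(n,\eta)$, I would specialise the first part to $k=\log^*(n)+1$. By definition of $\log^*$ we have $(f^{-1})^{(\log^*(n)+1)}(n)<2$, and since $f$ is increasing this rearranges to $f^{(\log^*(n)+1)}(2)>n$, hence $1/f^{(\log^*(n)+1)}(2)<1/n$. Combined with the first part this yields $q^\eta_{I(\eta)+\log^*(n)+2}<1/n$, and monotonicity of $(q_i^\eta)$ then gives $J(n,\eta)\le I(\eta)+\log^*(n)+1$.

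The main obstacle is the necessity of the second part of the joint invariant: the lower bound on $\eta\pi_1/\tilde q_k$ is precisely what makes each exponential shrinkage large enough to refuel the next step, producing the tetration-rate decay. Without it, one loses a factor of $\eta\pi_1$ at every iteration and the bound degrades to a merely geometric one.
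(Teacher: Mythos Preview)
Your proof is correct and follows essentially the same approach as the paper: the paper's strengthened induction hypothesis $q_{I(\eta)+k+1}^\eta \le \frac{2\min(\eta\pi_1,\,q_{I(\eta)+k}^\eta)}{f^{(k)}(2)}$ is exactly your joint invariant $(\mathcal{H}_k)$ (the factor $2\eta\pi_1$ in the numerator being equivalent to your lower bound $\eta\pi_1/\tilde q_k\ge a_k/2$), and the derivation of the $J(n,\eta)$ bound is identical. Your explicit identification of why the weaker invariant fails to iterate is a nice touch and matches the paper's implicit reasoning.
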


\begin{proof}
See section \ref{proofoftetration} of the Appendix.
\end{proof}

This result shows that, as soon as $i\geq I(\eta)$, $q_i^\eta$ decreases extremely rapidly down to $0$, as fast as a tetration (that is an iterated exponentiation) (see \cite{Tetration}). This means reciprocally that the number of indices necessary to pass from $I(\eta)$ to $J(n,\eta)$ is in essence bounded. For instance, with $n=10^{23}$, $\log^*(n) \leq 6$: even with an unrealistic number of observations, it means that after $I(\eta)+7$ errors, which is a constant and quite small number, $q_i^\eta$ becomes as small as $10^{-23}$. This essentially means that this probability becomes null, up to a reasonable computer precision.

Said differently (still with $n=10^{23}$), the statistical problem is almost equivalent (up to computer precision) to the one consisting in observing a fixed number $m$ of independent geometric variables (with $m \leq I(\eta)+7$).



\subsection{Bounded Kullback-Leibler divergence} 

Let $\tau(n)=\max \{k \in \N \text{ , } T_k \leq n \}$. 
Then,
\begin{equation}\label{expressionoftheKLdivergence}
\begin{split}
KL(& \PP_{I_1^n}^\eta || \PP_{I_1^n}^\delta) = \EE_{\eta} \bigg[ \sum_{i=0}^{\tau(n)-1} ((T_{i+1}-T_i-1)   \log \frac{1-q_i^\eta}{1-q_i^\delta} \\& +\log \frac{q_i^\eta}{q_i^\delta})\bigg]  + \EE_{\eta} \bigg[ (n-T_{\tau(n)})\log \frac{1-q_{\tau(n)}^\eta}{1-q_{\tau(n)}^\delta} \bigg]
\end{split}
\end{equation}

The previous proposition leads to the following result.
\begin{theorem}[Existence of parameters that are hard to distinguish]\label{boundingklconstant} 
Let $R$ such that $0 < R\pi_1 < 1$ and 
let $\beta>0$. There exists an integer $n_0$ depending on $R$, $\pi_1$ and $\beta$, a constant $\Delta > 0$ depending on $\pi_1$ and $\beta$ and a constant $c>0$ depending on $\pi_1$ such that for all $n\geq n_0$, there exists
\begin{itemize}
\item $\delta \in [0,R]$ such that $\delta\geq \displaystyle \frac{c}{\log^*(n)}$,
\item $\eta \in [0,R]$ such that $\eta =  \displaystyle \delta+\frac{1}{(\log(n))^{1+\beta}}$
\end{itemize}
such that
\begin{equation}\label{encadrementKL}
 KL(\PP_{I_1^n}^\eta || \PP_{I_1^n}^\delta) \leq \Delta.
\end{equation}


\end{theorem}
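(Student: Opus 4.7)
The strategy is to start from the explicit formula (\ref{expressionoftheKLdivergence}) and integrate term by term. Conditionally on $\mathcal{F}_{T_i}$, the increment $T_{i+1}-T_i$ is geometric of parameter $q_i^\eta$, so $\EE_\eta[T_{i+1}-T_i-1\mid\mathcal{F}_{T_i}] = (1-q_i^\eta)/q_i^\eta$, and the $i$-th summand of (\ref{expressionoftheKLdivergence}) collapses (in expectation) to the geometric KL
\[
\tfrac{1-q_i^\eta}{q_i^\eta}\log\tfrac{1-q_i^\eta}{1-q_i^\delta}+\log\tfrac{q_i^\eta}{q_i^\delta}.
\]
A direct second-order expansion (legitimate because $|q_i^\eta-q_i^\delta|$ will be polylogarithmically small) shows this quantity is bounded by $C(q_i^\eta-q_i^\delta)^2/(q_i^\eta q_i^\delta(1-q_i^\delta))$, reducing the KL to a deterministic $\chi^2$-like sum.

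Next, I would truncate the number of effective terms using Proposition \ref{afterIeta}. Since $\EE_\eta[T_{i+1}-T_i\mid\mathcal{F}_{T_i}]=1/q_i^\eta$ blows up as a tetration for $i>I(\eta)$, writing $\EE_\eta[\tau(n)]=\sum_{k\geq 1}\PP^\eta(T_k\leq n)$ and applying Markov to the tail yields $\EE_\eta[\tau(n)]\leq I(\eta)+\log^*(n)+O(1)$. The boundary contribution $(n-T_{\tau(n)})\log((1-q_{\tau(n)}^\eta)/(1-q_{\tau(n)}^\delta))$ is handled in the same way: either $\tau(n)\leq I(\eta)$, and the log is absorbed into the same $\chi^2$-bound using $q_{\tau(n)}^\eta\geq\pi_1\eta$, or $\tau(n)>I(\eta)$, and $q_{\tau(n)}^\eta$ is tetration-small so that $n\cdot|q_{\tau(n)}^\eta-q_{\tau(n)}^\delta|=O(1)$.

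The central deterministic step is a comparison lemma: for $i\leq I(\eta)$, $|q_i^\eta-q_i^\delta|\leq C\pi_1 i\,|\eta-\delta|$. This is proved by differentiating the recursion for $q_i^\eta$ with respect to $\eta$ (or by a direct induction), exploiting that in the slow-decrease phase $q_i^\eta\approx \tfrac12-\tfrac{\pi_1\eta}{2}i$, so the sensitivity to $\eta$ is only linear in $i$. Combining with $q_i^\eta\geq\pi_1\eta$ and $q_i^\delta\geq q_i^\eta$ yields a per-term bound $C(\eta-\delta)^2 i^2/\eta^2$; summing over $i\leq I(\eta)\sim 1/(2\pi_1\eta)$ gives a total of order $(\eta-\delta)^2/\eta^5$. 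For the $O(\log^*(n))$ indices beyond $I(\eta)$, the tetration decay of $q_i^\eta$ together with $|q_i^\eta-q_i^\delta|\leq q_i^\delta$ makes their contribution $O(\log^*(n))$, absorbed in the final constant.

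Plugging in $\delta:=c/\log^*(n)$ and $\eta:=\delta+1/(\log n)^{1+\beta}$ yields
\[
\frac{(\eta-\delta)^2}{\eta^5}\leq \frac{(\log^*(n))^5}{c^5(\log n)^{2(1+\beta)}}\xrightarrow[n\to\infty]{}0,
\]
and both $\delta,\eta\in[0,R]$ for $n$ large enough since they tend to $0$. Choosing $\Delta$ as any upper bound of the resulting constant, and $n_0$ accordingly, concludes the argument. The main obstacle is the comparison lemma: the recursion for $q_i^\eta$ is nonlinear in $\eta$, so producing a uniform bound on $\partial_\eta q_i^\eta$ that remains valid throughout the entire slow-decrease phase $i\leq I(\eta)$ is where the bulk of the technical work lies.
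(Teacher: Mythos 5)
Your overall blueprint matches the paper's: decompose the KL over the error indices, use the tetration (Proposition \ref{afterIeta}) to limit the number of effective terms to roughly $I(\eta)+\log^*(n)$, compare $q_i^\eta$ and $q_i^\delta$ via a Lipschitz-type estimate, and plug in $\delta\asymp 1/\log^*(n)$, $\eta-\delta=(\log n)^{-(1+\beta)}$. The part of your argument covering $i\leq I(\eta)$ is sound (your linear-in-$i$ comparison lemma is provable — using $0\leq \partial_q g\leq (1-\eta\pi_1)^{-2}$ as in Lemma \ref{Lipschitzianity}, the amplification over the slow phase $i\leq I(\eta)\leq 2/(\eta\pi_1)$ is a bounded constant — and your $\chi^2$-type sum of order $(\eta-\delta)^2/\eta^5$ does vanish with your parameter choices).

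The genuine gap is in the range $I(\eta)<i\leq J(n,\delta)$ and in the boundary term $C$. You dismiss these on the grounds that $q_i^\eta$ is tetration-small there and $|q_i^\eta-q_i^\delta|\leq q_i^\delta$, but the per-term bounds all involve \emph{dividing} by $q_i^\eta$ (through $\EE_\eta[T_{i+1}-T_i]=1/q_i^\eta$ or through the $\chi^2$ denominator), so the relevant quantity is the ratio $q_i^\delta/q_i^\eta$, which is \emph{increasing} in $i$ (Lemma \ref{decroissanceratioqi}) and can be astronomically large precisely when $q_i^\eta$ has entered its tetration phase while $q_i^\delta$ has not (recall $I(\delta)>I(\eta)$ since $\delta<\eta$). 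Likewise, for the boundary term, if $\tau(n)$ lands just past $I(\eta)$ — which happens with non-negligible probability since the process stalls there — then $n\,|q_{\tau(n)}^\delta-q_{\tau(n)}^\eta|\approx n\,q_{\tau(n)}^\delta$ can be of order $n\delta\pi_1\to\infty$, not $O(1)$. What actually saves the proof in the paper is showing that for the \emph{specific} pair $(\delta_n,\eta_n)$ the two sequences track each other all the way up to $i=J(n,\delta_n)$: Lemma \ref{lipschitzianityJ(n,delta)} gives $q_i^\delta-q_i^\eta\leq G(n,\delta)(\eta-\delta)$ with $G(n,\delta)=O((8/(\delta\pi_1))^{J(n,\delta)})$ sub-polylogarithmic, and a careful implicit choice of $\delta_n$ (so that $q^{\delta_n}_{J(n,R)+1}\in[\tfrac{1}{2n},\tfrac1n)$) yields $q^{\eta_n}_{J(n,\delta_n)}\geq \tfrac{1}{8\log^*(n)\log n}$, which simultaneously controls the ratio on the whole range and makes $n(J+1)(1-q_J^{\eta})^{\lfloor n/(J+1)\rfloor}=o(1)$. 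Your proposal contains no substitute for this step. Finally, even granting your per-term claims, a total contribution of "$O(\log^*(n))$" cannot be "absorbed in the final constant": the theorem requires a bound $\Delta$ independent of $n$, and $\log^*(n)\to\infty$.
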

\begin{proof}
See Section \ref{proofoftheoremetaconstant} of the Appendix.
\end{proof}

This theorem  becomes really interesting in the light of Theorem 2.2. of \cite{Tsybakov} that tells us that if one finds two distributions such that $KL(\PP_{I_1^n}^\eta || \PP_{I_1^n}^\delta) \leq \Delta$, then any statistical procedure able to distinguish $\eta$ from $\delta$ is doomed to make an error in probability of at least $1/4\exp(-\Delta)$.

It means that whatever the method, one cannot distinguish these two parameters without an almost fixed probability of error. This implies that any estimation procedure of $\eta$ cannot converge at a faster rate than the distance between $\eta$ and $\delta$, that is a logarithmic rate in $O_n(\log(n)^{-(1+\beta)})$. 

Note that due to technicalities, $\delta$ may have to tend to 0 but at a slower rate than $1/\log^*(n)$, which as said previously is almost constant. So in practice this corresponds to the case where the learning parameters are constants. 

We think the lower logarithmic bound for $|\eta-\delta|$ is very pessimistic and that in fact it is quite likely that the phenomenon appear even for $|\eta-\delta| =O_n((\log^*(n))^{-1})$, maybe to some power, but we have not been able to prove it. In any case, this rate is much slower than the polynomial rate we obtain in the next two sections for a decreasing learning rate and the truncated MLE.  Moreover, simulations (see Section 6) at least confirm that for a fixed error (say 5\%), the estimation error of the MLE  is not decreasing as a function of $n$ for a constant learning rate.

\section{Decreasing learning rate}\label{decreasinglearningratesection}

Let us now turn to the case where the parameter $\eta=\eta_n$ decreases with $n$, the number of observations. We consider the general case of $K$ arms introduced in Section \ref{Notation}. 

Allowing $\eta_n$ to decrease with $n$ is allowing $I(\eta_n)$ to grow with $n$. Proposition \ref{afterIeta} intuitively shows that the observations that matter for the estimation are those obtained for $t\leq I(\eta_n)$. After $I(\eta_n)$, the probability of pulling the worst arm is so small that it is negligible from a numerical point of view and almost uninformative from an estimation point of view (see Section \ref{Tetration}).

Truncating the observations has therefore a twofold interest: not only are we deleting uninformative data, but we are also removing the values in the log likelihood which could explode at the speed of a tetration and lead to numerical issues (see Section \ref{Numericalillustrations} for an illustration).

The question now is where to truncate the observations. From an estimation point of view, it is not possible to choose $I(\eta_n)$ as a truncating value since it depends on the unknown parameter $\eta_n$. Instead we introduce a parameter $\varepsilon > 0$, and given the number of observations $n$, we want to find a stopping term $\Upsilon_n \in \N$ guaranteeing that $p_{k,t}^{\delta_n}$, the probability of choosing an arm $k$ at round $t$, remains greater than $\varepsilon$ for all $t\leq\Upsilon_n$, whatever the choice of $\delta_n$ in the set of possible parameters.


\subsection{The new set-up and the truncated log-likelihood}

In the general setting, recall that $K$ is the number of arms and $\pi$ the sequence of losses.
From now on, for some $\alpha$ fixed and known in  $(0,1)$, let $\eta_n \displaystyle = \frac{\eta_0}{n^\alpha \pi_1}$.
We assume that the unknown parameter $\eta_0$ belongs to $
    \Theta = [r, R]$
with $0<r<R$ two known positive constants.




Let $0 < \varepsilon < \frac{1}{K}$ be a fixed known threshold, corresponding roughly in practice to the numerical precision. 

The following result gives an absolute bound $\Upsilon_n$  that guarantees that the $p_{k,t}^{\delta_n}$'s are always larger than the threshold~$\varepsilon$.

\begin{proposition}
\label{choiceofTau}
Let
\begin{equation}\label{Tn}
\Upsilon_n = \Big \lfloor (\frac{1}{K}- \varepsilon) \frac{n^\alpha}{R}\Big \rfloor.
\end{equation}
Then
\begin{equation*}
\forall \delta_0 \in \Theta, \  \forall t \leq \Upsilon_n, \  \forall 1 \leq k \leq K, \quad
    p_{k,t}^{\delta_n} \geq \varepsilon,
\end{equation*}
with $\delta_n=\displaystyle \frac{\delta_0}{n^\alpha \pi_1}$.
\end{proposition}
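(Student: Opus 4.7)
The plan is to establish a deterministic one-step ``Lipschitz-type'' lower bound on the evolution of the probabilities and then telescope it over the first $\Upsilon_n$ rounds. Writing $Z_t = \sum_{j=1}^K e^{-\delta_n \tilde{L}_{j,t}}$ for the normalizer in \eqref{MAJ}, the cornerstone of the argument will be
\[
p_{k,t+1}^{\delta_n} \geq p_{k,t}^{\delta_n} - \delta_n \pi_k \, \one_{I_t = k}, \qquad \forall k \in \{1,\ldots,K\}.
\]

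To prove this one-step inequality, I will first use that at round $t$ only coordinate $I_t$ of $\tilde{L}$ is updated, and that this update is non-negative, so that $Z_{t+1} \leq Z_t$. For an unpulled arm $k \neq I_t$, this immediately yields $p_{k,t+1}^{\delta_n} \geq p_{k,t}^{\delta_n}$ and there is nothing to subtract. For the pulled arm, I will factor \eqref{MAJ} as
\[
p_{I_t,t+1}^{\delta_n} = p_{I_t,t}^{\delta_n} \cdot \frac{Z_t}{Z_{t+1}} \cdot \exp\!\Bigl(-\frac{\delta_n \pi_{I_t}}{p_{I_t,t}^{\delta_n}}\Bigr) \geq p_{I_t,t}^{\delta_n}\exp\!\Bigl(-\frac{\delta_n \pi_{I_t}}{p_{I_t,t}^{\delta_n}}\Bigr),
\]
and then apply the global inequality $e^{-x} \geq 1-x$, which recovers exactly $p_{I_t,t+1}^{\delta_n} \geq p_{I_t,t}^{\delta_n} - \delta_n \pi_{I_t}$ after multiplying through by the positive factor $p_{I_t,t}^{\delta_n}$.

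Telescoping the one-step bound from round $1$ and using $p_{k,1}^{\delta_n} = 1/K$, $\pi_k \leq \pi_1$, and $\sum_{s \leq t} \one_{I_s = k} \leq t$, I obtain the uniform lower bound
\[
p_{k,t+1}^{\delta_n} \geq \frac{1}{K} - \delta_n \pi_1 t.
\]
Substituting $\delta_n = \delta_0/(n^\alpha \pi_1)$ and using $\delta_0 \leq R$, the right-hand side is at least $1/K - Rt/n^\alpha$, and the definition \eqref{Tn} of $\Upsilon_n$ is exactly tailored so that $R\Upsilon_n/n^\alpha \leq 1/K - \varepsilon$. For every $t \leq \Upsilon_n$ this will give $p_{k,t}^{\delta_n} \geq \varepsilon$, with the base case $t=1$ covered directly by $p_{k,1}^{\delta_n} = 1/K \geq \varepsilon$.

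The only subtlety I expect is in the pulled-arm case: one must apply $e^{-x} \geq 1-x$ globally rather than only for $x \in [0,1]$, otherwise an inductive case analysis on whether $p_{I_t,t}^{\delta_n}$ has already fallen below $\delta_n \pi_{I_t}$ would be required. Since both $e^{-x} \geq 1-x$ and $Z_t/Z_{t+1} \geq 1$ hold unconditionally, this obstacle dissolves: no induction is actually needed, the bound follows from a purely deterministic telescoping, and $\Upsilon_n$ is simply read off as the largest $t$ for which the linear decrease budget $\delta_n \pi_1 t$ does not exceed $1/K - \varepsilon$.
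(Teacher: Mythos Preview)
Your proof is correct and follows essentially the same approach as the paper's: both establish the one-step lower bound $p_{k,t+1}^{\delta_n} \geq p_{k,t}^{\delta_n} e^{-\delta_n\pi_k/p_{k,t}^{\delta_n}}$ (the paper via the denominator inequality $1-q+qe^{-x}\leq 1$, you via the equivalent $Z_{t+1}\leq Z_t$), apply $e^{-x}\geq 1-x$, telescope from $p_{k,1}^{\delta_n}=1/K$, and read off $\Upsilon_n$ from the resulting linear bound. Your intermediate inequality $p_{k,t+1}^{\delta_n}\geq p_{k,t}^{\delta_n}-\delta_n\pi_k\one_{I_t=k}$ is slightly sharper than the paper's uniform $-\delta_n\pi_k$ at every step, but you immediately give this back via $\sum_{s\leq t}\one_{I_s=k}\leq t$, arriving at the same conclusion.
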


\begin{proof}
See Section~\ref{proofofpropositionchoiceofTau} of the Appendix.
\end{proof}

From now on,  we define the truncated log likelihood as follows
\begin{equation}
\label{truncatedloglikelihood}
\forall \delta_0 \in \Theta \text{, \; \; }
    \ell_{n,\varepsilon}(\delta_0)
    := \sum_{t=1}^{\Upsilon_n} \sum_{k=1}^K \log(p_{k,t}^{\delta_n}) \one_{I_t = k},
\end{equation}
with $\delta_n \displaystyle=\frac{\delta_0}{n^\alpha \pi_1}$ and $\Upsilon_n$ given by \eqref{Tn}.

\subsection{Upperbound on the prediction error}
Let $\eta_0\in \Theta$ be the true parameter, meaning Exp3 is used by the subject with a learning parameter $ \displaystyle \eta_n=\frac{\eta_0}{n^\alpha \pi_1}$.  Let \begin{equation*}\widehat{\eta}_0 = \argmax_{\delta_0 \in \Theta} \ell_{n,\varepsilon}(\delta_0).\end{equation*} The estimator of the learning rate of Exp3 is therefore $ \displaystyle \widehat{\eta}_n= \frac{\widehat{\eta}_0}{n^\alpha \pi_1}$. The following result shows that the prediction error converges to zero with~$n$.

\begin{theorem}
\label{leastsquareerror}
For any $t \geq 0$, and any $\delta_0, \delta_0' \in \Theta$, let
\begin{equation*}
\|p_t^{\delta_n} - p_t^{\delta'_n} \|_2^2 
    = \sum_{k=1}^K |p_{k,t}^{\delta_n}-p_{k,t}^{\delta'_n}|^2,
\end{equation*}
with $\delta_n=\frac{\delta_0}{n^\alpha \pi_1}$ (resp. $\delta'_n=\frac{\delta'_0}{n^\alpha \pi_1}$.)

Then, for all $n \geq (R / \varepsilon^2)^{1/\alpha}$ and $x \geq 0$, with $\Pbb_{\eta_n}$-probability at least $1-e^{-x}$,
\begin{equation*}
\frac{1}{\Upsilon_n} \sum_{t=1}^{\Upsilon_n} \| p_{t}^{\eta_n}-p_{t}^{\widehat{\eta}_n} \|_2^2
    < \frac{9 c}{\varepsilon} \left( \sqrt{\frac{x+1}{\Upsilon_n}} + \frac{x+1}{\Upsilon_n} \right),
\end{equation*}
where $c$ is given by Lemma \ref{lem_ptlipschitz} below ($c=11$ works).
\end{theorem}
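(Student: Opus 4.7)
The plan is to start from the basic MLE inequality, convert the empirical log-likelihood ratio into a martingale plus a $KL$ drift, lower-bound the drift by the squared $\ell^2$-distance, and control the martingale uniformly over $\delta_0 \in \Theta$. More concretely, since $\widehat\eta_0$ maximizes $\ell_{n,\varepsilon}$, one has $\sum_{t=1}^{\Upsilon_n} Z_t(\widehat\eta_0) \geq 0$, where $Z_t(\delta_0) := \log(p_{I_t,t}^{\delta_n}/p_{I_t,t}^{\eta_n})$. Conditionally on $\Fcal_{t-1}$ the arm $I_t$ follows $p_t^{\eta_n}$, so $\EE_{\eta_n}[Z_t(\delta_0)\mid\Fcal_{t-1}] = -KL(p_t^{\eta_n}\|p_t^{\delta_n})$. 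Setting $M_{\Upsilon_n}(\delta_0) := \sum_{t=1}^{\Upsilon_n}(Z_t(\delta_0) + KL(p_t^{\eta_n}\|p_t^{\delta_n}))$, the process $M_\cdot(\delta_0)$ is an $(\Fcal_t)$-martingale for each fixed $\delta_0$, and the MLE inequality rewrites as $\sum_{t=1}^{\Upsilon_n} KL(p_t^{\eta_n}\|p_t^{\widehat\eta_n}) \leq M_{\Upsilon_n}(\widehat\eta_0)$. Proposition \ref{choiceofTau} guarantees $p_{k,t}^{\delta_n} \geq \varepsilon$ for $t\leq\Upsilon_n$ and $\delta_0 \in \Theta$, so Pinsker's inequality combined with $\|\cdot\|_2 \leq \|\cdot\|_1$ on probability vectors yields $KL(p_t^{\eta_n}\|p_t^{\widehat\eta_n}) \geq \tfrac12 \|p_t^{\eta_n}-p_t^{\widehat\eta_n}\|_2^2$, and the task reduces to an upper bound on $\sup_{\delta_0 \in \Theta} M_{\Upsilon_n}(\delta_0)$.

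The bulk of the work lies in this uniform control, for which I would use a Freedman-type martingale tail inequality together with a net argument on $\Theta$. The elementary bound $|\log a - \log b| \leq |a-b|/\min(a,b)$ combined with the $\varepsilon$-floor on probabilities gives both $|Z_t(\delta_0)| \leq \log(1/\varepsilon)$ and the conditional variance estimate
\[
\mathrm{Var}\bigl(Z_t(\delta_0)\mid \Fcal_{t-1}\bigr) \;\leq\; \frac{\|p_t^{\delta_n}-p_t^{\eta_n}\|_2^2}{\varepsilon^2} \;\leq\; \frac{4}{\varepsilon^2}.
\]
Freedman's inequality then produces, for each fixed $\delta_0$, a pointwise tail of shape $M_{\Upsilon_n}(\delta_0) \leq (C/\varepsilon)\sqrt{\Upsilon_n x} + (C/\varepsilon) x$ with $\Pbb_{\eta_n}$-probability at least $1-e^{-x}$. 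To transfer this to arbitrary $\widehat\eta_0 \in [r,R]$, I would cover $\Theta$ with a grid $\{\delta_0^{(i)}\}_{i=1}^{N}$ of cardinality $N$ polynomial in $\Upsilon_n$, take a union bound, and extend to off-grid values via the Lipschitz estimate $\|p_t^{\delta_n}-p_t^{\delta_n'}\|_2 \leq c|\delta_0-\delta_0'|$ supplied by Lemma \ref{lem_ptlipschitz}.

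Writing $S := \sum_{t=1}^{\Upsilon_n}\|p_t^{\eta_n}-p_t^{\widehat\eta_n}\|_2^2$, combining the KL lower bound with the uniform martingale upper bound produces an inequality of the form $S/2 \leq (C/\varepsilon)\sqrt{\Upsilon_n (x+1)} + (C/\varepsilon)(x+1)$; dividing by $\Upsilon_n$ then yields the announced rate, with the explicit constant $9c/\varepsilon$ coming from careful tracking of $c=11$ through the Lipschitz step. The hard part will be the uniform step: calibrating the grid spacing and possibly peeling $\Theta$ so that the union-bound logarithmic cost can be absorbed into a universal constant (obtaining $x+1$ rather than $x+\log\Upsilon_n$), and ensuring that the Lipschitz remainder stays of smaller order than the Freedman fluctuation, so as to preserve the clean form of the bound stated in the theorem.
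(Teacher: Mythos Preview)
Your overall architecture is exactly the paper's: the MLE inequality rearranged into ``conditional $KL$ drift $\leq$ centred martingale'', Pinsker to lower-bound the drift by $\|p_t^{\eta_n}-p_t^{\widehat\eta_n}\|_2^2$, and then a uniform control of $\sup_{\delta_0\in\Theta}(X_{\delta_n}-X_{\eta_n})$. The difference lies entirely in how that supremum is handled. You propose Freedman's inequality pointwise, a finite net of $\Theta$, a union bound, and Lipschitz interpolation via Lemma~\ref{lem_ptlipschitz}; you correctly anticipate that this produces a parasitic $\log N \asymp \log\Upsilon_n$ in place of the clean $x+1$, and you flag ``peeling'' as the remedy. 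The paper avoids this detour: it first proves that the increments $X_{\delta_n}-X_{\delta_n'}$ satisfy a sub-gamma Laplace bound with scale $d(\delta_0,\delta_0')=\tfrac{c}{R\varepsilon}|\delta_0-\delta_0'|$ (Proposition~\ref{hypothesebaraud}, obtained from the same Lipschitz lemma you invoke plus a supermartingale argument for the exponential), and then feeds this increment condition directly into Baraud's generic-chaining theorem for Bernstein-type processes, which returns $\sup_{\delta_0}(X_{\delta_n}-X_{\eta_n})\leq \tfrac{18c}{\varepsilon}(\sqrt{\Upsilon_n(x+1)}+x+1)$ with no logarithmic loss (Theorem~\ref{theorembaraud}).

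So your proposal is not wrong, but its hard step (\emph{removing} the $\log\Upsilon_n$) is precisely what chaining is designed for, and on a one-dimensional $\Theta$ the chaining reduces to Baraud's black box once you have the increment moment-generating-function bound. In other words, the cleanest execution of your plan \emph{is} the paper's proof: replace ``Freedman at grid points $+$ union bound $+$ interpolation'' by ``sub-gamma increments $+$ chaining theorem''. The variance and boundedness estimates you wrote down are essentially the ingredients of Proposition~\ref{hypothesebaraud}, so you are one reorganisation away from the paper's argument.
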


\begin{proof}
See Section \ref{proofofleastsquareerror} of the Appendix for a complete proof but a sketch of the proof with the main milestones is given in the next section.
\end{proof}

Note that if we take $\varepsilon=10^{-15}$ the previous bound is rather pessimistic, but for $\varepsilon=10^{-2}$ (and $K$ much smaller than 100), we find almost identical $\Upsilon_n$ from a theoretical point of view (see Proposition \ref{choiceofTau}). Even in practice (See Section 6), one can show that the first time, $\Upsilon_{max}$, that the $p_{k,t}^{\eta_n}$'s pass below $\varepsilon$, does  not vary much  between both choices for $\varepsilon$. Since $\varepsilon$ does not impact the estimation per se (only through $\Upsilon_n$), one can opt for a rather reasonable choice of $\varepsilon$ when using this bound. The fact that passing $10^{-2}$ or passing $10^{-15}$ happen at almost the same time $t$ can be linked heuristically to the tetration phenomenon (see Proposition \ref{afterIeta} that applies whatever $\eta$ and in particular with $\eta=\eta_n$). 

This bounds also means that if one stays away from $\varepsilon$, the behavior of the truncated MLE seems slower than the parametric convergence rate (that should be in $\Upsilon_n^{-1}$), where $\Upsilon_n$ represents the number of observation used in the truncated log-likelihood. Nevertheless the rate is polynomial in $O_n(\Upsilon_n^{-1/2})=O_n(n^{-\alpha/2})$. We do not know, if this bound is tight or not, but the simulations (see Section 6) seem to confirm this rate.

\subsection{Sketch of proof}

\begin{enumerate}
\item By definition of $\widehat{\eta}_0$, we have $\ell_{n,\varepsilon}(\widehat{\eta}_0)\leq \ell_{n,\varepsilon}(\eta_0)$.
By concentrating each quantity around its compensator in the martingale sense, we get a bound on a random version of a truncated Kullback-Leibler divergence, which is in turn lower bounded by the prediction error. 

\item However, the concentration inequality is not straightforward because $\widehat{\eta}_0$ is random and depends on the same sample. Hence we need to control the deviation of a supremum of martingales. 

\item To do so, we turn to the work of \cite{https://doi.org/10.48550/arxiv.0909.1863}, which provides a bound on supremum of the type 
$Z=\sup_{\delta \in \Theta} Y_\delta$
as long as $Y_\delta-Y_{\delta'}$ has a specific Laplace transform that shows  a certain regularity  w.r.t. a distance on $\Theta$.

\item In particular, to prove this form of regularity of the Laplace transform,  the adequate distance should control a distance on the $p^{\eta_n}_{k,t}$'s.
\end{enumerate}

Let us describe now briefly how we do these steps, starting from step 4.



\subsubsection{Step 4:  Lipschitz character  of \texorpdfstring{$p_t^\delta$}{the probabilities} } 

\begin{lemma}
\label{lem_ptlipschitz}
For any $t \geq 0$ and $\delta_0, \delta'_0 \in \Theta$, let
\begin{equation*}
\|p_t^{\delta_n} - p_t^{\delta'_n} \|_{\infty}
    = \max_{1 \leq k \leq K} |p_{k,t}^{\delta_n}-p_{k,t}^{\delta'_n}|.
\end{equation*}
There exists a numerical constant $c$ (for instance $c=11$) such that for all $\delta_0, \delta_0' \in \Theta$, $n \geq (R / \varepsilon^2)^{1/\alpha}$ and $1 \leq t \leq \Upsilon_n = (\frac{1}{K} - \varepsilon) \frac{n^\alpha}{R}$,
\begin{equation}
\label{majorationlipschitzpt}
\|p_{t}^{\delta_n'}-p_{t}^{\delta_n}\|_{\infty}
    \leq c \frac{|\delta_0-\delta_0'|}{R}.
\end{equation}
\end{lemma}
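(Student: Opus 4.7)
The plan is to proceed by induction on $t$. The base case $t=1$ is immediate: $p_{k,1}^{\delta_n}=1/K$ for every $k$ independently of $\delta_0$, so $\Delta P_1 := \|p_1^{\delta_n}-p_1^{\delta_n'}\|_\infty = 0$.

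For the inductive step, I write the Exp3 update as $p_{k,s+1}^{\delta_n} = p_{k,s}^{\delta_n}\,W_{k,s}(\delta_n)/Z_s(\delta_n)$ with $W_{k,s}(\delta_n) = \exp(-\delta_n \tilde\pi_{k,s})$. Since $\tilde\pi_{k,s}$ is nonzero only for $k=i:=I_s$, this update is a smooth map in the scalars $\delta_n$ and $p_{i,s}^{\delta_n}$ (plus $p_{k,s}^{\delta_n}$ when $k\neq i$). The idea is to apply the mean value theorem to this map and carefully Taylor-expand the partial derivatives in the small quantity $u_s := \delta_n \pi_{I_s}/p_{I_s,s}^{\delta_n}$. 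Proposition~\ref{choiceofTau} gives $p_{I_s,s}^{\delta_n}\geq\varepsilon$, while the assumption $n^\alpha \geq R/\varepsilon^2$ forces $\delta_n \pi_1 \leq \varepsilon^2$ and hence $u_s\leq\varepsilon$, so that only the leading orders of the expansion need to be tracked. After careful bookkeeping, this should yield a recursion of the form
\begin{equation*}
\Delta P_{s+1} \leq \bigl(1 + C_1 \delta_n \pi_{I_s} + C_2 u_s^2\bigr)\Delta P_s + C_3\,\pi_{I_s}\,|\delta_n - \delta_n'|
\end{equation*}
for absolute constants $C_1,C_2,C_3$.

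The induction is then closed by iterating this recursion up to $t$. Two telescoping estimates control the multiplicative factor. First, by the very definition of $\Upsilon_n$, $\sum_{s\leq\Upsilon_n}\delta_n \pi_{I_s}\leq \delta_n \Upsilon_n \pi_1 \leq 1/K$. Second, $u_s \leq R/(n^\alpha\varepsilon)\leq\varepsilon$ together with $\Upsilon_n \leq n^\alpha/(KR)$ and $n^\alpha \geq R/\varepsilon^2$ give $\sum_{s\leq\Upsilon_n} u_s^2 \leq \Upsilon_n R^2/(n^{2\alpha}\varepsilon^2) \leq 1/K$, so the product $\prod_{s\leq t}(1+C_1\delta_n\pi_{I_s}+C_2 u_s^2)$ is bounded by a numerical constant. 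The additive term telescopes to $C_3|\delta_n-\delta_n'|\sum_s \pi_{I_s} \leq C_3 \Upsilon_n \pi_1 |\delta_0-\delta_0'|/(n^\alpha \pi_1) \leq C_3 |\delta_0-\delta_0'|/(KR)$. Combining these estimates gives $\Delta P_{t+1}\leq c\,|\delta_0-\delta_0'|/R$, and tracking the absolute constants through the calculation should yield $c=11$.

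The main obstacle is setting up this recursion with the \emph{right} multiplicative coefficient. A naive bound like $|1/p_{i,s}^{\delta_n}-1/p_{i,s}^{\delta_n'}|\leq \Delta P_s/\varepsilon^2$ would introduce a $1/\varepsilon^2$ factor that, when iterated $\Upsilon_n = O(n^\alpha)$ times, blows up exponentially in $n$ and destroys the bound. Avoiding this requires keeping the update in its smooth form and using Taylor expansions of $e^{-u_s}$ and $1/Z_s$ with explicit cancellations: in the expansion of the partial derivative of $p_{i,s+1}^{\delta_n}$ with respect to $p_{i,s}^{\delta_n}$ (and the analogous expansion for $k\neq i$), the factors $1/p_{i,s}^{\delta_n}$ coming from $u_s$ cancel against compensating factors of $p_{i,s}^{\delta_n}$ in the numerator, leaving only the benign quantity $\delta_n \pi_i \leq \varepsilon^2$ in the leading correction. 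This is what makes the multiplicative coefficient accumulate to $O(1)$ rather than to $e^{O(1/\varepsilon)}$.
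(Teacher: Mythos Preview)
Your proposal is correct and follows essentially the same route as the paper: set up a one-step recursion for $\|p_{t+1}^{\delta_n}-p_{t+1}^{\delta_n'}\|_\infty$ via the mean value theorem applied to the smooth update map, bound the partial derivatives so that the multiplicative coefficient is $1+O(\delta_n\pi_1)+O((\delta_n\pi_1/\varepsilon)^2)$ (this is exactly the cancellation you identify as the main obstacle), and iterate up to $t\leq\Upsilon_n$. The paper differs only cosmetically: it replaces your $I_s$-dependent coefficients by the uniform worst case $R_n\pi_1$, sums the resulting constant-ratio geometric series in closed form, and then Taylor-expands that closed form to extract the constant $c=11$, whereas you bound the product $\prod_s(1+C_1\delta_n\pi_{I_s}+C_2u_s^2)$ directly by telescoping the exponent.
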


\begin{proof}
See Section~\ref{proofoflem_ptlipschitz} of the Appendix.
\end{proof}

\subsubsection{Step 3: Regularity of the Laplace transform}

Let $\delta_0 \in \Theta$ and let
\begin{eqnarray*}
X_{\delta_n}& = & \ell_{n,\varepsilon}(\delta_0) \displaystyle - \sum_{t=1}^{\Upsilon_n} \sum_{k=1}^K \log(p_{k,t}^{\delta_n}) p_{k,t}^{\eta_n}\\
&=& \sum_{t=1}^{\Upsilon_n} \sum_{k=1}^K \log(p_{k,t}^{\delta_n}) \left[\one_{I_t = k}-p_{t,k}^{\eta_n}\right].
\end{eqnarray*}
Note that, since the true learning rate parameter is $\eta_n$, the quantity $X_{\delta_n}$ can be seen as a martingale stopped at time $\Upsilon_n$, whatever the value of $\delta_n$.

The following proposition is a requirement to apply the work by \cite{https://doi.org/10.48550/arxiv.0909.1863}.

\begin{proposition}
\label{hypothesebaraud}
Let $c$ be the constant defined in Lemma~\ref{lem_ptlipschitz}. For each $\delta_0,\delta_0' \in \Theta^2$, let
\begin{equation*}
d(\delta_0',\delta_0) = \frac{c}{R \varepsilon} |\delta_0-\delta_0'|.
\end{equation*}
Then, for all $(\delta_0,\delta_0') \in \Theta^2$, $n \geq (R / \varepsilon^2)^{1/\alpha}$ and $|\lambda| < \frac{1}{d(\delta_0,\delta_0')}$,
\begin{equation*}
\EE_{\eta_n}[e^{\lambda (X_{\delta_n}-
     X_{\delta_n'})}]
    \leq \exp \left( \frac{(\lambda d(\delta_0',\delta_0))^2 \Upsilon_n}{2(1- \lambda d(\delta_0',\delta_0))}\right).
\end{equation*}
\end{proposition}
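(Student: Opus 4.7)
The plan is to identify the martingale structure of $X_{\delta_n}-X_{\delta_n'}$ under $\PP^{\eta_n}$ and to iterate a conditional Bennett-type bound along the filtration $(\Fcal_t)_{t\leq \Upsilon_n}$. Writing $U_t := \log p_{I_t,t}^{\delta_n} - \log p_{I_t,t}^{\delta_n'}$, I would first rewrite
\[
X_{\delta_n}-X_{\delta_n'} \;=\; \sum_{t=1}^{\Upsilon_n}\Delta_t,\qquad \Delta_t := U_t - \EE_{\eta_n}[U_t\mid\Fcal_{t-1}].
\]
Since each of $p_{k,t}^{\delta_n}$, $p_{k,t}^{\delta_n'}$ and $p_{k,t}^{\eta_n}$ is $\Fcal_{t-1}$-measurable, $(\Delta_t)$ is an $(\Fcal_t)$-adapted centered sequence under $\PP^{\eta_n}$, and the stopping at the deterministic time $\Upsilon_n$ preserves this martingale-difference property.

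Next, I would bound the $U_t$ uniformly. On $\{t\leq\Upsilon_n\}$, Proposition~\ref{choiceofTau} gives $p_{k,t}^{\delta_n}, p_{k,t}^{\delta_n'}\geq\varepsilon$ for every $k$, so using that $\log$ is $(1/\varepsilon)$-Lipschitz on $[\varepsilon,1]$ combined with Lemma~\ref{lem_ptlipschitz} yields
\[
|U_t| \;\leq\; \tfrac{1}{\varepsilon}\max_{k}\bigl|p_{k,t}^{\delta_n}-p_{k,t}^{\delta_n'}\bigr| \;\leq\; \tfrac{c\,|\delta_0-\delta_0'|}{R\varepsilon} \;=\; d(\delta_0,\delta_0') \quad \text{a.s.,}
\]
and in particular $\EE_{\eta_n}[U_t^2\mid\Fcal_{t-1}]\leq d(\delta_0,\delta_0')^2$.

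Writing $d:=d(\delta_0,\delta_0')$, the third step is a Bennett-type control of the conditional Laplace transform of each increment. The bound $|\EE_{\eta_n}[U_t^k\mid\Fcal_{t-1}]|\leq d^{k-2}\,\EE_{\eta_n}[U_t^2\mid\Fcal_{t-1}]$ for $k\geq 2$, together with a power-series expansion and $1+x\leq e^x$, gives
\[
\EE_{\eta_n}[e^{\lambda U_t}\mid\Fcal_{t-1}] \;\leq\; \exp\!\Bigl(\lambda\,\EE_{\eta_n}[U_t\mid\Fcal_{t-1}] + \tfrac{\EE_{\eta_n}[U_t^2\mid\Fcal_{t-1}]}{d^2}\bigl(e^{|\lambda|d}-1-|\lambda|d\bigr)\Bigr).
\]
Multiplying by $e^{-\lambda\,\EE_{\eta_n}[U_t\mid\Fcal_{t-1}]}$ kills the drift term; then using the elementary inequality $e^x-1-x\leq x^2/(2(1-x))$ valid for $0\leq x<1$ (which follows from termwise comparison of the two Taylor series) produces
\[
\EE_{\eta_n}[e^{\lambda\Delta_t}\mid\Fcal_{t-1}] \;\leq\; \exp\!\left(\tfrac{\lambda^2 d^2}{2(1-|\lambda|d)}\right) \qquad \text{for } |\lambda|<1/d.
\]
Iterating this via the tower property from $t=\Upsilon_n$ down to $t=1$ multiplies $\Upsilon_n$ identical factors and yields the announced bound. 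The case $\lambda<0$ is covered by applying the same argument after swapping $\delta_0$ and $\delta_0'$, which leaves $d(\delta_0,\delta_0')$ invariant.

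The delicate point, and the reason for routing the third step through $U_t$ rather than through $\Delta_t$, is that the centered increment has a range of length up to $2d$, so a Hoeffding or Bernstein inequality applied directly to $\Delta_t$ would leave a spurious factor $2$ in front of $\lambda d$ in the denominator. Controlling $\EE_{\eta_n}[e^{\lambda U_t}\mid\Fcal_{t-1}]$ first (using only the one-sided moment bound $|U_t|\leq d$) and letting the $e^{-\lambda\EE U_t}$ factor cancel the drift is what produces the clean constant needed to match the sub-gamma regularity condition of \cite{https://doi.org/10.48550/arxiv.0909.1863}, so that the suprema-of-martingales bound can be invoked in the next step of the overall argument.
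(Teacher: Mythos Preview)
Your proof is correct and reaches the stated bound, but the route differs from the paper's. The paper invokes the exponential supermartingale lemma of \cite{Houdre}: it sets $Z_t=\sum_{s<t}\Delta_s$, uses that $\mathcal{E}_t=\exp(\lambda Z_t-\sum_{k\geq 2}\frac{\lambda^k}{k!}A_t^k)$ is a supermartingale, bounds the centered increments by $|Z_s-Z_{s-1}|\leq 2d$ (the factor $2$ coming from $\sum_k|\one_{I_u=k}-p_{k,u}^{\eta_n}|\leq 2$), controls the conditional variance by $d^2$ per step, and then absorbs the extra $2$ via $(k+2)!\geq 2^{k+1}$ in the series comparison. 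You instead bypass the external lemma entirely: you bound the uncentered $U_t$ directly by $d$, expand $\EE_{\eta_n}[e^{\lambda U_t}\mid\Fcal_{t-1}]$ as a power series, remove the drift deterministically, and iterate by the tower property. Your observation that working with $U_t$ rather than $\Delta_t$ avoids the spurious factor $2$ is exactly the point; the paper achieves the same cancellation through the factorial inequality instead. Your argument is more self-contained and elementary; the paper's is shorter once the cited lemma is granted. Both arguments really establish the bound with $|\lambda|d$ in the denominator (which suffices for the application to \cite{https://doi.org/10.48550/arxiv.0909.1863}); your remark about swapping $\delta_0,\delta_0'$ for $\lambda<0$ yields this same symmetric form.
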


\begin{proof}
See Section~\ref{proofofbaraud} of the Appendix.
\end{proof}

\subsubsection{Step 2: Resulting concentration inequality for the supremum}

Next, it is sufficient to apply the work by \cite{https://doi.org/10.48550/arxiv.0909.1863} to $Y_{\delta_0}=X_{\delta_n}-X_{\eta_n}$.

\begin{theorem}[Bernstein-type inequality]
\label{theorembaraud}
Let $c$ be the constant from Lemma~\ref{lem_ptlipschitz}. Let $\displaystyle
Z_n = \sup_{\delta_0 \in \Theta}(X_{\delta_n}-X_{\eta_n}).$
Then, for all $n \geq (R / \varepsilon^2)^{1/\alpha}$ and $x \geq 0$,
\begin{equation}
\label{theorem2.1}
\PP_{\eta_n}\bigg[Z_n \geq \frac{18 \, c}{\varepsilon} (\sqrt{\Upsilon_n} \sqrt{x+1} + x+1) \bigg] \leq e^{-x},
\end{equation}
\end{theorem}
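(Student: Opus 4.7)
The plan is to deduce this Bernstein-type inequality for the supremum of the martingale-indexed process $\{X_{\delta_n}-X_{\eta_n}\}_{\delta_0 \in \Theta}$ directly from the main theorem of \cite{https://doi.org/10.48550/arxiv.0909.1863}, whose hypothesis is exactly the Bernstein-type Laplace transform control supplied by Proposition~\ref{hypothesebaraud} with variance parameter $v = \Upsilon_n$ and pseudo-metric $d(\delta_0,\delta_0') = \frac{c}{R\varepsilon}|\delta_0-\delta_0'|$. Before invoking that result I would observe that $\delta_0 \mapsto X_{\delta_n}$ is almost surely a continuous function of $\delta_0$ on $\Theta$ (as a finite sum of logarithms of parametric quantities that depend smoothly on $\delta_0$), so that the supremum over $\Theta$ equals a.s. the supremum over a dense countable subset, to which Baraud's result legitimately applies.

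Baraud's theorem then delivers, for some universal constant $\kappa$, a bound of the form
\begin{equation*}
\Pbb_{\eta_n}\!\left[Z_n \geq \kappa \sqrt{\Upsilon_n}\sqrt{E+x} + \kappa\, D_\Theta\,(E+x)\right] \leq e^{-x},
\end{equation*}
where $D_\Theta$ is the $d$-diameter of $\Theta$ and $E$ is the chaining-entropy functional $\int_0^{D_\Theta}\sqrt{\log N(u,\Theta,d)}\,du$. Because $\Theta = [r,R]$ is a bounded interval of $\R$ and $d$ is merely a rescaled Euclidean distance, both quantities are completely explicit: the diameter is $D_\Theta = \frac{c(R-r)}{R\varepsilon} \leq \frac{c}{\varepsilon}$, and the covering number satisfies $N(u,\Theta,d) \leq 1 + D_\Theta/(2u)$, from which $E$ is bounded by a numerical constant times $D_\Theta$.

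I would then plug these explicit estimates into the abstract bound. In the relevant regime $c/\varepsilon \geq 1$, one can absorb the entropy contribution $E$ into the factor $x+1$, then merge the two prefactors into the common value $\frac{18c}{\varepsilon}$, which yields the announced inequality. All the genuinely probabilistic work is already contained in Proposition~\ref{hypothesebaraud}; here the remaining ingredient is purely geometric (entropy of a 1D interval) plus an application of an off-the-shelf supremum-type Bernstein inequality.

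The one real piece of work is bookkeeping: faithfully tracking Baraud's universal constants through the chaining argument and verifying that our specialisation produces precisely the numerical prefactor $18$ claimed in the statement. This is a matter of careful arithmetic rather than a new idea; once the constants match, the proof is complete.
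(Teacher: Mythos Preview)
Your proposal is correct and follows the same approach as the paper: both apply Baraud's Bernstein-type supremum inequality using the Laplace-transform control from Proposition~\ref{hypothesebaraud} together with the fact that $(\Theta,d)$ is a one-dimensional interval of $d$-diameter at most $c/\varepsilon$. The only difference is packaging: the paper invokes Baraud's Assumption~2.2 directly (the case where $d$ is norm-induced, which already absorbs the entropy computation and yields the bound with $v=\tfrac{c}{\varepsilon}\sqrt{\Upsilon_n}$, $b=\tfrac{c}{\varepsilon}$, $\kappa=18$), whereas you outline the general chaining version and carry out the covering-number integral for an interval by hand before specialising.
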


\begin{proof}
See Section \ref{proofofbernsteintype} of the Appendix.
\end{proof}

\subsubsection{Some comments about Step 1 and the general methodology}

Step 1 is very classic. Moreover, this kind of methodology to derive non asymptotic results for Maximum Likelihood Estimators has been already used in much more general set-up, in particular by \cite{Spokoiny}. However note that despite the very general nature of Spokoiny's work, it is not straighforward to apply it here, because the derivatives of the log-likelihood with respect to $\delta$ are not straightforward, thanks  to the recursive nature of $p^{\delta_n}_{k,t}$. Therefore we have been forced to replace this kind of direct approach on the derivative, by a less direct one. In particular, our general result stops at the prediction errors and we are not able to give bounds directly on the estimation error, $|\widehat{\eta}_0-\eta_0|$,  because a general control on the derivatives of the log-likelihood is missing. We derive bounds on the estimation error only in a specific case that we detail in the next section, because in this case we can find recursive bounds on the derivatives.





\section{Estimation error in a special case}\label{specialcasesection}

We prove an upper bound in large probability on the estimation error $|\eta_0 - \widehat{\eta}_0|$, only in the case when we have only two arms $K=2$ and the loss of arm 2 is null, that is  $\pi_2 = 0$. 

\subsection{A lower bound on the prediction error}
To pass from the control of the prediction error to the control of the estimation error, we need first to find an adequate lower bound.
\begin{proposition}\label{Lowerboundleastsquare}
Let $N_0 = 0$ and $N_t = \displaystyle \sum_{s=1}^{t} \one_{I_s=1}$ be the number of times arm 1 is pulled until $t\geq 1$. 

There exists a constant $m_{\pi_1,\varepsilon} > 0$, depending only on $\pi_1$ and $\varepsilon$ such that, for all $\delta_0, \eta_0 \in \Theta$
\begin{equation}
    \sum_{t=1}^{\Upsilon_n}  |p_{1,t}^{\delta_n}- p_{1,t}^{\eta_n}|^2 \geq  m_{\pi_1,\varepsilon} |\delta_n - \eta_n|^2 \sum_{t=1}^{\Upsilon_n} N_{t-1}^2.
\end{equation}
\end{proposition}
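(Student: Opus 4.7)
The plan is to reduce the claim to a lower bound on the sensitivity of the deterministic sequence $(q_i^\eta)_{i\geq 0}$ with respect to $\eta$, and then to prove that this sensitivity grows at least linearly in $i$.

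First I would use the identity $p_{1,t}^\eta = q_{N_{t-1}}^\eta$ recalled in Section \ref{Tetration} (which holds because with $K=2$ and $\pi_2=0$ the probabilities change only when arm $1$ is pulled) to rewrite the left-hand side as
\begin{equation*}
\sum_{t=1}^{\Upsilon_n} \left|p_{1,t}^{\delta_n}-p_{1,t}^{\eta_n}\right|^2 = \sum_{t=1}^{\Upsilon_n} \left|q_{N_{t-1}}^{\delta_n}-q_{N_{t-1}}^{\eta_n}\right|^2.
\end{equation*}
Since $N_{t-1} \leq t-1 \leq \Upsilon_n - 1$, the target inequality reduces to proving $|q_i^{\delta_n}-q_i^{\eta_n}| \geq \varepsilon\pi_1\, i\, |\delta_n-\eta_n|$ for every $0\leq i\leq \Upsilon_n -1$.

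Second, I would linearize the recursion via the logit change of variables. Setting $a_i^\eta := \log(q_i^\eta/(1-q_i^\eta))$, a direct computation on the definition of $q_{i+1}^\eta$ yields the affine update
\begin{equation*}
a_{i+1}^\eta = a_i^\eta - \frac{\pi_1 \eta}{q_i^\eta}, \qquad a_0^\eta=0,
\end{equation*}
which is much easier to differentiate than the original recursion. Writing $b_i^\eta := \partial a_i^\eta/\partial \eta$ and using $\partial q_i^\eta/\partial \eta = q_i^\eta(1-q_i^\eta)\, b_i^\eta$, differentiating gives
\begin{equation*}
b_{i+1}^\eta = b_i^\eta \left(1 + \frac{\pi_1 \eta(1-q_i^\eta)}{q_i^\eta}\right) - \frac{\pi_1}{q_i^\eta}.
\end{equation*}
An immediate induction (starting from $b_0^\eta = 0$) shows $b_i^\eta \leq 0$, and since the multiplicative factor is $\geq 1$ we obtain
\begin{equation*}
|b_{i+1}^\eta| \geq |b_i^\eta| + \frac{\pi_1}{q_i^\eta} \geq |b_i^\eta| + 2\pi_1,
\end{equation*}
where the last step uses $q_i^\eta \leq 1/2$ (since $q_0^\eta = 1/2$ and $(q_i^\eta)_i$ decreases, by Lemma \ref{summarypropertiesqi}). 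Hence $|b_i^\eta| \geq 2\pi_1\, i$ for every $i\geq 0$.

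Third, for $0\leq i \leq \Upsilon_n - 1$ and any $\eta_0 \in \Theta$, Proposition \ref{choiceofTau} (which is in fact a deterministic pointwise bound on $q_i^{\eta}$) gives $q_i^\eta \geq \varepsilon$, so combined with $1-q_i^\eta \geq 1/2$ we have
\begin{equation*}
\left|\frac{\partial q_i^\eta}{\partial \eta}\right| = q_i^\eta(1-q_i^\eta)\, |b_i^\eta| \geq \varepsilon \pi_1\, i.
\end{equation*}
Applying the mean value theorem to $\eta \mapsto q_i^\eta$ on the segment joining $\eta_n$ and $\delta_n$ (both admissible, so the derivative bound applies at every interior point) yields $|q_i^{\delta_n}-q_i^{\eta_n}|\geq \varepsilon\pi_1\, i\, |\delta_n-\eta_n|$ for all $0 \leq i\leq \Upsilon_n-1$; squaring and summing over $t\leq \Upsilon_n$ then gives the result with $m_{\pi_1,\varepsilon}=\varepsilon^2\pi_1^2$. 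The main obstacle is identifying the logit change of variables that converts the sensitivity analysis into the simple linear recursion on $b_i^\eta$: without it, the nested exponentials in the definition of $q_{i+1}^\eta$ make a direct induction on $\partial q_i^\eta/\partial \eta$ unwieldy, and it is not obvious a priori that the derivative grows (rather than shrinks) as $i$ increases.
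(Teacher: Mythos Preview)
Your proof is correct, and it takes a genuinely different route from the paper's.

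The paper works directly with the trajectory-dependent differences $p_{1,t}^{\delta_n}-p_{1,t}^{\eta_n}$ and establishes a recursive lower bound using the derivative estimates $|\partial_\eta g_1|\geq D_{\pi_1}=\pi_1 e^{-1}/2$ and $\partial_q g_1\geq e^{-R_n\pi_1/\varepsilon}$. Iterating gives $|p_{1,t}^{\delta_n}-p_{1,t}^{\eta_n}|\geq D_{\pi_1}\,|\delta_n-\eta_n|\,e^{-(t-2)R_n\pi_1/\varepsilon}N_{t-1}$, and after squaring and summing one collects the constant $m_{\pi_1,\varepsilon}=D_{\pi_1}^2\,e^{-(1-2\varepsilon)/\varepsilon}$, which is exponentially small in $1/\varepsilon$.

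By contrast, you first pass to the deterministic sequence via $p_{1,t}=q_{N_{t-1}}$, then use the logit change of variables $a_i^\eta=\log\bigl(q_i^\eta/(1-q_i^\eta)\bigr)$ to turn the recursion into an affine one. Differentiating in $\eta$ gives a clean linear recursion for $b_i^\eta=\partial_\eta a_i^\eta$ whose multiplicative factor exceeds $1$, so $|b_i^\eta|$ grows at least linearly in $i$. This avoids the exponential contraction factor entirely and yields the much sharper constant $m_{\pi_1,\varepsilon}=\varepsilon^2\pi_1^2$. The only point worth tightening is the appeal to Proposition~\ref{choiceofTau} for $q_i^{\eta_n}\geq\varepsilon$: that proposition is stated for $p_{k,t}^{\delta_n}$ along a trajectory, but its proof indeed gives the deterministic bound $q_i^{\eta_n}\geq\tfrac12-i\eta_n\pi_1\geq\varepsilon$ for $i\leq\Upsilon_n-1$ (equivalently, apply the proposition along the all-arm-$1$ trajectory). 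With that detail spelled out, your argument is complete and yields a better constant than the paper's.
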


\begin{proof}
See Section \ref{proofoflowerboundleastsquare} of the Appendix.
\end{proof}

\subsection{Upper bound for the estimation error}

Thanks to Theorem \ref{leastsquareerror} and Proposition \ref{Lowerboundleastsquare}, the following  bound on the estimation error holds with large probability.

\begin{proposition}\label{probabilityestimationerror} For $x \geq 0$ and $n \in \N$, define by
\begin{equation*}
    G_n(x) =\frac{2}{5} \displaystyle \sqrt{2 (\log 2 \Upsilon_n+x)}  \Upsilon_n^{\frac{1}{2}} +  \log 2 \Upsilon_n + x
\end{equation*}
There exists a constant $M_{\pi_1,\varepsilon}>0$, depending only on $\pi_1$ and $\varepsilon$ and a positive sequence $B_n$ verifying
\begin{equation*}
    B_n := \frac{1}{48} \Upsilon_n + O_n(1),
\end{equation*}
such that for all $n \geq (R/\varepsilon^2)^{\frac{1}{\alpha}}$  and $x \geq 0$ such that $B_n > G_n(x)$, with $\PP_{\eta_n}$-probability at least $1-2e^{-x}$,
\begin{equation*}
     |\widehat{\eta}_0 - \eta_0| \leq M_{\pi_1,\varepsilon} \left(\frac{\sqrt{(x+1)\Upsilon_n} + x+1}{B_n-G_n(x)} \right)^{\frac{1}{2}}.
\end{equation*}

\end{proposition}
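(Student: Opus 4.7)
My plan is to convert the prediction bound of Theorem~\ref{leastsquareerror} into a bound on $|\widehat{\eta}_0-\eta_0|$ by inserting the curvature inequality of Proposition~\ref{Lowerboundleastsquare} and a separate high-probability lower bound on the random quantity $\sum_{t=1}^{\Upsilon_n} N_{t-1}^2$. I would then combine the two high-probability events by a union bound (which accounts for the factor $2$ in the probability $1-2e^{-x}$) and finish by an algebraic manipulation that uses the scaling $n^\alpha \sim \Upsilon_n R/(\tfrac{1}{K}-\varepsilon)$.

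\textbf{Part A (upper bound on the prediction error, lower bound by the squared estimation error).} Apply Theorem~\ref{leastsquareerror} with $\widehat{\eta}_0$ to get, with probability at least $1-e^{-x}$,
\begin{equation*}
\sum_{t=1}^{\Upsilon_n} \|p_t^{\widehat{\eta}_n}-p_t^{\eta_n}\|_2^2 \;\leq\; \frac{9c}{\varepsilon}\bigl(\sqrt{\Upsilon_n(x+1)}+x+1\bigr).
\end{equation*}
Since $K=2$ and $\pi_2=0$, $p_{2,t}=1-p_{1,t}$, so $\|p_t^{\widehat\eta_n}-p_t^{\eta_n}\|_2^2 = 2\,|p_{1,t}^{\widehat\eta_n}-p_{1,t}^{\eta_n}|^2 \geq |p_{1,t}^{\widehat\eta_n}-p_{1,t}^{\eta_n}|^2$, and Proposition~\ref{Lowerboundleastsquare} then gives
\begin{equation*}
m_{\pi_1,\varepsilon}\,|\widehat{\eta}_n-\eta_n|^2 \sum_{t=1}^{\Upsilon_n} N_{t-1}^2 \;\leq\; \frac{9c}{\varepsilon}\bigl(\sqrt{\Upsilon_n(x+1)}+x+1\bigr).
\end{equation*}

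\textbf{Part B (concentration lower bound on $\sum_{t=1}^{\Upsilon_n} N_{t-1}^2$).} Since $p_{1,t}^{\eta_n}=q_{N_{t-1}}^{\eta_n}$, the process $M_t:=N_t-\sum_{s=1}^{t}q_{N_{s-1}}^{\eta_n}$ is a martingale with increments bounded by $1$ and conditional variance bounded by $q_{N_{s-1}}^{\eta_n}(1-q_{N_{s-1}}^{\eta_n})\leq 1/4$. Using the recursion on $q_i^{\eta_n}$ together with the scaling $\eta_n=\eta_0/(n^\alpha \pi_1)$, one shows that $q_i^{\eta_n}$ decreases by at most $O(\eta_n\pi_1)$ at each step, so that for $n$ large enough one has $q_i^{\eta_n}\geq 1/4$ for every $i\leq \Upsilon_n$, hence $\sum_{s=1}^{t}q_{N_{s-1}}^{\eta_n}\geq t/4$ for all $t\leq \Upsilon_n$. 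Then I apply a Freedman / Bernstein-type deviation inequality together with a maximal inequality (or a union bound, producing the $\log 2\Upsilon_n$ term) to obtain simultaneously for all $t\leq \Upsilon_n$, with probability at least $1-e^{-x}$,
\begin{equation*}
N_{t-1} \;\geq\; \tfrac{t-1}{4} - \tfrac{1}{5}\sqrt{2(t-1)(\log 2\Upsilon_n+x)} - \tfrac{2}{15}(\log 2\Upsilon_n+x).
\end{equation*}
Squaring, keeping only non-positive cross-terms, summing over $t$ and recognising $\sum_{t=1}^{\Upsilon_n}(t-1)^2 \sim \Upsilon_n^3/3$, one gets
\begin{equation*}
\sum_{t=1}^{\Upsilon_n} N_{t-1}^2 \;\geq\; \Upsilon_n^2\,\bigl(B_n-G_n(x)\bigr),
\end{equation*}
where the constant $1/48$ in $B_n$ comes from $\tfrac{1}{16}\cdot \tfrac{1}{3}$ ($1/16$ from $q\geq 1/4$ after squaring, $1/3$ from the sum of squares) and where $G_n(x)$ collects the Bernstein cross-terms $\sum_t (t-1)\cdot\text{(fluctuation)}$.

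\textbf{Part C (conclusion).} Taking a union bound over the two events of Parts~A and B produces an event of probability at least $1-2e^{-x}$ on which
\begin{equation*}
m_{\pi_1,\varepsilon}\,|\widehat{\eta}_n-\eta_n|^2\,\Upsilon_n^2(B_n-G_n(x)) \;\leq\; \frac{9c}{\varepsilon}\bigl(\sqrt{\Upsilon_n(x+1)}+x+1\bigr),
\end{equation*}
valid whenever $B_n>G_n(x)$. Finally, $\widehat{\eta}_0-\eta_0=(\widehat{\eta}_n-\eta_n)\,n^\alpha\pi_1$ and by~\eqref{Tn}, $n^{2\alpha}\pi_1^2/\Upsilon_n^2$ is bounded by a constant depending only on $\pi_1,\varepsilon,R,K$. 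Taking the square root gives the announced bound with $M_{\pi_1,\varepsilon}$ absorbing all these constants together with $m_{\pi_1,\varepsilon}^{-1}$ and $9c/\varepsilon$.

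\textbf{Main obstacle.} The delicate step is Part~B: producing the lower bound on $\sum N_{t-1}^2$ with the explicit constant $1/48$ and the precise form of $G_n(x)$. One must first establish the uniform lower bound $q_i^{\eta_n}\geq 1/4$ for $i\leq \Upsilon_n$ using the recursion on $q_i$ and the scaling of $\eta_n$, and then carry out a uniform-in-$t$ Bernstein inequality for a martingale whose predictable quadratic variation is itself random; the squaring step also has to be handled carefully so that the leading deterministic term $\Upsilon_n^3/48$ survives and all fluctuation contributions end up in the single scalar $G_n(x)$.
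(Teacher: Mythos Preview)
Your overall architecture matches the paper's: combine Theorem~\ref{leastsquareerror} with Proposition~\ref{Lowerboundleastsquare} (your Part~A), establish a high-probability lower bound on $\sum_{t\leq\Upsilon_n}N_{t-1}^2$ via a Bernstein inequality for the martingale $M_t=N_t-\sum_{s\leq t}p_{1,s}^{\eta_n}$ together with a union bound over $t\leq\Upsilon_n$ (your Part~B), and finish by a union bound and the rescaling $|\widehat\eta_0-\eta_0|=n^\alpha\pi_1|\widehat\eta_n-\eta_n|$ (your Part~C).

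There is, however, a real gap in Part~B. Your claim that ``for $n$ large enough one has $q_i^{\eta_n}\geq 1/4$ for every $i\leq \Upsilon_n$'' is false: the decrement $q_i^{\eta_n}-q_{i+1}^{\eta_n}$ is at most $\eta_n\pi_1=\eta_0/n^\alpha$, but $\Upsilon_n$ is of order $n^\alpha/R$, so after $\Upsilon_n$ steps the accumulated drop is $\eta_0(\tfrac12-\varepsilon)/R$, a constant that can be arbitrarily close to $\tfrac12-\varepsilon$ and does \emph{not} shrink with $n$; one only obtains $q_i^{\eta_n}\geq\varepsilon$. The paper recovers the key inequality $\sum_{s=1}^{t-1}p_{1,s}^{\eta_n}\geq (t-1)/4$ differently, by summing the linear lower bound $p_{1,s}^{\eta_n}\geq \tfrac12-\eta_n\pi_1 s$ from the proof of Proposition~\ref{choiceofTau}: for $t\leq\Upsilon_n$,
\[
\sum_{s=1}^{t-1}p_{1,s}^{\eta_n}\;\geq\;\frac{t-1}{2}\Bigl(1-\tfrac{R}{n^\alpha}t\Bigr)\;\geq\;\frac{t-1}{2}\Bigl(\tfrac12+\varepsilon\Bigr)\;\geq\;\frac{t-1}{4}.
\]
This is an \emph{averaged} bound, not a pointwise one, and is what yields $B_n=\tfrac{1}{16}\cdot\tfrac{1}{\Upsilon_n^2}\sum_{t\leq\Upsilon_n}(t-1)^2=\Upsilon_n/48+O_n(1)$.

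A second, minor difference: the paper does not square the per-$t$ deviation as you propose. It bounds $(\sum_{s<t}p_{1,s}^{\eta_n})^2-N_{t-1}^2\leq 2(t-1)|M_{t-1}|$ directly (using $N_{t-1},\sum p\in[0,t-1]$), applies Bernstein to $|M_{t-1}|$, inverts, and then sums over $t$; this is what produces the exact constants $\tfrac{2}{5}\sqrt{2}$ and $1$ in $G_n(x)$. Your route (square first, then sum the cross-terms) also works but gives slightly different constants and needs a separate argument for the small $t$ where the right-hand side of your lower bound on $N_{t-1}$ is negative.
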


\begin{proof}
See Section \ref{proofofprobabilityestimationerror} of the Appendix.
\end{proof}

Proposition~\ref{probabilityestimationerror} shows that for large $n$, $
|\eta_0 - \widehat{\eta}_0| =O(\Upsilon_n^{-\frac{1}{4}})=O(n^{-\frac{\alpha}{4}})$,
which means that the truncated MLE converges at a polynomial rate, which is slower than the classic parametric rate, but clearly faster than the logarithmic rate obtained in Section 3 for a constant learning rate. As for the prediction error, we do not know if this rate is tight, but it seems to be confirmed by the simulations (see Section \ref{Numericalillustrations}).

\section{Numerical illustrations}\label{Numericalillustrations}


\subsection{Numerical set-up}
All the simulations have been conducted with \texttt{R}. The simulation of the $n$ iterative subject's choices have been simulated according to an Exp3 algorithm with a given learning parameter $\eta$, $K$ arms and losses given by $\pi=(\pi_1,...,\pi_K)$. Note that if the $p_{k,t}^\eta$ are too small, the Exp3 simulation stops because "NA" can be returned when evaluating \eqref{MAJ}. In all the simulations that are proposed here, thanks to the choices of $n$ and $\eta$ this has never happened.

\subsection{Constant learning rate}

We choose $\eta=$ 0.3. The log-likelihood is given by
\begin{equation}
\label{loglikelihood}
    \forall \delta \in (0.1,0.8) \text{, \, } \ell_{n}(\delta)
    := \sum_{t=1}^{n} \sum_{k=1}^K \log(p_{k,t}^{\delta}) \one_{I_t = k}.
\end{equation} 

\begin{figure}[h!]
\includegraphics[width=4.2cm,height=6cm]{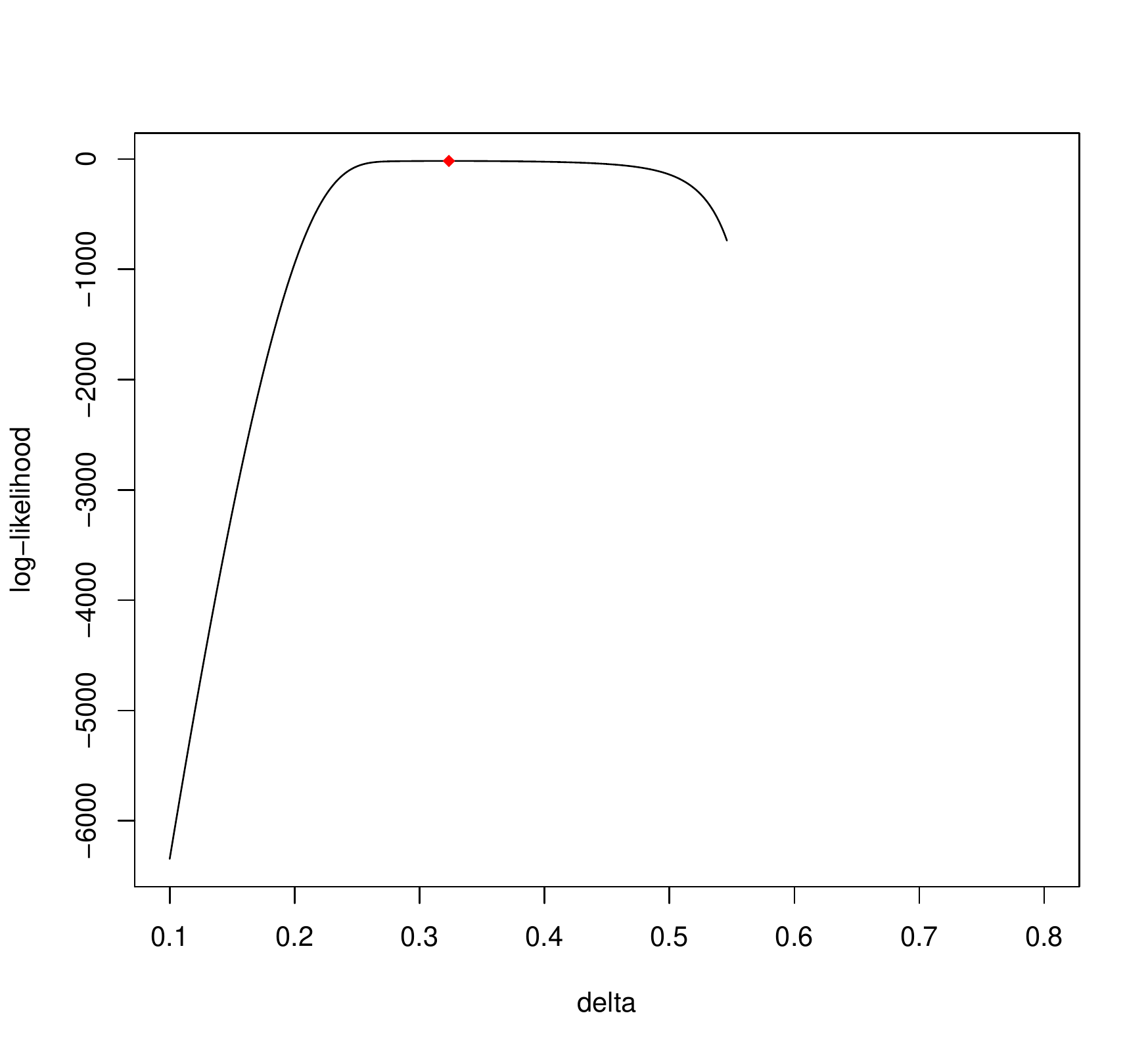}\includegraphics[width=4.2cm,height=6cm]{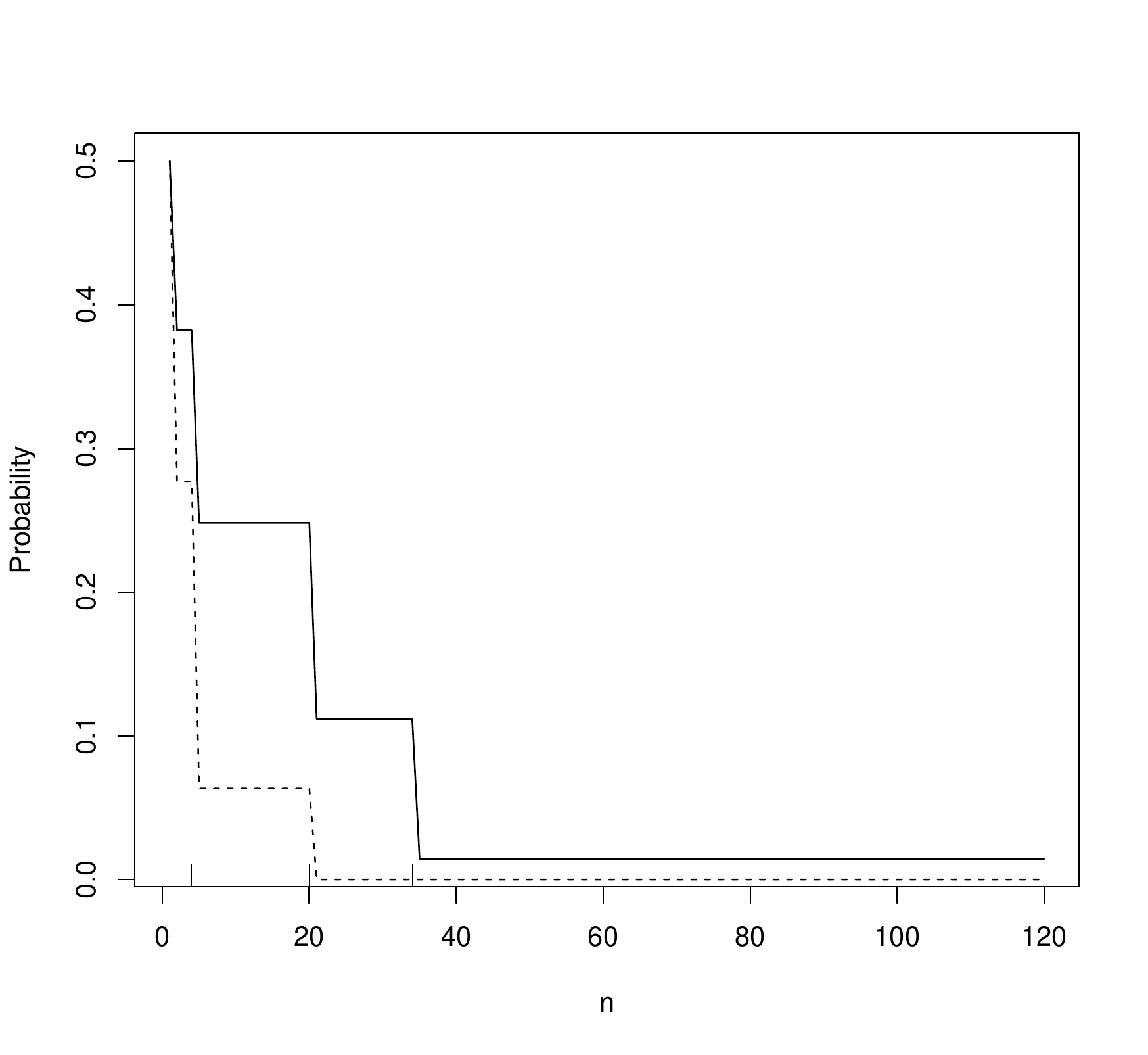}
\caption{Log-likelihood. On the left, the log-likelihood with the MLE in red. The simulation is performed via the Exp3 algorithm with $K=2$, $\pi = (0.8,0)$ and  $\eta=0.3$. The value of the log-likelihood at $\delta=0.6$ is \texttt{-Inf}. On the right, evolution of $p_{1,t}^\eta$ (plain black) and  of $p_{1,t}^\delta$ (dotted black), with  $\delta=0.6$. After $n=20$, $p_{1,t}^\delta$ is considered as null by the computer. }
\label{vrais}
\end{figure}

Figure \ref{vrais} shows the shape of the log-likelihood when $K=2$ and $\pi = (0.8,0)$ on one simulation. For some parameters $\delta$, the log-likelihood is computed as \texttt{-Inf} because some probabilities become null for the numerical precision (see the right part of Figure \ref{vrais}). Because of this repeated \texttt{-Inf} value, we have not been able to perform simulations for such large $\eta$ with more than 2 arms. 

In the situation with $K=2$, $\pi = (0.8,0)$
and $\eta=0.3$, we maximised the log-likelihood thanks to the function \texttt{DEoptim} in \texttt{R} inside the interval $(0.1,0.8)$, with the default parameters and a maxiter value equal to 50. With this choice of set-up, \texttt{DEoptim} returns a correct estimator (red point in Figure \ref{vrais}), but note that the log-likelihood is very flat around the MLE so that spurious estimation is likely.

\begin{figure}[h!]
\centerline{\includegraphics[scale = 0.4]{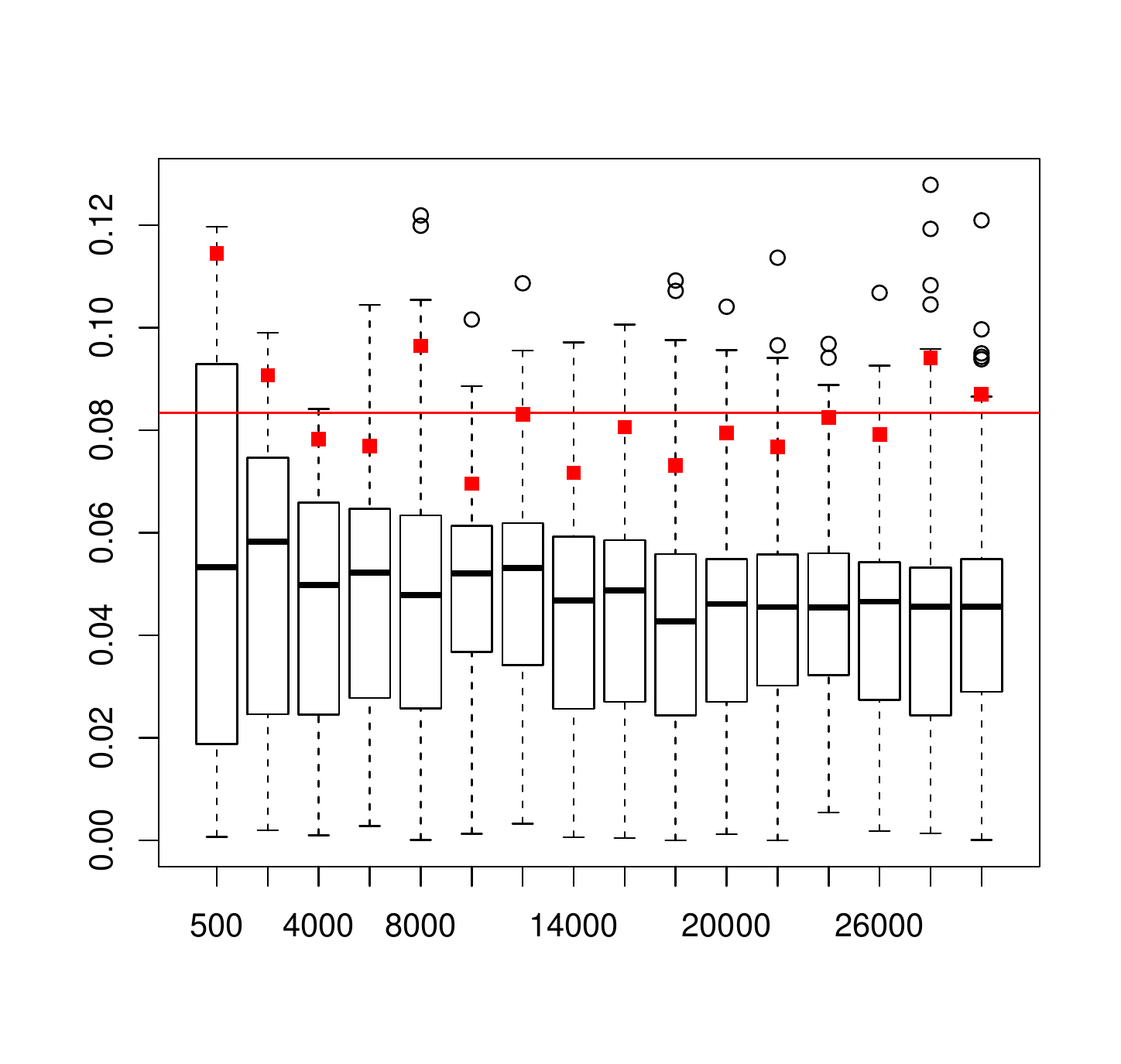}}
\caption{Estimation error when $\eta=0.3$. In the set-up of Figure \ref{vrais} boxplot over 100 simulations of $|\hat{\eta}-\eta|/\eta$. The red squares are the 95\% quantiles. The Spearman rank correlation test of decreasing character of these 95\% quantiles has a p-value of 0.39. The red horizontal line is the mean of these quantiles.}
\label{est_etacst}
\end{figure}

We performed 100 simulations of this situation for various sample size between $n=$500, and $n=30 000$ (see Figure \ref{est_etacst}). If the variance of the error decreases with $n$, the  general tendency of the estimation error does not. The Spearman rank correlation test does not detect any decreasing character of the 95\% quantile of the error as $n$ increases. This is in total adequation with Theorem \ref{boundingklconstant} which basically states there are some $\delta, \eta$ that cannot be distinguished whatever $n$. Hence the simulations corroborate that estimation cannot be done in a satisfactory manner when $\eta$ is constant and also suggest that even the logarithmic lower bound of Theorem \ref{boundingklconstant} is pessimistic.

\subsection{Decreasing learning rate and the stopping criterion}

Here we choose $\alpha=1/2$, $\eta= \displaystyle \eta_n=\frac{0.3}{\pi_1 n^\alpha}$. 

The truncated log-likelihood is defined by
\begin{equation}
\label{truncated loglikelihood}
   \forall \delta_0 \in (0.1, 0.8), \text{\, } \ell_{n}(\delta_n)
    := \sum_{t=1}^{\Upsilon_{max}} \sum_{k=1}^K \log(p_{k,t}^{\delta_n}) \one_{I_t = k}.
\end{equation}

Following the set-up of Section 4 and also to avoid values \texttt{-Inf} for the log-likelihood,  we stop the log-likelihood, for a given $\varepsilon$,
at 
\begin{equation}\label{Upsmax}
    \Upsilon_{\max} = \sup\{t \in \N : p_{k,t}^{\delta_n} > \varepsilon, \text{\, } \forall \delta_n \in \texttt{grid}_n, \forall k\},
\end{equation}
with $\texttt{grid}_n$ a regular grid of 50 points in $(\frac{0.1}{\pi_1\sqrt{n}}, \frac{0.8}{\pi_1\sqrt{n}})$ and $1 \leq k \leq K$.

Note that $\Upsilon_{max}$ is random and larger than the theoretical choice $\Upsilon_n$ of Section 4. 

For a fixed $n$ taking various values  between $n=500$ and $n=30000$, we perform 100 simulations with $K=4$ arms and $\pi=(0.8,0.6,0.4,0.2)$ and computed each time the corresponding $\Upsilon_{max}$. The average is shown on Figure \ref{evolutionofTmax}. In particular, we see that the bound    $\Upsilon_n$ in $\sqrt{n}$ in Proposition \ref{choiceofTau} is a tight bound that reflects well the behavior of the first time to cross $\varepsilon$. We also see that the value of $\varepsilon$ has almost no impact on the choice of the truncation inside the log-likelihood, since all curves are almost confounded.

\begin{figure}[h!]
\centerline{\includegraphics[scale = 0.43]{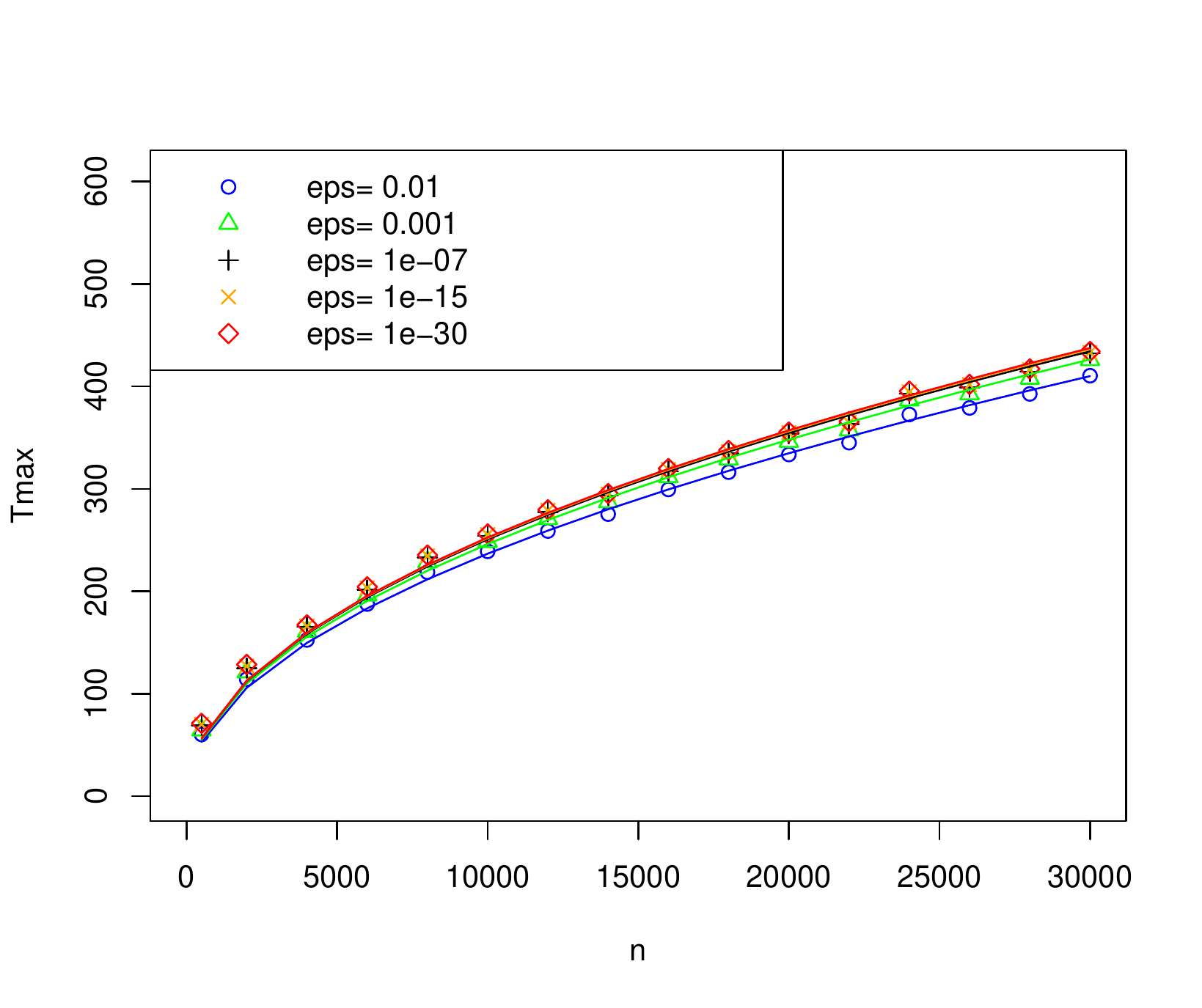}}
\caption{Evolution of $\Upsilon_{\max}$ as a function of $n$ for different $\varepsilon$. The points are the average of the $\Upsilon_{max}$ over 100 simulations with $K=4$ arms and $\pi=(0.8,0.6,0.4,0.2)$ and $\eta= \displaystyle \eta_n=0.3/(\pi_1 \sqrt{n})$. The curves for the different values of $\varepsilon$ are found by regression of the points on the square root curve.} 
\label{evolutionofTmax}
\end{figure}

\subsection{Performance with decreasing learning rate}

We take again $\eta_n = \displaystyle \frac{0.3}{\pi_1 \sqrt{n}}$. We perform 100 simulations. The sample size $n$ takes various values between 500 and 30000. We maximize the truncated log-likelihood given by \eqref{truncatedloglikelihood} truncated at $\Upsilon_{max}$ given by \eqref{Upsmax} with $\varepsilon=10^{-7}$, thanks to the \texttt{DEoptim} function of \texttt{R}. We consider two cases: $K=2$ arms with $\pi = (0.8,0)$ and $K=4$ arms with $\pi = (0.8,0.6,0.4,0.2)$. 

\begin{figure}[h!]
\includegraphics[width=4.2cm,height=7cm]{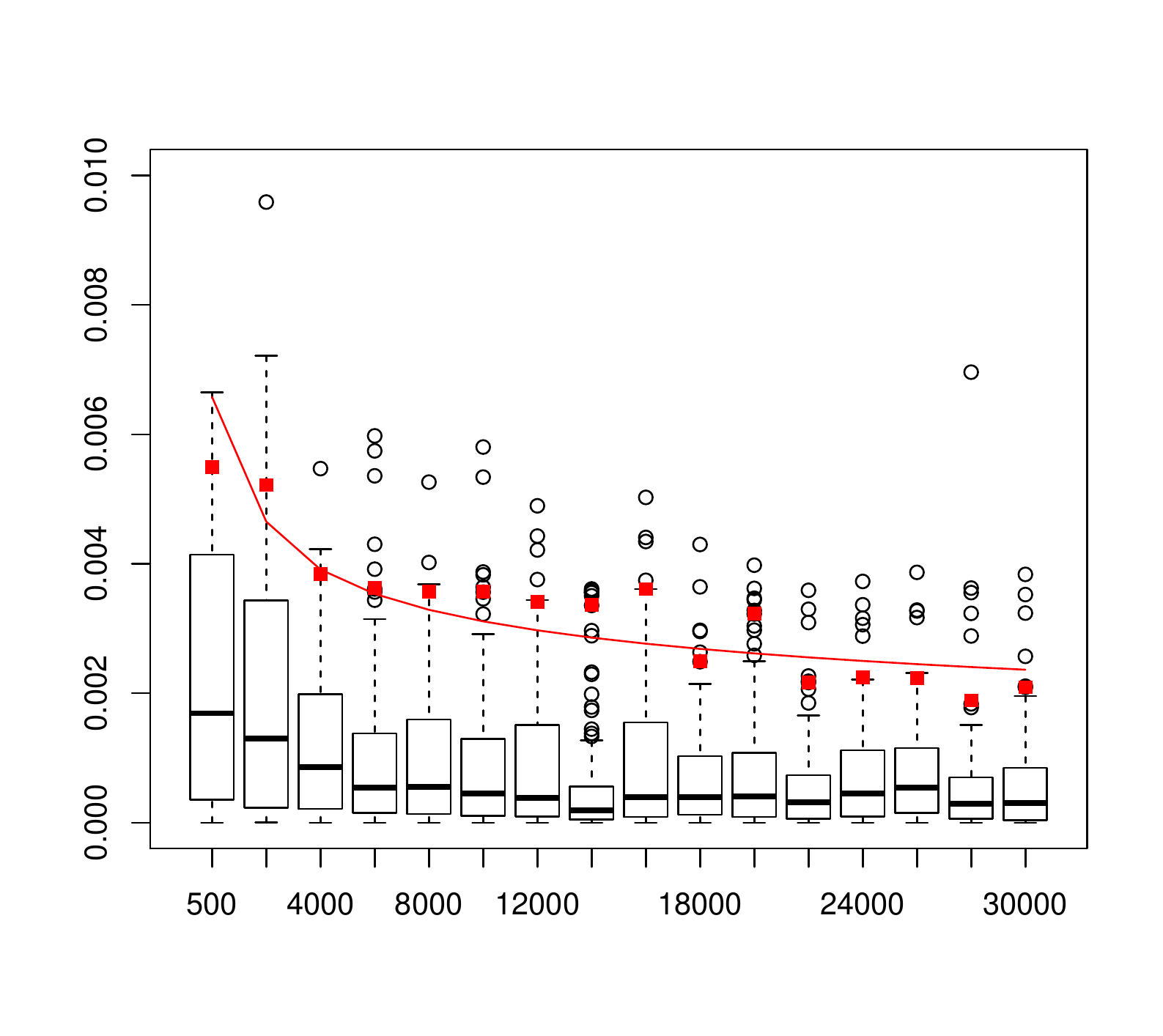}\includegraphics[width=4.2cm,height=7cm]{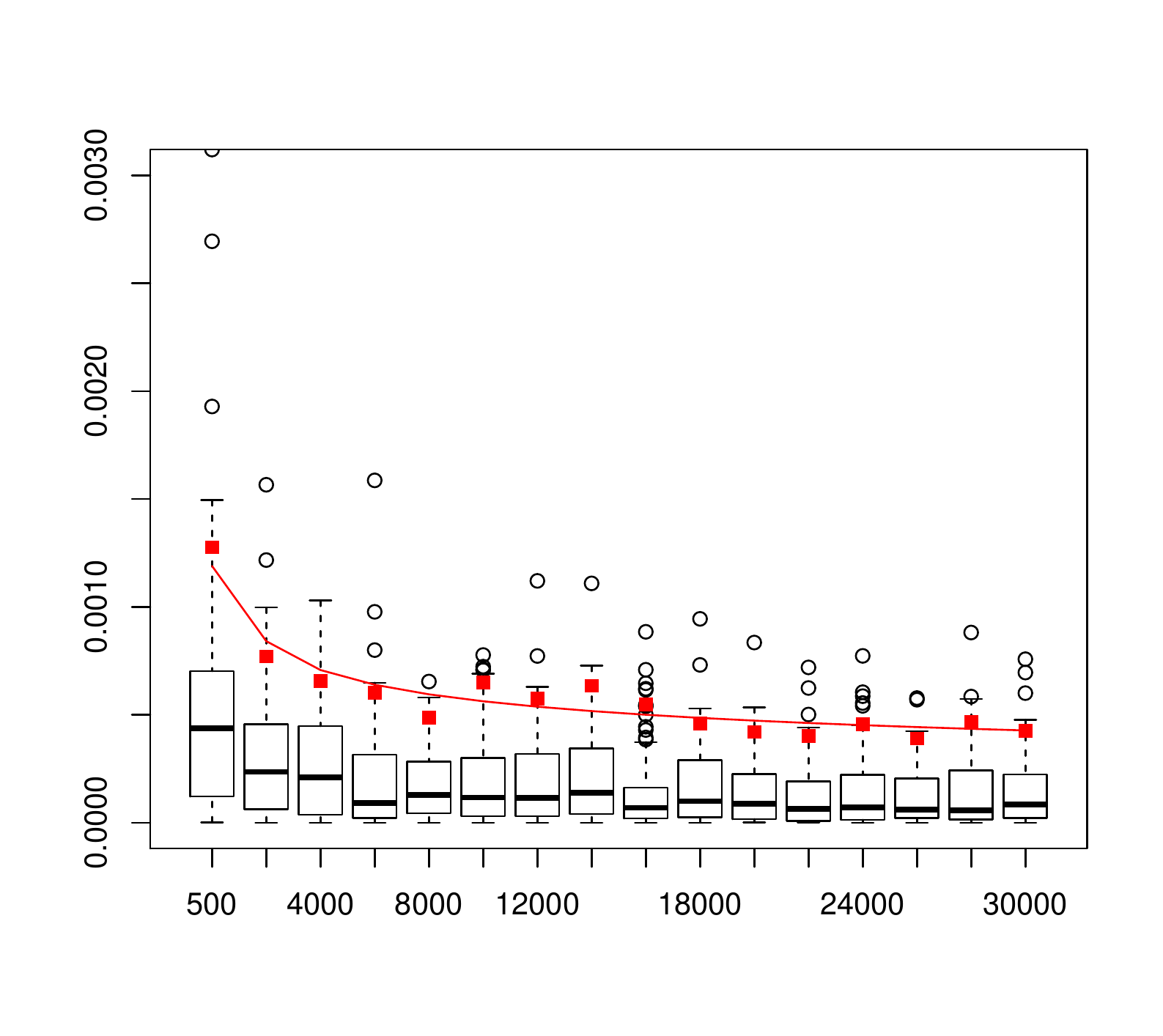}
\caption{Prediction errors as defined in Theorem \ref{leastsquareerror}. On the left, $K=2$ and $\pi=(0.8,0)$. On the right, $K=4$ and $\pi=(0.8, 0.6,0.4,0.2)$. 95\% quantiles are in red. Spearman rank correlation test detects in both cases that the quantiles decrease with $n$ (both p-values $< 2.10^{-16}$). The red line is obtained by a regression with respect to $n^{-1/4}$. }
\label{pred}
\end{figure}

Figure \ref{pred} shows that, as expected and for various set-up, the prediction error seems indeed to be decreasing in $n^{-1/4}$ (see Theorem \ref{leastsquareerror}).

\begin{figure}[h!]
\includegraphics[width=4.2cm,height=7cm]{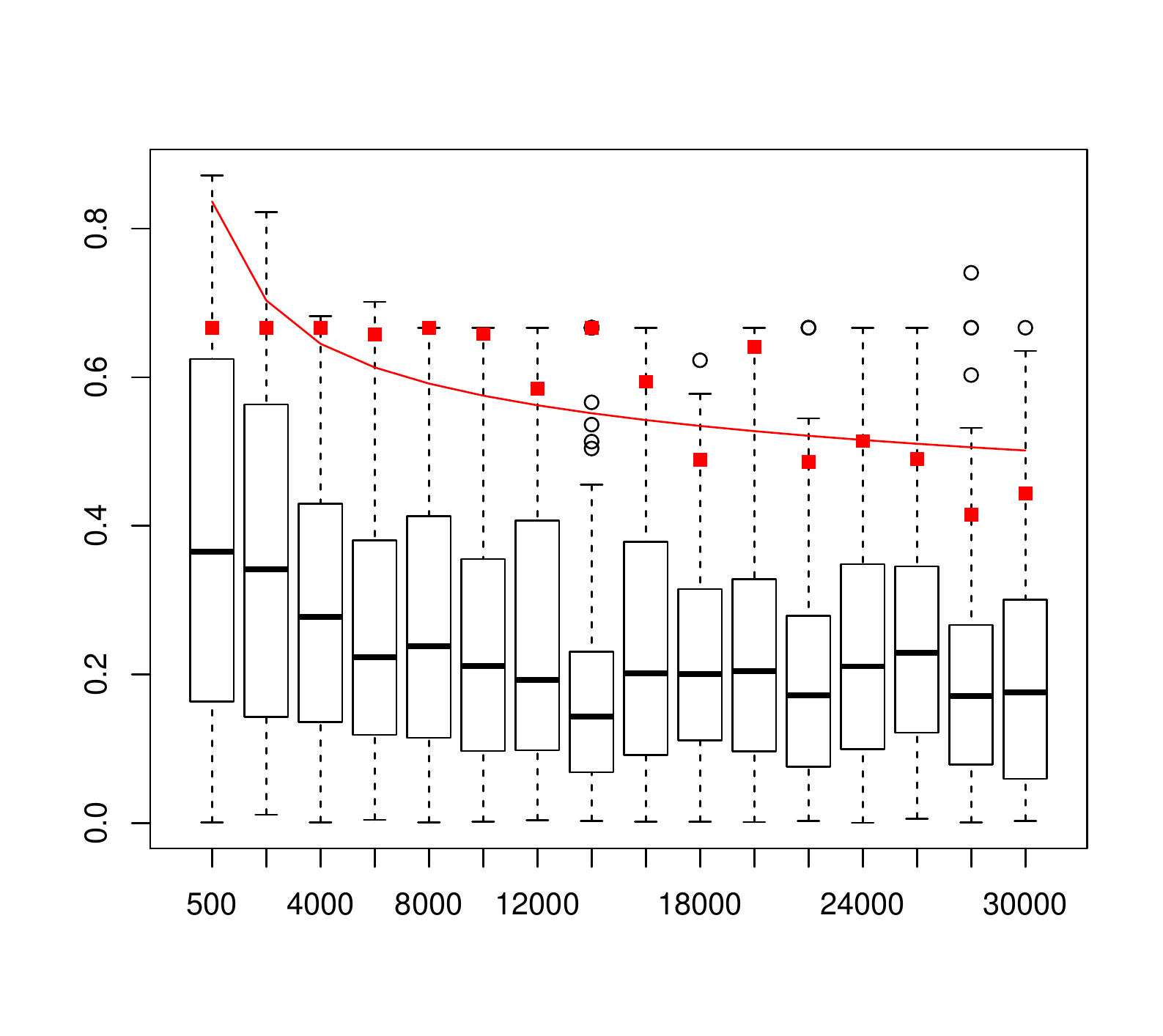}\includegraphics[width=4.2cm,height=7.4cm]{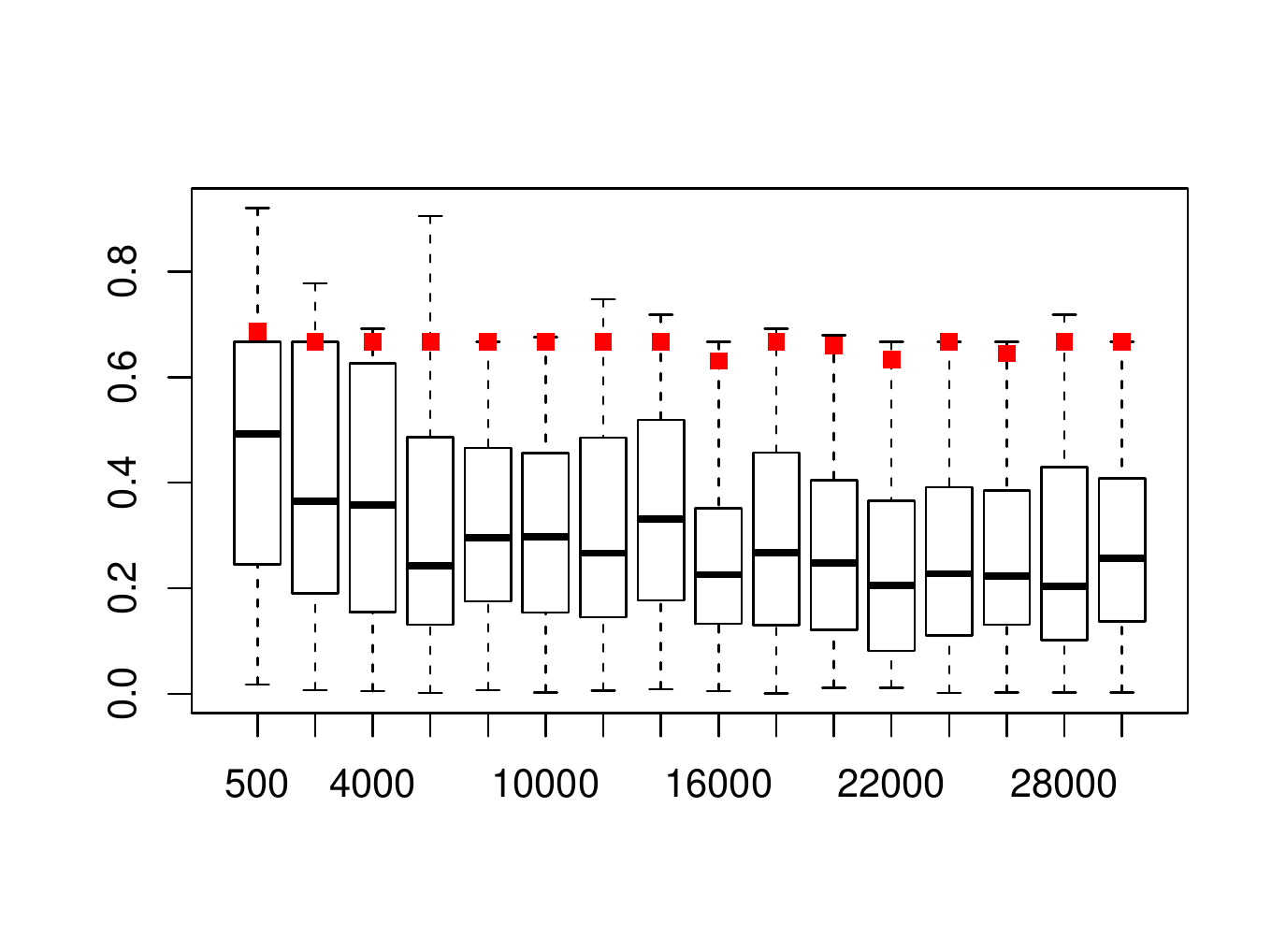}
\caption{Estimation errors $|\hat{\eta}-\eta|/\eta$. On the left, $K=2$ and $\pi=(0.8,0)$. On the right, $K=4$ and $\pi=(0.8, 0.6,0.4,0.2)$. 95\% quantiles are in red. Spearman rank correlation test detects in both cases that the quantiles decrease with $n$ (on the left p-value $<2.10^{-16}$, on the right p-value$=9. 10^{-5}$). On the left, the red line is obtained by a regression with respect to $n^{-1/8}$.  }
\label{est}
\end{figure}

The estimation error is illustrated in Figure \ref{est}. For the case $K=2$, $\pi=(0.8,0)$, we observe as expected that the error decreases in $n^{-1/8}$ (see Proposition \ref{probabilityestimationerror}). For the case $K=4$ and $\pi(0.8,0.6,0.4,0.2)$, which is not covered by  Proposition \ref{probabilityestimationerror},  we still observe a decreasing character of the 95\% quantile as a function of $n$ (p-value of the Spearman rank test in $9.10^{-5}$), but it seems to decrease much slower than $n^{-1/8}$.

\section{Conclusion}

In conclusion, we have shown that the estimation of the learning rate in Exp3 cannot be done correctly  if the true learning rate parameter is constant, that is the estimation rate is at most logarithmic. But the MLE on truncated observations can estimate adequately learning rates that are decreasing  at a polynomial rate with the number of observations. Note that the rate of convergence that we have shown either for the general prediction error or for the estimation error in particular cases are not the classic parametric rate.

Even if Exp3 is a toy learning model with respect to the vast literature in cognition, we believe that these phenomenons  appear in a large variety of models well used in practice such as for instance \cite{Kruschke1992} or \cite{GluckBower1988b} and that our theoretical conclusions should be kept in mind even when working on more realistic models from a more practical point of view.


\section*{Acknowledgements}

This work was supported by the French government, through the $\text{UCA}^\text{Jedi}$ and 3IA C\^ote d'Azur Investissements d'Avenir managed by the National Research Agency (ANR-15- IDEX-01 and ANR-19-P3IA-0002),  by the interdisciplinary Institute for Modeling in Neuroscience and Cognition (NeuroMod) of the Universit\'e C\^ote d'Azur and  directly by the National Research Agency (ANR-19-CE40-0024) with the ChaMaNe project. It is part of the \href{https://explain.i3s.univ-cotedazur.fr}{explAIn} project.

\bibliography{example_paper}
\bibliographystyle{icml2023}

\newpage
\appendix
\onecolumn

\appendix


\section{MISSING PROOFS}

  The Appendix contains the omitted proofs, as well as some indications on the numerical illustrations.

\subsection{For a constant learning rate}

In this section, we include the proofs and results used for the study of the Kullback-Leibler divergence for a constant learning rate. Technical lemmas used throughout the proofs are gathered in Section \ref{lemmasetaconstant}.

\subsubsection{Properties of \texorpdfstring{$q_i^\eta$}{the probabilities}}

\begin{lemma}
\label{summarypropertiesqi}
The following properties hold: 
\begin{itemize}
    \item[(i)] \label{decroissanceqieni} Let $\eta > 0$. The sequence $(q_i^\eta)_{i \geq 0}$ is decreasing to $0$ as $i \to + \infty$.
    \item[(ii)] \label{decroissanceqieneta} For all $i \in \N^*$, the function $\eta \longmapsto q_i^\eta$ is decreasing.
    \item[(iii)] \label{croissanceTi} The sequence $(T_i)_{i \geq 0}$ is non-decreasing. Moreover, $T_i \to + \infty$ as $i \to + \infty$ almost surely.
    \item[(iv)] \label{distributiongeometriqueTi} Let $\eta > 0$. Under the distribution with parameter $\eta$, for all $i \in \N$, the $T_{i+1}-T_i$ are independent and geometrically distributed with parameter $q_i^\eta$.
    \item[(v)] \label{limitesn} Let $S_n = \sum_{t=1}^n I_t$ be the number of times the worst arm is pulled before time $n$. Then, $S_n \to + \infty$ as $n \to + \infty$ almost surely.
    \item[(vi)] \label{decroissancep1} Let $\eta > 0$. The sequence $ (p_{1,t}^\eta)_{t \in \N^*}$ is decreasing to $0$ as $t \to + \infty$ almost surely.
\end{itemize}
\end{lemma}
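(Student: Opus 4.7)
The plan is to dispatch the six items in essentially the order they are stated, tying them together through the recursion
\[
q_{i+1}^\eta = \frac{1}{1+h(q_i^\eta,\eta)}, \qquad h(q,\eta):=\frac{1-q}{q}\,e^{\pi_1\eta/q},
\]
which rewrites the update in a monotone form that will be easier to differentiate.

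For (i), I would observe that for $\eta>0$ and $q\in(0,1)$ we have $h(q,\eta)>(1-q)/q$, hence $q_{i+1}^\eta<q_i^\eta$, so the sequence is strictly decreasing. Being bounded below by $0$, it converges to some $\ell\geq 0$. If $\ell>0$, passing to the limit in the recursion forces $e^{-\pi_1\eta/\ell}=1$, which is impossible for $\eta\pi_1>0$; therefore $\ell=0$.

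For (ii), I would argue by induction on $i$. The base case $i=1$ is immediate from $q_1^\eta=1/(1+e^{2\pi_1\eta})$. For the inductive step, I would write $q_{i+1}^\eta=g(q_i^\eta,\eta)$ with $g(q,\eta)=1/(1+h(q,\eta))$ and establish two monotonicity statements on $(0,1)\times(0,\infty)$: $\eta\mapsto g(q,\eta)$ is decreasing (trivial, since $h$ is increasing in $\eta$), and $q\mapsto g(q,\eta)$ is increasing. The second fact reduces to $\partial_q h<0$, and a direct computation gives
\[
\partial_q h(q,\eta)=-\frac{e^{\pi_1\eta/q}}{q^2}\Big(1+\pi_1\eta\Big(\tfrac{1}{q}-1\Big)\Big)<0
\]
on $(0,1)$. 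Combined with the induction hypothesis $\partial_\eta q_i^\eta\leq 0$, the chain rule applied to $g(q_i^\eta,\eta)$ gives $\partial_\eta q_{i+1}^\eta\leq 0$, closing the induction. This is the step I expect to be the most delicate, because one must carefully track the signs in the composition.

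For (iii) and (iv), I would argue jointly. Conditionally on $\Fcal_{T_i}$, the probability vector $p_t^\eta$ does not change between times $T_i+1$ and $T_{i+1}$, and equals $q_i^\eta$ for arm $1$; hence given $T_i$, the waiting time $T_{i+1}-T_i$ is geometric with parameter $q_i^\eta>0$ (by (i)), in particular finite almost surely. This yields (iv), since the conditional law is deterministic (independent of $\Fcal_{T_i}$), and also gives (iii) after a routine induction together with the trivial bound $T_i\geq i$. Item (v) is then immediate: writing $S_{T_i}=i$, the divergence $T_i\to\infty$ forces $S_n\to\infty$. For (vi), I would note that between $T_i+1$ and $T_{i+1}$ the probability $p_{1,t}^\eta$ equals $q_i^\eta$, so $p_{1,t}^\eta=q_{S_{t-1}}^\eta$; combining (i) (monotonicity and convergence of $q_i^\eta$ to $0$) with (v) concludes.
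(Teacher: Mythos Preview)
Your proposal is correct and follows essentially the same route as the paper: the same recursion (the paper writes it as $q_{i+1}^\eta=g(\eta,q_i^\eta)$ with $g(\gamma,q)=q e^{-\gamma\pi_1/q}/(1-q+qe^{-\gamma\pi_1/q})$, which is your $1/(1+h)$), the same induction scheme for (i)--(ii) via monotonicity of $g$ in each variable, and the same identification $p_{1,t}^\eta=q_{S_{t-1}}^\eta$ for (vi). The only cosmetic differences are that for (i) you get strict decrease directly from $h(q,\eta)>(1-q)/q$ rather than by induction, and for (ii) you phrase the monotonicity via $\partial_q h<0$ and the chain rule whereas the paper compares $g(\eta,q_i^\eta)<g(\delta,q_i^\eta)\le g(\delta,q_i^\delta)$ without differentiating; neither change is substantive.
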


\begin{proof}[Proof of (i)]
Let $\eta > 0$. We show that $(q_i^\eta)_{i \geq 0}$ is a decreasing sequence by induction.
Denote by $g$ the function defined for all $\gamma \geq 0$ and $q \geq 0$ by
\begin{equation}
\label{functiong}
\begin{cases}
    g(\gamma, 0) = 0, \\
    \displaystyle
    g(\gamma, q) = \frac{q e^{-\gamma \pi_1/q}}{(1-q)+q e^{-\gamma \pi_1/q}} 
        = \frac{1}{(\frac{1}{q}-1)e^{\gamma \pi_1/q}+1}
        \in (0,1] \quad \text{when $q>0$}.
\end{cases}
\end{equation}
This function is continuous in both its parameters.
The denominator is an increasing function of $1/q$, and thus $q \longmapsto g(\gamma,q)$ is an increasing function.

Moreover, $q_0^\eta = \frac{1}{2} > \frac{e^{-2 \eta \pi_1}}{1+e^{-2 \eta \pi_1}} = q_1^\eta$. Let $i \geq 0$, and assume $q_{i-1}^\eta\geq q_i^\eta$. Since $g$ is an increasing function of $q$, by the induction hypothesis,
\begin{equation*}
q_{i+1}^\eta = g(\eta, q_i^\eta) < g(\eta, q_{i-1}^\eta) = q_i^\eta.
\end{equation*}
Hence $(q_i^\eta)_{i \geq 0}$ is decreasing. Since it is lower bounded by 0, it converges to some $m \in [0,1/2]$. Since $g$ is continuous in its second parameter, $m = g(\gamma,m)$, that is
\begin{equation*}
m = \frac{m e^{-\pi_1 \eta/m}}{(1-m)+m e^{-\pi_1 \eta/m}},
\end{equation*}
or equivalently,
\begin{equation*}
m(1-m)(1-e^{-\pi_1 \eta/m}) = 0,
\end{equation*}
which admits only one solution in $[0,1/2]$: $m=0$.
\end{proof}

\begin{proof}[Proof of (ii)]
We show that $\eta \mapsto q_i^\eta$ is a decreasing function by induction on $i \in \N$. Let $\delta < \eta$. Firstly, for $i=0$, $q_0^\eta=q_0^\delta=1/2$. Let $i \in \N$, and assume that $q_i^\delta \geq q_i^\eta$. The function $g$ from \eqref{functiong} is decreasing in its first parameter and increasing in its second, therefore
\begin{equation*}
q_{i+1}^\eta = g(\eta, q_i^\eta)
    < g(\delta, q_i^\eta)
    \leq g(\delta, q_i^\delta)
    = q_{i+1}^\delta,
\end{equation*}
hence the result by induction.
\end{proof}

\begin{proof}[Proof of (iii)]
By definition of $T_i$, $T_{i+1} \geq T_i$ and $T_i \geq i$, hence the result.
\end{proof}

\begin{proof}[Proof of (iv)]
By definition, for all $t \in [T_i+1, T_{i+1}]$, $p_t^\eta = q_i^\eta$, and therefore, for all $s \geq 1$,
\begin{align*}
\PP^\eta(T_{i+1} - T_i = s \,|\, (T_{j+1} - T_j)_{j \neq i})
    &= \sum_{k \geq 1} \PP^\eta(I_{k+1}^{k+s-1} = 0, I_{k+s} = 1 \,|\, T_i=k, (T_{j+1} - T_j)_{j \neq i}) \PP^\eta(T_i=k \,|\, (T_{j+1} - T_j)_{j \neq i}) \\
    &= (1-q_i^\eta)^{s-1} q_i^\eta \sum_{k \geq 1} \PP^\eta(T_i=k \,|\, (T_{j+1} - T_j)_{j \neq i}) \\
    &= (1-q_i^\eta)^{s-1} q_i^\eta,
\end{align*}
which shows that $T_{i+1}-T_i$ is geometrically distributed with parameter $q_i^\eta$ and independent from the other $T_{j+1}-T_j$.
\end{proof}

\begin{proof}[Proof of (v)]
The sequence $(S_n)_{n \geq 1}$ is non-decreasing, so, almost surely, it either tends to $+\infty$ or it converges to some limit $m \in \N$.

Consider the event where $\lim_n S_n = m \in \N$, then on this event, $T_m < +\infty$ and $T_{m+1} = +\infty$, in particular $T_{m+1} - T_m = +\infty$, which is an event with probability zero by (iv) since $q_m^\eta > 0$ for all $m \in \N$. Therefore, $\Pbb^\eta(\lim_n S_n < +\infty) = 0$.
\end{proof}

\begin{proof}[Proof of (vi)]
First, note that $p_{1,t} = q^\eta_{S_{t-1}}$, where $(S_n)_{n \geq 0}$ is defined in (v), with the convention $S_0 = 0$. The sequence $(S_n)_{n \geq 1}$ is non-decreasing and tends to $+\infty$ almost surely by (v). Combining this with (i) shows that $(p_{1,t}^\eta)_{t \geq 1}$ is non-increasing and converges to 0 almost surely.
\end{proof}

\subsubsection{Proof of Proposition~\ref{afterIeta}: tetration behaviour of the updated probabilities.}
\label{proofoftetration}

Let us show by induction that for all $k \geq 1$ : 
\begin{equation*}
    q_{I(\eta)+k+1}^\eta \leq \frac{2 \min(\eta \pi_1, q_{I(\eta)+k}^\eta)  }{f^{(k)  }(2)} \leq \frac{1  }{f^{(k)  }(2)} .
\end{equation*}
For any $\eta > 0$ and $i \geq 0$, $1-q_{i}^\eta + q_i^\eta e^{-\eta \pi_1/q_i^\eta} \geq \frac{1}{2} $ because $q_i^\eta \leq \frac{1}{2}$. Therefore, $q_{i+1}^\eta \leq 2 q_i^\eta e^{-\eta \pi_1 / q_i^\eta}$. In particular,
\begin{equation*}
\begin{split}
    q_{I(\eta)+2}^{\eta} & \leq \frac{q_{I(\eta)+1}^{\eta} e^{-\eta \pi_1 / q_{I(\eta)+1}^\eta}}{1-q_{I(\eta)+1}^{\eta} + q_{I(\eta)+1}^{\eta} e^{-\eta \pi_1 / q_{I(\eta)+1}^\eta} } \\& \leq 2 q_{I(\eta)+1}^{\eta} e^{-\eta \pi_1 / q_{I(\eta)+1}^\eta}  \\& \leq 2 \min(\eta \pi_1, q_{I(\eta)+1}^\eta) \cdot e^{-1}  = \frac{2 \min(\eta \pi_1, q_{I(\eta)+1}^\eta) }{f\big(2\big)} \leq \frac{1 }{f\big(2\big)}.
\end{split}
\end{equation*}
Let $k \geq 2$. Assume the result is true for $k \geq 2$. Then,
\begin{equation*}
\begin{split}
    q_{I(\eta)+k+2}^{\eta} & \leq 2  q_{I(\eta)+k+1}^{\eta} e^{-\eta \pi_1 / q_{I(\eta)+k+1}^{\eta}} \\& \leq 2 \min(\eta \pi_1, q_{I(\eta)+k+1}^{\eta}) \cdot e^{-\eta \pi_1 f^{(k)  }(2)/ (2 \eta \pi_1)  } 
    \\& \leq 2 \min(\eta \pi_1, q_{I(\eta)+k+1}^{\eta}) \cdot \frac{1}{e^{f^{(k)  }(2)/2}}  \\& \leq  \frac{2 \min(\eta \pi_1, q_{I(\eta)+k+1}^{\eta})}{ f^{(k+1)  }\big(2\big)} \leq   \frac{1}{ f^{(k+1)  }\big(2\big)} .
\end{split}
\end{equation*}
Let us study the monotonicity of the sequence $(f^{(k)}(2))_{k \geq 0}$: $f^{(0)}(2) = 2 < e = f^{(1)}(2)$. Suppose the result is true for $k \in \N$. Then, since $f$ is increasing, 
\begin{equation*}
f^{(k)}(2) < f^{(k+1)}(2)
    \Longrightarrow f^{(k+1)}(2) < f^{(k+2)}(2).
\end{equation*}
Hence, $(f^{(k)}(2))_{k \geq 0}$ is increasing.

For the second part of the Proposition, $\frac{1}{f^{(k)}(2)} < \frac{1}{n}$ implies $q_{I(\eta)+k+1}^\eta < \frac{1}{n}$, that is $k+I(\eta)+1 \geq J(n,\eta)+1$. Let $\log^*(n)$ be defined as in Proposition~\ref{afterIeta}. For all $k \geq 1$,
\begin{equation*}
\begin{split}
   \frac{1}{f^{(k)}(2)} < \frac{1}{n} \Longleftrightarrow n < f^{(k)}(2)
   \Longleftrightarrow k \geq \log^*(n) + 1,
\end{split}
\end{equation*}
because the sequence $\big(f^{(k)}(2)\big)_{k \geq 1}$ is increasing. In particular, taking $k = \log^*(n)+1$,
\begin{equation*}
    J(n,\eta) \leq I(\eta) + \log^*(n)+1.
\end{equation*}

\subsubsection{Proof of Theorem~\ref{boundingklconstant}}
\label{proofoftheoremetaconstant}

The proof is divided into two parts. We first show that the KL divergence can be bounded, up to an additive constant, by a sum whose number of terms is bounded by $J(n,\delta)$, that is:
\begin{equation*}
    KL(\PP_{I_1^n}^\eta || \PP_{I_1^n}^\delta)  \leq \sum_{i=0}^{J(n,\delta)} 2 \frac{ q_i^\delta-q_i^\eta}{q_i^\eta} + n (J(n,\delta)+1) (1-q_{J(n,\delta)}^\eta)^{\displaystyle \lfloor \frac{n}{J(n,\delta)+1} \rfloor} +  O_n(1).
\end{equation*}
We then show the existence of $\delta >0$ and $\eta > \delta$, as in the theorem, that verifies that the KL divergence is bounded.

\begin{proof}[Proof of Theorem \ref{boundingklconstant}.]
\label{proofwhenunderestimating}
Let us split the KL divergence into three parts. Assume in the following that $\eta > \delta$, so that $\log \frac{q_i^\eta}{q_i^\delta}\leq 0$. Recall from Equation~\eqref{expressionoftheKLdivergence} that
\begin{align*}
KL(\PP_{I_1^n}^\eta || \PP_{I_1^n}^\delta)
    &= \sum_{i=0}^{n-1}\EE_{\eta} [\one_{T_{i+1} \leq n} (T_{i+1}-T_i-1) \log \frac{1-q_i^\eta}{1-q_i^\delta}+\one_{T_{i+1} \leq n} \log \frac{q_i^\eta}{q_i^\delta} \\
    &\quad + \EE_{\eta} \bigg[ (n-T_{\tau(n)})\log \frac{1-q_{\tau(n)}^\eta}{1-q_{\tau(n)}^\delta} \bigg] \\
    &= \underbrace{\sum_{i=0}^{J(n,\delta)}\EE_{\eta} [\one_{T_{i+1} \leq n} (T_{i+1}-T_i-1) \log \frac{1-q_i^\eta}{1-q_i^\delta}+\one_{T_{i+1} \leq n} \log \frac{q_i^\eta}{q_i^\delta}]}_{A} \\
    &\quad + \underbrace{\sum_{i=J(n,\delta)+1}^{n-1}\EE_{\eta} [\one_{T_{i+1} \leq n} (T_{i+1}-T_i-1)  \log \frac{1-q_i^\eta}{1-q_i^\delta}+\one_{T_{i+1} \leq n} \log \frac{q_i^\eta}{q_i^\delta}]}_{B} \\
    &\quad + \underbrace{\EE_{\eta} \bigg[ (n-T_{\tau(n)})\log \frac{1-q_{\tau(n)}^\eta}{1-q_{\tau(n)}^\delta} \bigg]}_{C}.
\end{align*}

Since $0 \leq q_i^\eta \leq q_i^\delta$ and $\log(1-\cdot)$ is non-increasing and $2$-lipschitz on $[0,\frac{1}{2}]$, for all $i \geq 0$, if $\delta \pi_1 n \geq \log 2$ and $n \geq 3$,
\begin{align*}
B
    &\leq \sum_{i=J(n,\delta)+1}^{n-1} \EE_{\eta} \left[\one_{T_{i+1} \leq n} (T_{i+1} - T_i) \right] \log \frac{1-q_i^\eta}{1-q_i^\delta} \\
    &\leq \sum_{i=J(n,\delta)+1}^{n-1} n \PP_{\eta}(T_{i+1} \leq n) \times 2(q_i^\delta-q_i^\eta) \\
    &\leq 2n \sum_{i=J(n,\delta)+1}^{n-1} q_i^\delta \\
    &\leq 2n \frac{1}{n-2} \sum_{i \geq 0} e^{-\delta \pi_1 n i} \quad \text{by Lemma~\ref{indicetradeoff}}\\
    &\leq \frac{6}{1 - e^{-\delta \pi_1 n}} \leq 12.
\end{align*}

For the same reasons,
\begin{align*}
C
    &= \EE_\eta [(n-T_{\tau(n)}) \log \frac{1-q_{\tau(n)}^\eta}{1-q_{\tau(n)}^\delta}] \\
    &\leq 2n \EE_\eta [q_{\tau(n)}^\delta] \\
    &= 2n \EE_\eta [q_{\tau(n)}^\delta \one_{\tau(n) \leq J(n,\delta)}]
        + 2n \EE_\eta [q_{\tau(n)}^\delta \one_{\tau(n) \geq J(n,\delta)+1}] \\
    &\leq 2n \times \frac{1}{2} \PP_\eta(\tau(n) \leq J(n,\delta))
        + 2 \quad \text{by definition of $J(n,\delta)$} \\
    &= n \PP_\eta(T_{J(n,\delta)+1} > n) + 2 \\
    &\leq n (J(n,\delta)+1) (1-q_{J(n,\delta)}^\eta)^{\displaystyle \lfloor \frac{n}{J(n,\delta)+1} \rfloor} + 2 \quad \text{by~\eqref{inequalityTi}}
\end{align*}

Finally, provided that $\eta \pi_1 n \geq \log 2$ and $n \geq 3$,
\begin{align*}
A
    &\leq \sum_{i=0}^{J(n,\delta)} \EE_{\eta} [\one_{T_{i+1} \leq n} (T_{i+1}-T_i-1)] \log \frac{1-q_i^\eta}{1-q_i^\delta} \\
    &\leq 2 \sum_{i=0}^{J(n,\delta)} (q_i^\delta - q_i^\eta) \left\{ \EE_{\eta} [T_{i+1}-T_i] \wedge n \PP_\eta(T_{i+1} \leq n) \right\} \\
    &\leq 2 \sum_{i=0}^{J(n,\delta)} (q_i^\delta - q_i^\eta) \left\{ \frac{1}{q_i^\eta} \wedge n^2 q_i^\eta \right\}  \quad \text{by~\eqref{inequalityTi}} \\
    &\leq 2 \sum_{i=0}^{(J(n,\eta)+1) \wedge J(n,\delta)} \frac{q_i^\delta - q_i^\eta}{q_i^\eta}
        + 2 \frac{n^2}{n-2} \sum_{i=J(n,\eta)+2}^{+\infty} e^{-\eta \pi_1 n (i-J(n,\eta)-1)} \quad \text{by Lemma~\ref{indicetradeoff}} \\
    &\leq 2 \sum_{i=1}^{(J(n,\eta)+1) \wedge J(n,\delta)} \frac{q_i^\delta - q_i^\eta}{q_i^\eta}
        + 12 n e^{-\eta \pi_1 n}.
\end{align*}

Therefore,
\begin{equation*}
KL(\PP_{I_1^n}^\eta || \PP_{I_1^n}^\delta)
    \leq 2 \sum_{i=1}^{(J(n,\eta)+1) \wedge J(n,\delta)} \frac{ q_i^\delta-q_i^\eta}{q_i^\eta}
        + n (J(n,\delta)+1) (1-q_{J(n,\delta)}^\eta)^{\displaystyle \lfloor \frac{n}{J(n,\delta)+1} \rfloor}
        + 14 + 12 n e^{-\eta \pi_1 n}.
\end{equation*}

From Lemma \ref{decroissanceratioqi}, by monotonicity of the sequence $(q^\delta_k / q^\eta_k)_{k \geq 0}$, 
\begin{equation}
\label{KLdernierJ}
KL(\PP_{I_1^n}^\eta || \PP_{I_1^n}^\delta) \leq 2 J(n,\delta) \frac{q_{J(n,\delta)}^\delta-q_{J(n,\delta)}^\eta}{q_{J(n,\delta)}^\eta}
    + n (J(n,\delta)+1) (1-q_{J(n,\delta)}^\eta)^{\displaystyle \lfloor \frac{n}{J(n,\delta)+1} \rfloor}
    + 14 + 12 n e^{-\eta \pi_1 n}.
\end{equation}

By Proposition~\ref{afterIeta}, for any $\delta > 0$ such that $\delta \pi_1 \geq \frac{1}{n}$, 
\begin{equation*}
    I(\delta) \leq J(n,\delta) \leq I(\delta) + \log^*(n) + 1.
\end{equation*}

Let $R > 0$ be such that $\frac{1}{n} < R \pi_1 \leq 1/2$. By the inequalities above, for any $\delta$ such that $\frac{1}{n}< \delta \pi_1 < R \pi_1$,
\begin{equation*}
J(n,\delta) - J(n,R)
    \geq I(\delta) - \Big(I(R) + \log^*(n) + 1\Big).
\end{equation*}
From Lemma \ref{encadrementI(eta)},
$I(\delta) > I(R) + \log^*(n) + 1$ (and thus $J(n,\delta) > J(n,R)$) as soon as
\begin{equation*}
    \frac{1}{2 \delta \pi_1} - 2 > \frac{2}{R \pi_1} - 4 + \log^*(n) + 1,
\end{equation*}
that is when
\begin{equation*}
    \delta < \frac{\frac{1}{2\pi_1}}{\frac{2}{R\pi_1} + \log^*(n) - 1}.
\end{equation*}
Therefore, by contraposition, choosing $\delta$ such that $J(n,\delta) = J(n,R)$ implies $\delta \geq \displaystyle \frac{\frac{1}{2\pi_1}}{\frac{2}{R\pi_1} + \log^*(n) - 1}$. By continuity of $\gamma \mapsto q_{J(n,R)+1}^{\gamma}$ and the fact that its limit is $1/2$ (resp. smaller than $1/n$) when $\gamma \rightarrow 0$ (resp. $R$), for all $n \geq 2$, there exists $\delta \in (0,R)$ such that $q_{J(n,R)+1}^{\delta} \in [\frac{1}{2n}, \frac{1}{n})$. 
Let $\delta_n$ be such a $\delta$. In particular, $J(n,\delta_n) \leq J(n,R)$, thus is equal since $\delta \mapsto J(n,\delta)$ is non-increasing. Therefore,
\begin{equation}
\label{choixconstantec}
\delta_n
    \geq \frac{\frac{1}{2\pi_1}}{\frac{2}{R\pi_1} + \log^*(n) - 1}
    \geq \frac{(4\pi_1)^{-1}}{\log^*n}
    =: \frac{c}{\log^*n}.
\end{equation}
for $n$ such that $\log^*(n) \geq \frac{2}{R \pi_1} - 1$. As a consequence, $\delta_n \pi_1 n \geq 1 \vee \log 2$ for all $n \geq n_0$, for some $n_0$ that depends only on $R$ and $\pi_1$.
From Lemma~\ref{lipschitzianityJ(n,delta)}, for any $\eta > \delta_n$,
\begin{equation*}
q_{J(n,\delta_n)}^{\delta_n} - q_{J(n,\delta_n)}^\eta
    \leq G(n,\delta_n) (\eta-\delta_n), 
\end{equation*}
where
\begin{equation*}
G(n,\delta_n)
    = \frac{8}{\delta_n \pi_1} \frac{\left(\frac{8}{\delta_n \pi_1}\right)^{J(n,\delta_n)}-1}{\frac{8}{\delta_n \pi_1}-1}
    \leq 2 \left(\frac{8}{\delta_n \pi_1}\right)^{J(n,\delta_n)}
\end{equation*}
In particular, for $\eta_n = \delta_n + \frac{1}{(\log n)^{1+\beta}}$,
\begin{equation}
\label{eq1}
q_{J(n,\delta_n)}^{\delta_n} - q_{J(n,\delta_n)}^{\eta_n}
    \leq \frac{G(n,\delta_n)}{(\log n)^{1+\beta}}.
\end{equation}
Note that $\frac{G(n,\delta_n)}{(\log n)^{\alpha}} \to 0$ as $n \to + \infty$ for any $\alpha > 0$. Indeed, by Proposition~\ref{afterIeta} and Lemma~\ref{encadrementI(eta)},
\begin{equation}
\label{eq_majoration_Jdelta}
J(n,\delta_n) + 1
    \leq I(\delta_n)+\log^*(n) + 2
    \leq \frac{2}{\delta_n \pi_1} - 4 + \log^*(n) + 2
    \leq 9 \log^*(n)
\end{equation}
by~\eqref{choixconstantec}, as long as $n$ is such that $\log^*(n) \geq \frac{2}{R\pi_1} - 1$, which also entails $\frac{1}{\delta_n \pi_1} \leq 4\log^*(n)$, so that $\log G(n,\delta_n) = O(\log^*(n) \log \log^*(n)) = o(\log \log n)$.

Therefore, since $q_{J(n,\delta_n)}^{\delta_n} \geq \frac{\delta \pi_1}{\log(2n)}$ by Lemma~\ref{tradeoffprlesipetits} and the assumption $q_{J(n,\delta_n)+1}^{\delta_n} \geq 1/(2n)$,
\begin{equation}
\label{eq2}
q_{J(n,\delta_n)}^{\eta_n}
    \geq \frac{\delta_n \pi_1}{\log(2n)} - \frac{ G(n,\delta_n) }{(\log n)^{1+\beta}}
    \geq \frac{1}{8 \log^*(n) \log n}
\end{equation}
for $n \geq n_0$, for some $n_0$ depending on $R$, $\pi_1$ and $\beta$.
Therefore, by~\eqref{eq_majoration_Jdelta},
\begin{align*}
n (J(n,\delta_n)+1) (1- q_{J(n,\delta_n)}^{\eta_n})^{\displaystyle \lfloor \frac{n}{J(n,\delta_n)+1} \rfloor}
    &\leq n (J(n,\delta_n)+1) \left(1-\frac{1}{8 \log^*(n) \log n}\right)^{\displaystyle \lfloor \frac{n}{J(n,\delta_n)+1} \rfloor} \\
    &\leq n (J(n,\delta_n)+1) \exp \left(- \lfloor \frac{n}{J(n,\delta_n)+1} \rfloor \frac{1}{8 \log^*(n) \log n} \right) \\
    &\leq n \times 9 \log^*(n) \exp \left(- \lfloor \frac{n}{9 \log^*(n)} \rfloor \frac{1}{8 \log^*(n) \log n} \right) \\
    &= o_n(1),
\end{align*}
where we used $\log(1-x) \leq -x$ for $x \in (0,1)$.
Injecting this result in~\eqref{KLdernierJ}, and since $\eta_n \pi_1 n \geq \log n$ for $n \geq n_0'$ for some $n_0'$ that depends only on $\pi_1$ and $\beta$,
\begin{align*}
KL(\PP_{I_1^n}^{\eta_n} || \PP_{I_1^n}^{\delta_n})
    &\leq 2 J(n,\delta) \frac{q_{J(n,\delta_n)}^{\delta_n} - q_{J(n,\delta_n)}^{\eta_n}}{q_{J(n,\delta_n)}^{\eta_n}}
    + 14 + 12 n e^{-\pi_1 n / (\log n)^{1+\beta}} + o_n(1).
\end{align*}
Using~\eqref{eq1} and~\eqref{eq2}, for $n$ large enough,
\begin{align*}
KL(\PP_{I_1^n}^{\eta_n} || \PP_{I_1^n}^{\delta_n})
    &\leq 2 J(n,\delta_n) \frac{\frac{G(n,\delta_n)}{(\log n)^{1+\beta}}}{\frac{1}{8 \log^*(n) \log n}} + 14 + o_n(1). \\
    &\leq 18 \log^*(n) \frac{8 \log^*(n) G(n,\delta_n)}{(\log n)^{\beta}} + 14 + o_n(1) \\
    &= 14 + o_n(1)
\end{align*}
since $G(n,\delta_n) \vee \log^*(n) = o_n((\log n)^{\alpha})$ for any $\alpha > 0$. Note that this $o_n(1)$ only depends on $\pi_1$ and $\beta$. Therefore, there exists $\Delta_{\beta,\pi_1}>0$ depending on $\pi_1$ and $\beta$ and $n_0 \in \N$ depending on $\pi_1$, $R$ and $\beta$, such that for all $n \geq n_0$,
\begin{equation*}
    KL(\PP_{I_1^n}^{\eta_n} || \PP_{I_1^n}^{\delta_n}) \leq \Delta_{\beta,\pi_1}.
\end{equation*}

\end{proof}

\subsubsection{Technical lemmas}
\label{lemmasetaconstant}

\begin{lemma}[Bracketing the c.d.f. of $T_i$]
Let $\eta > 0$.
For all $i \in \N$,
\begin{equation}
\label{inequalityTi}
1-i(1-q^\eta_{i-1})^{\lfloor \frac{n}{i} \rfloor}
    \leq \PP^\eta(T_i \leq n)
    \leq 1 - (1-q^\eta_{i-1})^n
    \leq n q^\eta_{i-1}.
\end{equation}
\end{lemma}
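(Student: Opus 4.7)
The plan is to read the statement as three inequalities that chain together, and to treat each one separately using only property (iv) of Lemma~\ref{summarypropertiesqi} (independence and geometric distribution of the inter-arrival times) and the monotonicity property (i) ($(q_j^\eta)_j$ is non-increasing). Throughout, write $T_i = \sum_{j=0}^{i-1} X_j$ where $X_j := T_{j+1} - T_j$ are independent and geometrically distributed with parameter $q_j^\eta$, so that $\PP^\eta(X_j > m) = (1-q_j^\eta)^m$ for every $m \in \N$.

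For the upper bound $\PP^\eta(T_i \leq n) \leq 1 - (1-q_{i-1}^\eta)^n$, I would use the trivial bound $T_i \geq X_{i-1}$, which gives the inclusion $\{T_i \leq n\} \subseteq \{X_{i-1} \leq n\}$. Taking probabilities and using that $X_{i-1}$ is geometric with parameter $q_{i-1}^\eta$ yields the claim. The last inequality $1 - (1-q_{i-1}^\eta)^n \leq n q_{i-1}^\eta$ is an immediate consequence of Bernoulli's inequality $(1-x)^n \geq 1-nx$ for $x \in [0,1]$.

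The lower bound is the only part requiring a small argument. I would complement it: bound $\PP^\eta(T_i > n)$ from above via a pigeonhole observation. If all the increments satisfy $X_j \leq \lfloor n/i \rfloor$, then $T_i = \sum_{j=0}^{i-1} X_j \leq i \lfloor n/i \rfloor \leq n$; contrapositively,
\begin{equation*}
\{T_i > n\} \subseteq \bigcup_{j=0}^{i-1} \{X_j > \lfloor n/i \rfloor\}.
\end{equation*}
A union bound then gives $\PP^\eta(T_i > n) \leq \sum_{j=0}^{i-1} (1-q_j^\eta)^{\lfloor n/i \rfloor}$. Using the monotonicity $q_j^\eta \geq q_{i-1}^\eta$ for $j \leq i-1$ (property (i) of Lemma~\ref{summarypropertiesqi}), each term is bounded by $(1-q_{i-1}^\eta)^{\lfloor n/i \rfloor}$, yielding $\PP^\eta(T_i > n) \leq i(1-q_{i-1}^\eta)^{\lfloor n/i \rfloor}$ and hence the announced lower bound on $\PP^\eta(T_i \leq n)$. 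There is no real obstacle here: the whole argument is essentially pigeonhole plus a union bound, and the monotonicity of $q_j^\eta$ in $j$ is exactly what makes the worst increment be the last one.
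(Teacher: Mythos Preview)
Your proof is correct and follows essentially the same approach as the paper: the upper bound via $T_i \geq X_{i-1}$ and Bernoulli's inequality, and the lower bound via the pigeonhole inclusion $\{T_i > n\} \subseteq \bigcup_j \{X_j > \lfloor n/i \rfloor\}$, a union bound, and the monotonicity of $(q_j^\eta)_j$. There is nothing to add.
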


\begin{proof}
Since $T_i = T_i - T_0 = \sum_{k=0}^{i-1} (T_{k+1}-T_k) \geq T_{i}-T_{i-1}$,
\begin{align*}
\PP^\eta(T_i \leq n)
    &= \PP^\eta(\sum_{k=0}^{i-1} T_{k+1}-T_k \leq n) \\
    &\leq \PP^\eta(T_i-T_{i-1} \leq n) \\
    &= 1-(1-q^\eta_{i-1})^n \quad \text{by Proposition~\ref{summarypropertiesqi} (iv)} \\
    &\leq 1 - (1-n q^\eta_{i-1}) = n q^\eta_{i-1} \quad \text{by Bernoulli's inequality.}
\end{align*}
On the other hand,
\begin{align*}
\PP^\eta(T_i \leq n)
    \geq \PP^\eta(\forall k \in \{0, \dots, i-1\}, T_k - T_{k-1}\leq \lfloor  \frac{n}{i} \rfloor)
    &\geq 1 - \sum_{k=0}^{i-1} (1-q^\eta_{k-1})^{\lfloor \frac{n}{i} \rfloor} \quad \text{by union bound} \\
    &\geq 1 - i(1-q^\eta_{i-1})^{\lfloor \frac{n}{i} \rfloor},
\end{align*}
which concludes the proof.
\end{proof}


\begin{lemma}
\label{indicetradeoff}
For all $n \geq 3$ such that $\eta \pi_1 n \geq \log 2$, for all $i \geq J(n,\eta)+1$,
\begin{equation*}
q_{i}^\eta \leq \frac{1}{n-2} e^{-\eta \pi_1 n (i - J(n,\eta) - 1)}.
\end{equation*}
In particular, under the same conditions on $n$ and $\eta$, for all $i \geq J(n,\eta)+2$,
\begin{equation*}
q_{i}^\eta \leq \frac{e^{-\eta \pi_1 n}}{n-2} \leq e^{-\eta \pi_1 n}.
\end{equation*}
\end{lemma}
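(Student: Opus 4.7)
The plan is to introduce the auxiliary sequence $a_i := 1/q_i^\eta - 1$, derive a clean multiplicative recursion for it, and then iterate starting from index $J(n,\eta)+1$, exploiting the fact that $q_k^\eta$ is bounded by $1/n$ from that index on.

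First, I would compute $1/q_{i+1}^\eta$ directly from the update rule
\begin{equation*}
q_{i+1}^\eta = \frac{q_i^\eta e^{-\eta\pi_1/q_i^\eta}}{(1-q_i^\eta)+q_i^\eta e^{-\eta\pi_1/q_i^\eta}},
\end{equation*}
which, after a short manipulation (dividing numerator and denominator by $q_i^\eta e^{-\eta\pi_1/q_i^\eta}$), yields the clean multiplicative recursion
\begin{equation*}
a_{i+1} = a_i \exp\!\left(\frac{\eta\pi_1}{q_i^\eta}\right).
\end{equation*}

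Next, I would use two ingredients. By Lemma~\ref{summarypropertiesqi}(i), $(q_k^\eta)_k$ is decreasing, and by definition of $J(n,\eta)$ we have $q_{J(n,\eta)+1}^\eta < 1/n$, so $q_k^\eta < 1/n$ for every $k \geq J(n,\eta)+1$. This gives $\exp(\eta\pi_1/q_k^\eta) \geq \exp(\eta\pi_1 n)$ for all such $k$, and also $a_{J(n,\eta)+1} = 1/q_{J(n,\eta)+1}^\eta - 1 > n-1$. Iterating the recursion from $J(n,\eta)+1$ up to $i$ (with the convention that an empty product equals $1$, so that $i = J(n,\eta)+1$ is covered) yields
\begin{equation*}
a_i \;\geq\; (n-1)\,\exp\!\bigl(\eta\pi_1 n\,(i - J(n,\eta)-1)\bigr).
\end{equation*}

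Finally, inverting via $q_i^\eta = 1/(1+a_i) \leq 1/a_i$ gives the slightly tighter bound
\begin{equation*}
q_i^\eta \;\leq\; \frac{1}{n-1}\, e^{-\eta\pi_1 n (i - J(n,\eta)-1)},
\end{equation*}
and since $\frac{1}{n-1} \leq \frac{1}{n-2}$ for $n \geq 3$, the claimed inequality follows; the ``in particular'' assertion is then immediate by taking $i \geq J(n,\eta)+2$ and using $\frac{1}{n-2}\leq 1$. Thanks to the change of variables $a_i = 1/q_i^\eta - 1$, the whole argument collapses to a single telescoping product and presents no real obstacle; the only subtle point is recognizing that this reformulation linearizes the otherwise awkward recursion on $q_i^\eta$, and that the technical assumption $\eta\pi_1 n \geq \log 2$ is in fact not needed for this route (only $n\geq 3$ is used).
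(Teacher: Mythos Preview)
Your proof is correct and genuinely different from the paper's argument. The paper bounds $q_{i+1}^\eta \leq h(q_i^\eta,\eta\pi_1/q_i^\eta)$ with $h(x,y)=\frac{xe^{-y}}{1-x}$ and then runs an induction of the form $q_{J(n,\eta)+k}^\eta \leq \frac{1}{n-c_k}e^{-\eta\pi_1 n(k-1)}$, where the error sequence $c_k$ accumulates as $c_{k+1}=c_k+e^{-\eta\pi_1 n(k-1)}$; the hypothesis $\eta\pi_1 n\geq\log 2$ is precisely what is needed to keep $c_k\leq 2$. Your substitution $a_i=1/q_i^\eta-1$ turns the update into the \emph{exact} identity $a_{i+1}=a_i\exp(\eta\pi_1/q_i^\eta)$, after which a single telescoping product from $J(n,\eta)+1$ gives the bound with the slightly better constant $1/(n-1)$ and, as you note, without using $\eta\pi_1 n\geq\log 2$ at all. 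So your route is both shorter and sharper; the paper's approach, by contrast, illustrates a more generic induction-with-error template that would still work if the recursion were only known up to inequality.
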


\begin{proof}
Write $h : (x,y) \in (0,1) \times \R_+ \longmapsto \frac{x e^{-y}}{1 - x}$, so that $q_{i+1}^\eta \leq h(q_i^\eta, \frac{\eta \pi_1}{q_i^\eta})$. This function $h$ is increasing in $x$ and decreasing in $y$.

Now, let $k \geq 1$, and assume that there exists $0 \leq c_k < n$ such that $q_{J(n,\eta) + k}^\eta \leq \frac{1}{n - c_k} e^{-\eta \pi_1 n (k-1)}$. By definition of $J(n,\eta)$, this holds for $k=1$ with $c_1 = 0$.

Since $i \mapsto q_i^\eta$ is decreasing and $J(n,\eta)+k+1 > J(n,\eta)$, $q_{J(n,\eta) + k+1}^\eta \leq \frac{1}{n}$, so
\begin{align*}
q_{J(n,\eta) + k+1}
    \leq h(q_{J(n,\eta) + k}, \frac{\eta \pi_1}{1/n})
    &= \frac{1}{n-c_k} e^{-\eta \pi_1 n k} \frac{1}{1 - \frac{1}{n-c_k} e^{-\eta \pi_1 n (k-1)}} \\
    &= \frac{e^{-\eta \pi_1 n k}}{n - c_k- e^{-\eta \pi_1 n (k-1)}}
\end{align*}
thus the same inequality holds for $q_{J(n,\eta) + k+1}$ with $c_{k+1} = c_k + e^{-\eta \pi_1 n (k-1)}$, and for all $k \geq 1$, $c_k \leq \frac{1}{1 - e^{-\eta \pi_1 n}}$. Since $\eta \pi_1 n \geq \log 2$, $c_k \leq 2 < n$ for all $k \geq 1$ (and $n \geq 3$), which concludes the proof.
\end{proof}

\begin{lemma}
\label{tradeoffprlesipetits}
Let $\eta > 0$.
For all $i \geq 0$,
\begin{equation*}
    q_i^\eta \geq \frac{\eta \pi_1}{\log(1/q^\eta_{i+1})}.
\end{equation*}
In particular, for all $i \leq J(n,\eta)-1$,
\begin{equation*}
    q_i^\eta \geq \frac{\eta \pi_1}{\log n}.
\end{equation*}
\end{lemma}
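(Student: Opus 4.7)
The plan is to exploit the recurrence defining the $q_i^\eta$'s directly. Recall that
\[
q_{i+1}^\eta = \frac{q_i^\eta e^{-\pi_1 \eta / q_i^\eta}}{(1-q_i^\eta) + q_i^\eta e^{-\pi_1 \eta / q_i^\eta}}.
\]
Since the second summand in the denominator is non-negative, I would first bound the denominator below by $1-q_i^\eta$, yielding
\[
q_{i+1}^\eta \;\leq\; \frac{q_i^\eta}{1-q_i^\eta}\, e^{-\pi_1 \eta / q_i^\eta}.
\]
From Lemma~\ref{summarypropertiesqi}(i), $q_i^\eta \leq q_0^\eta = \tfrac{1}{2}$ for every $i \geq 0$, so $1/(1-q_i^\eta) \leq 2$ and therefore $q_{i+1}^\eta \leq 2 q_i^\eta e^{-\pi_1 \eta/q_i^\eta} \leq e^{-\pi_1 \eta / q_i^\eta}$, because $2 q_i^\eta \leq 1$.

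Taking the logarithm of this last inequality gives $\log q_{i+1}^\eta \leq -\pi_1 \eta/q_i^\eta$, equivalently $\log(1/q_{i+1}^\eta) \geq \pi_1 \eta / q_i^\eta$. Since $q_{i+1}^\eta>0$ (by Lemma~\ref{summarypropertiesqi}(i), the sequence is positive), $\log(1/q_{i+1}^\eta)$ is well defined and non-negative, so one may divide to obtain
\[
q_i^\eta \;\geq\; \frac{\pi_1 \eta}{\log(1/q_{i+1}^\eta)},
\]
which is the first assertion. For the second, whenever $i \leq J(n,\eta)-1$ one has $i+1 \leq J(n,\eta)$, so by the very definition of $J(n,\eta)$, $q_{i+1}^\eta \geq 1/n$, i.e.\ $\log(1/q_{i+1}^\eta) \leq \log n$; substituting this into the inequality above yields $q_i^\eta \geq \pi_1 \eta / \log n$.

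There is essentially no obstacle here: the only delicate point is to check that the elementary bound $1/(1-q_i^\eta) \leq 2$ is valid from the very first index, which uses $q_0^\eta = 1/2$ together with the monotonicity already proved in Lemma~\ref{summarypropertiesqi}(i). Everything else is algebraic manipulation of the one-step recurrence.
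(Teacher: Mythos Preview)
Your proof is correct and follows essentially the same approach as the paper: bound the denominator of the recurrence below by $1-q_i^\eta$, use $q_i^\eta \leq 1/2$ to get $q_{i+1}^\eta \leq e^{-\eta\pi_1/q_i^\eta}$, take logarithms, and then invoke the definition of $J(n,\eta)$ for the second part.
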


\begin{proof}
First, for any $i \geq 1$,
\begin{equation*}
q_i^\eta
    = \frac{q_{i-1}^\eta e^{-\eta \pi_1/ q_{i-1}^\eta}}{1-q_{i-1}^\eta + q_{i-1}^\eta e^{-\eta \pi_1/ q_{i-1}^\eta}}
    \leq \frac{q_{i-1}^\eta}{1-q_{i-1}^\eta} e^{-\eta \pi_1/ q_{i-1}^\eta}
    \leq e^{-\eta \pi_1/ q_{i-1}^\eta}
\end{equation*}
since $q_{i-1}^\eta \leq 1/2$, which leads to the first inequality.

For the second, recall that by definition, for any $i \leq J(n,\eta)$, $q_i^\eta \geq \frac{1}{n}$, and use the first inequality.
\end{proof}



\begin{lemma}\label{decroissanceratioqi}
Let $\delta > \eta > 0$. Then, the sequence $ \displaystyle\Big(\frac{q_i^\delta}{q_i^\eta}\Big)_{i \geq 0}$ is decreasing and tends to $0$ when $i$ tends to $+ \infty$.
\end{lemma}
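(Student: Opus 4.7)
The plan is to write a tractable formula for the ratio $r_i := q_i^\delta / q_i^\eta$ and then check (i) the decrease by algebra and monotonicity of an auxiliary function, and (ii) convergence to zero by a contradiction argument exploiting the dominant exponential term. Using the alternative form $q_{i+1}^\gamma = q_i^\gamma / \bigl((1-q_i^\gamma) e^{\gamma\pi_1/q_i^\gamma} + q_i^\gamma\bigr)$ implicit in~\eqref{functiong}, one obtains directly
\[ r_{i+1} = r_i \cdot \frac{h(q_i^\eta,\eta)}{h(q_i^\delta,\delta)}, \qquad h(q,\gamma) := (1-q)\,e^{\gamma\pi_1/q} + q. \]

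For monotonicity, I would factor the chain of inequalities $h(q_i^\eta,\eta) \leq h(q_i^\delta,\eta) \leq h(q_i^\delta,\delta)$. The second one is immediate because $\gamma \mapsto h(q,\gamma)$ is clearly non-decreasing. The first one requires $q \mapsto h(q,\gamma)$ to be non-increasing on $(0,1/2]$, which I would verify by computing
\[ \partial_q h(q,\gamma) = 1 - e^{\gamma\pi_1/q}\Bigl(1 + \tfrac{(1-q)\gamma\pi_1}{q^2}\Bigr), \]
a quantity clearly $\leq 0$ on $(0,1)$. Combined with $q_i^\delta \leq q_i^\eta$ (Lemma~\ref{summarypropertiesqi}(ii)), this yields $r_{i+1} \leq r_i$.

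For convergence, since $(r_i)_{i \geq 0}$ is non-increasing and bounded below by $0$, it has a limit $L \in [0,1]$. Assume $L > 0$ for contradiction. By Lemma~\ref{summarypropertiesqi}(i), both $q_i^\eta$ and $q_i^\delta$ tend to $0$, so $h(q,\gamma) \sim e^{\gamma\pi_1/q}$ as $q \to 0$ (the additive term $q$ and the factor $1-q$ are negligible), and
\[ \frac{h(q_i^\eta,\eta)}{h(q_i^\delta,\delta)} = \exp\Bigl(\tfrac{\eta\pi_1}{q_i^\eta} - \tfrac{\delta\pi_1}{q_i^\delta}\Bigr)(1+o(1)). \]
Writing $q_i^\delta = (L + o(1))\,q_i^\eta$, the exponent equals $\frac{\pi_1}{q_i^\eta}\bigl(\eta - \delta/L + o(1)\bigr)$. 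Since $L \leq 1 < \delta/\eta$, we have $\eta L - \delta < 0$, so the exponent tends to $-\infty$, forcing $r_{i+1}/r_i \to 0$ and hence $r_{i+1} \to 0$, which contradicts $r_{i+1} \to L > 0$.

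The only slightly subtle step is the monotonicity of $q \mapsto h(q,\gamma)$, but this is a one-line derivative computation, so I do not anticipate any serious obstacle. The asymptotic comparison in the convergence argument is equally routine once one observes that the exponential $e^{\gamma\pi_1/q}$ blows up while the other terms in $h$ stay bounded.
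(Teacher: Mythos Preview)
Your proof is correct and follows essentially the same route as the paper: both write $r_{i+1}/r_i$ via an auxiliary function (yours is the reciprocal of the paper's $h$), establish its monotonicity in each argument by a derivative computation, and exploit the dominant exponential $e^{\gamma\pi_1/q}$ for the limit. The only cosmetic difference is that the paper bounds $r_{i+1}/r_i \leq e^{-(\delta-\eta)\pi_1/q_i^\eta}\cdot(1+o(1))$ directly (using only $q_i^\eta$) and concludes without contradiction, whereas you run a contradiction argument on the limit $L$; both are fine.
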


\begin{proof}
Let $u_i := \displaystyle \frac{q_i^\delta}{q_i^\eta}$ for all $i \geq 0$. Then, by definition of $q_i^\eta$,
\begin{equation*}
    u_{i+1} = u_i \cdot  \frac{e^{-\delta \pi_1 / q_i^\delta}}{(1-q_i^\delta) + q_i^\delta e^{-\delta \pi_1/ q_i^\delta}} \cdot \frac{ (1-q_i^\eta) + q_i^\eta e^{-\eta \pi_1/ q_i^\eta} }{e^{-\eta \pi_1 / q_i^\eta}}.
\end{equation*}
Let $h(q,\gamma) \displaystyle := \frac{e^{-\gamma \pi_1 / q}}{(1-q) + q e^{-\gamma \pi_1/ q}}$.
Then, for all $0< q < 1$ and $\gamma > 0$,
\begin{align*}
    \frac{\partial h}{\partial \gamma} (q,\gamma) & = \frac{-\frac{\pi_1}{q}e^{-\gamma \pi_1/q}\Big(1-q+q e^{-\gamma \pi_1/q} \Big) - e^{-\gamma \pi_1/q} (-\pi_1 e^{-\gamma \pi_1/q} ) }{\Big(1-q+q e^{-\gamma \pi_1/q} \Big)^2}
    \\& = \frac{-\frac{\pi_1}{q}e^{-\gamma \pi_1/q}\Big(1-q\Big) }{\Big(1-q+q e^{-\gamma \pi_1/q} \Big)^2} < 0.
\end{align*}
And, for all $0< q < 1$ and $\gamma > 0$,
\begin{align*}
    \frac{\partial h}{\partial q} (q,\gamma) & = \frac{\frac{\gamma \pi_1}{q^2} e^{-\gamma \pi_1/q} \Big(1-q+q e^{-\gamma \pi_1/q} \Big) - e^{-\gamma \pi_1/q} \Big(-1 + e^{-\gamma \pi_1/q} (1+ \frac{ \gamma \pi_1}{q})\Big)}{\Big(1-q+q e^{-\gamma \pi_1/q} \Big)^2}
    \\& = \frac{\frac{\gamma \pi_1}{q^2} e^{-\gamma \pi_1/q} (1-q) -e^{-\gamma \pi_1/q}(-1 + e^{-\gamma \pi_1/q}) }{\Big(1-q+q e^{-\gamma \pi_1/q} \Big)^2}
    \\& = e^{-\gamma \pi_1/q} \frac{\frac{\gamma \pi_1}{q^2}  (1-q) + 1 - e^{-\gamma \pi_1/q} }{\Big(1-q+q e^{-\gamma \pi_1/q} \Big)^2} > 0.
\end{align*}
Hence $h$ is decreasing in $\gamma$ and increasing in $q$.
Let $i \geq 0$. 
Since $\delta > \eta$, $q_i^\delta \leq q_i^\eta$ and $u_i > 0$,
\begin{equation*}
0 < \frac{u_{i+1}}{u_i}
    = \frac{h(q_i^\delta,\delta)}{h(q_i^\eta,\eta)}
    \leq \frac{h(q_i^\eta,\delta)}{h(q_i^\eta,\eta)}
\begin{cases}
    < \frac{h(q_i^\eta,\eta)}{h(q_i^\eta,\eta)} = 1 \\
    = \frac{e^{-\delta \pi_1/ q_i^\eta}}{1-q_i^\eta + q_i^\eta e^{-\delta \pi_1/q_i^\eta}} \cdot \frac{1-q_i^\eta + q_i^\eta e^{-\eta \pi_1/q_i^\eta}}{{e^{-\eta \pi_1/ q_i^\eta }}}
    \sim e^{-\frac{(\delta-\eta)}{q_i^\eta} \pi_1} 
    \longrightarrow 0
\end{cases}
\end{equation*}
since $q_i^\eta \longrightarrow 0$ when $i \to + \infty$. Therefore, $(u_i)_{i \geq 1}$ is decreasing and tends to $0$.
\end{proof}

\begin{lemma}
\label{encadrementI(eta)}
Let $\eta >0$ such that $\eta \pi_1 < 1$. For all $i \leq I(\eta)$,
\begin{equation}
\label{qilineaire}
\frac{1}{2}-i \eta \pi_1
    \leq  q_i^{\eta}
    \leq 
        \frac{1}{2} - i \frac{\eta \pi_1}{4}.
\end{equation}
Therefore,
\begin{equation*}
\frac{1}{2 \eta \pi_1} - 2
    \leq I(\eta)
\end{equation*}
and when $\eta \pi_1 \leq \frac{1}{2}$,
\begin{equation*}
I(\eta)
    \leq \frac{2}{\eta \pi_1} - 4.
\end{equation*}
\end{lemma}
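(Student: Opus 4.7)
The plan is to reduce the whole statement to a tight two-sided control of the per-step decrement $q_i^\eta - q_{i+1}^\eta$ on the range $i \leq I(\eta)$, where by definition $q_i^\eta \geq \eta \pi_1$, i.e.\ $x_i := \eta\pi_1/q_i^\eta \leq 1$. Starting from the defining recurrence for $q_{i+1}^\eta$, a direct algebraic simplification yields the clean identity
\begin{equation*}
q_i^\eta - q_{i+1}^\eta = \frac{(1-q_i^\eta)\, z_i}{1-z_i}, \qquad z_i := q_i^\eta\bigl(1 - e^{-x_i}\bigr).
\end{equation*}
On $[0,1]$, the elementary inequalities $x/2 \leq 1-e^{-x} \leq x$ hold (from Taylor/convexity), so multiplying by $q_i^\eta$ bracket $z_i$ between $\eta\pi_1/2$ and $\eta\pi_1$.

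For the upper bound on the decrement — the only somewhat delicate step — I would use the comparison $z_i \leq q_i^\eta$ (immediate from $1-e^{-x_i} \leq 1$) to get $1-z_i \geq 1-q_i^\eta > 0$, hence $(1-q_i^\eta)/(1-z_i) \leq 1$, and therefore $q_i^\eta - q_{i+1}^\eta \leq z_i \leq \eta\pi_1$. Telescoping from $q_0^\eta = 1/2$ yields the lower bound $q_i^\eta \geq \tfrac{1}{2} - i\eta\pi_1$. For the lower bound on the decrement, Lemma~\ref{summarypropertiesqi}(i) gives $q_i^\eta \leq 1/2$, hence $1-q_i^\eta \geq 1/2$; combined with $1-z_i \leq 1$ and $z_i \geq \eta\pi_1/2$ this gives $q_i^\eta - q_{i+1}^\eta \geq (1-q_i^\eta) z_i \geq \eta\pi_1/4$. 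Telescoping again yields $q_i^\eta \leq \tfrac{1}{2} - i\eta\pi_1/4$.

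To deduce the bounds on $I(\eta)$: the lower bound on $q_i^\eta$ ensures $q_i^\eta \geq \eta\pi_1$ whenever $i \leq \tfrac{1}{2\eta\pi_1} - 1$, so $I(\eta) \geq \lfloor \tfrac{1}{2\eta\pi_1} - 1 \rfloor \geq \tfrac{1}{2\eta\pi_1} - 2$. Conversely, plugging $i = I(\eta)$ into the upper bound and using $q_{I(\eta)}^\eta \geq \eta\pi_1$ gives $\eta\pi_1 \leq \tfrac{1}{2} - I(\eta) \eta\pi_1/4$, which rearranges to $I(\eta) \leq \tfrac{2}{\eta\pi_1} - 4$ (the assumption $\eta\pi_1 \leq 1/2$ being what makes this nontrivial). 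The one trap to avoid is the sign comparison $z_i \leq q_i^\eta$ in the upper bound on the decrement: a naive estimate that bounds the denominator $1-z_i$ only by $1-\eta\pi_1$ would produce $q_i^\eta - q_{i+1}^\eta \leq \eta\pi_1/(1-\eta\pi_1)$, too weak to recover the clean constant $1$ in the lower bound $q_i^\eta \geq \tfrac{1}{2} - i\eta\pi_1$; exploiting $z_i \leq q_i^\eta$ is what saves it.
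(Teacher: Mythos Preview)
Your proof is correct and follows essentially the same route as the paper: write the decrement as $q_i-q_{i+1}=\dfrac{(1-q_i)z_i}{1-z_i}$ (the paper keeps this as $\dfrac{q_i(1-q_i)(1-e^{-x_i})}{1-q_i+q_ie^{-x_i}}$, which is the same expression), bracket it between $\eta\pi_1/4$ and $\eta\pi_1$, telescope, and read off the bounds on $I(\eta)$. The only cosmetic differences are your $z_i$ notation and your direct use of $1-e^{-x}\ge x/2$ on $[0,1]$, whereas the paper reaches the same $\eta\pi_1/4$ via $1-e^{-x}\ge x-x^2/2$ and then $x\le 1$; note also that your upper bound on the decrement (hence $q_i\ge \tfrac12-i\eta\pi_1$) does not actually use $x_i\le 1$, which is what legitimises applying it at $i=\lfloor \tfrac{1}{2\eta\pi_1}-1\rfloor$ when deducing the lower bound on $I(\eta)$.
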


\begin{proof}
Let's prove~\eqref{qilineaire} first, starting with the lower bound. For all $i$,
\begin{align*}
q_i^{\eta} - q_{i+1}^{\eta}
    = \frac{q_i^{\eta}(1-q_i^\eta) (1 - e^{-\eta \pi_1 / q_i^{\eta}})}{1-q_i^{\eta}+ q_i^{\eta} e^{-\eta \pi_1 / q_i^{\eta}}}
    \leq q_i^{\eta} (1 - e^{-\eta \pi_1 / q_i^{\eta}})
    \leq q_i^{\eta} \frac{\eta \pi_1}{q_i^{\eta}}
    = \eta \pi_1
\end{align*}
since $\frac{1-q_i^\eta}{1-q_i^{\eta}+ q_i^{\eta} e^{-\eta \pi_1 / q_i^{\eta}} } \leq 1$ and $1 - e^{-x} \leq x$.
Thus, by summation,
\begin{equation*}
    q_i^{\eta} \geq \frac{1}{2}-\eta \pi_1 i.
\end{equation*}
Conversely, for all $i \leq I(\eta)$,
\begin{align*}
q_{i}^{\eta}-q_{i+1}^{\eta}
    &= \frac{q_i^{\eta}(1-q_i^\eta) (1- e^{-\eta \pi_1 / q_i^{\eta}})}{1 - q_i^{\eta} + q_i^{\eta} e^{-\eta \pi_1 / q_i^{\eta}})} \\
    &\geq (1-q_i^\eta) q_i^{\eta} (1- e^{-\eta \pi_1 / q_i^{\eta}}) \\
    &\geq (1-q_i^\eta) (\eta \pi_1 - \frac{1}{2} \frac{(\eta \pi_1)^2}{q_i^\eta}) \quad \text{since $1 - e^{-x} \geq x - x^2/2$} \\
    &\geq (1-\frac{1}{2}) (\eta\pi_1 - \frac{\eta\pi_1}{2}) \quad \text{since $\eta \pi_1 \leq q_i^\eta \leq 1/2$ for $i \leq I(\eta)$} \\
    &= \frac{\eta \pi_1}{4}.
\end{align*}
The upper bound on $q_i^\eta$ follows by summation.
For the bracketing $I(\eta)$, by definition of $I(\eta)$ and~\eqref{qilineaire},
\begin{equation*}
\eta \pi_1
    \geq q_{I(\eta)+1}^{\eta}
    \geq \frac{1}{2}-(I(\eta)+1) \eta \pi_1
\end{equation*}
and when $\eta \pi_1 \leq 1/2$,
\begin{equation*}
\eta \pi_1
    \leq q_{I(\eta)}^{\eta}
    \leq 
        \frac{1}{2} - I(\eta) \frac{\eta \pi_1}{4}.
\end{equation*}
Therefore,
\begin{equation*}
\frac{1}{2 \eta \pi_1} - 2
    \leq I(\eta)
    \leq \frac{2}{\eta \pi_1} - 4.
\end{equation*}
\end{proof}

\begin{lemma}
\label{lipschitzianityJ(n,delta)}
Let $\delta$ and $\eta$ such that $\frac{1}{\pi_1}> \eta > \delta > 0$. Then for all $i \geq 0$,
\begin{equation*}
0 \leq q_{i}^\delta - q_{i}^\eta
    \leq (\eta - \delta) \frac{8}{\delta \pi_1}  \frac{(\frac{8}{\delta \pi_1})^i - 1}{\frac{8}{\delta \pi_1} - 1}.
\end{equation*}
\end{lemma}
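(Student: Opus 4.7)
The first inequality $q_i^\delta \geq q_i^\eta$ is immediate from Lemma~\ref{summarypropertiesqi}(ii), which states that $\gamma \mapsto q_i^\gamma$ is decreasing. For the second inequality, the plan is to proceed by induction on $i$, using the recursion $q_{i+1}^\gamma = g(\gamma, q_i^\gamma)$ with $g$ defined in~\eqref{functiong}. The base case $i=0$ holds trivially since $q_0^\delta = q_0^\eta = 1/2$, so both sides vanish.

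For the inductive step, I would insert the intermediate term $g(\delta, q_i^\eta)$ and write
\begin{equation*}
q_{i+1}^\delta - q_{i+1}^\eta
    = \bigl[g(\delta, q_i^\delta) - g(\delta, q_i^\eta)\bigr]
    + \bigl[g(\delta, q_i^\eta) - g(\eta, q_i^\eta)\bigr],
\end{equation*}
where both brackets are nonnegative (by the monotonicity properties of $g$ recalled in the proof of Lemma~\ref{summarypropertiesqi}). The first bracket is controlled by the mean-value theorem applied to $q \mapsto g(\delta, q)$ on $[q_i^\eta, q_i^\delta]$, and the second by $\gamma \mapsto g(\gamma, q_i^\eta)$ on $[\delta,\eta]$. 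So the job reduces to producing uniform bounds on the two partial derivatives of $g$.

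The key calculation is that for $\gamma>0$ and $q \in (0,1/2]$, writing $a = \gamma \pi_1$,
\begin{equation*}
\partial_q g(\gamma, q)
    = \frac{e^{-a/q}\bigl(1 + a(1-q)/q\bigr)}{\bigl(1 - q + qe^{-a/q}\bigr)^2},
    \qquad
    \partial_\gamma g(\gamma, q)
    = -\frac{\pi_1 e^{-a/q}(1-q)}{\bigl(1 - q + qe^{-a/q}\bigr)^2}.
\end{equation*}
Since $1 - q + qe^{-a/q} \geq 1-q \geq 1/2$, the denominator is $\geq 1/4$. Using $e^{-a/q} \leq 1$ and $(a/q)e^{-a/q} \leq 1/e \leq 1$, the numerator of $\partial_q g$ is at most $2$, so $\partial_q g(\gamma, q) \leq 8$. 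Similarly $|\partial_\gamma g(\gamma, q)| \leq 4\pi_1$. These two bounds yield the recursion
\begin{equation*}
q_{i+1}^\delta - q_{i+1}^\eta
    \leq 8\,(q_i^\delta - q_i^\eta) + 4\pi_1\,(\eta - \delta).
\end{equation*}

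With $A := 8/(\delta \pi_1)$, the hypothesis $\eta < 1/\pi_1$ (together with the fact that losses lie in $[0,1]$ and $\delta \pi_1 < 1$) gives $A \geq 8$ and $A \geq 4\pi_1$, so $D_{i+1} \leq A\,D_i + A\,(\eta - \delta)$ where $D_i := q_i^\delta - q_i^\eta$. Iterating from $D_0 = 0$ gives $D_i \leq (\eta-\delta)(A + A^2 + \cdots + A^i) = (\eta-\delta)A\frac{A^i - 1}{A-1}$, which is exactly the stated bound. The main obstacle is carrying out the derivative computation cleanly, but once the expressions for $\partial_q g$ and $\partial_\gamma g$ are in hand, the uniform bounds and the iteration are straightforward.
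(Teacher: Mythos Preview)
Your proof is correct and follows essentially the same approach as the paper: decompose $q_{i+1}^\delta - q_{i+1}^\eta$ via an intermediate value of $g$, bound the two partial derivatives of $g$ on $(0,1/2]$, and iterate the resulting linear recursion from $D_0=0$. The only cosmetic differences are that the paper inserts $g(\eta,q_i^\delta)$ rather than $g(\delta,q_i^\eta)$, and it bounds both partials directly by $8/(\delta\pi_1)$ (via $e^{-\delta\pi_1/q}/q \leq 1/(\delta\pi_1)$), whereas you first obtain the sharper absolute constants $\partial_q g \leq 8$ and $|\partial_\gamma g| \leq 4\pi_1$ and then relax them to $8/(\delta\pi_1)$ to match the stated form.
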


\begin{proof}
Let 
\begin{equation*}
    g_1(\delta,q) = \frac{q e^{-\delta \pi_1/q}}{1-q + q e^{-\delta \pi_1/q}}.
\end{equation*}
From Equation ~\eqref{partialeta}, for $\frac{1}{\pi_1}>\eta>0$ and $q \in (0,\frac{1}{2})$,
\begin{align*}
\left| \frac{\partial g_1(\eta,q)}{\partial \eta}\right|
    &= \frac{(1-q) \pi_1 e^{-\eta \pi_1/q} }{ (1-q + q e^{-\eta \pi_1/q})^2} \\& \leq \frac{\pi_1 e^{-\eta \pi_1/q}}{ 1-q + q e^{-\eta \pi_1/q}} \\
    &\leq 2 \pi_1 e^{-\eta \pi_1/q},
\end{align*}
since $1-q + q e^{-\eta \pi_k/q}\geq 1-q \geq \frac{1}{2}$.
And from Equation~\eqref{partialq}, for $\frac{1}{\pi_1}>\eta>0$ and $q \in (0,\frac{1}{2})$,
\begin{align*}
\frac{\partial g_1(\eta,q)}{\partial q}
    &= \frac{e^{-\eta \pi_1/q}(1 + \frac{(1-q)\eta \pi_1}{q}) }{(1-q+q e^{-\eta \pi_1/q})^2} \\
    &\leq \frac{e^{-\eta \pi_1/q}}{q} 4 (1+\eta \pi_1) \\
    & \leq 8 \frac{e^{-\eta \pi_1/q}}{q}.
\end{align*}
In particular, since $\pi_1 \leq 1 \leq \frac{1}{q}$, 
\begin{equation*}
    \max \left(\frac{\partial g_1(\eta,q)}{\partial q},\left| \frac{\partial g_1(\eta,q)}{\partial \eta}\right| \right) \leq 8 \frac{e^{-\delta \pi_1/ q}}{q} \leq \frac{8}{\delta \pi_1},
\end{equation*}
where we used $e^{x} \geq x$ applied to $x = \frac{\delta \pi_1}{q}$. Let $\eta > \delta$, then by the mean value theorem, for all $i \geq 0$,
\begin{align*}
    q_{i+1}^\delta - q_{i+1}^\eta &=  q_{i+1}^\delta - g_1( \eta,q_{i}^\delta) + g_1(\eta,q_{i}^\delta)- q_{i+1}^\eta
    \\& \leq 8 \frac{e^{-\delta \pi_1/ q_{i}^{\delta}}}{q_{i}^\delta} \left(q_{i}^\delta- q_{i}^\eta + \eta - \delta \right)
    \\& \leq \frac{8}{\delta \pi_1}(q_i^\delta - q_i^\eta + \eta - \delta).
\end{align*}
Therefore, since $q_0^\delta = q_0^\eta = \frac{1}{2}$, iterating this equation leads to: for all $i \geq 0$,
\begin{align*}
q_{i}^\delta - q_{i}^\eta
    &\leq \sum_{k=0}^{i-1} \left(\frac{8}{\delta \pi_1}\right)^{k+1} (\eta - \delta) \\
    &= (\eta - \delta) \frac{8}{\delta \pi_1} \frac{(\frac{8}{\delta \pi_1})^i - 1}{\frac{8}{\delta \pi_1} - 1}.
\end{align*}
The inequality $q_i^\delta - q_i^\eta \geq 0$ follows from the fact that $\delta \mapsto q_i^\eta$ is non-increasing, see Lemma~\ref{summarypropertiesqi} (ii).
\end{proof}

\subsection{For a decreasing learning rate}

In this section, we prove all the results concerning Section \ref{decreasinglearningratesection}.

\subsubsection{Proof of Proposition~\ref{choiceofTau}: choice of a stopping time.}
\label{proofofpropositionchoiceofTau}

Let $\eta_0 \in \Theta$ and $1 \leq k \leq K$.
\begin{equation*}
p_{k,t+1}^{\eta_n}
    = \frac{p_{k,t}^{\eta_n} e^{-\eta_n \pi_k / p_{k,t}^{\eta_n}}}{(1-p_{k,t}^{\eta_n})+p_{k,t}^{\eta_n} e^{-\eta_n \pi_k / p_{k,t}^{\eta_n}}} \one_{I_t = k}
    + \sum_{\underset{j \neq k}{j=1}}^K \frac{p_{k,t}^{\eta_n}}{(1-p_{j,t}^{\eta_n}) + p_{j,t}^{\eta_n}e^{-\eta_n \pi_j / p_{j,t}^{\eta_n}}} \one_{I_t=j}.
\end{equation*}
For any $q \in [0,1]$, $1-q + q e^{-\eta \pi_k/q} \leq 1$. Therefore, 
\begin{equation*}
p_{k,t+1}^{\eta_n}
    \geq p_{k,t}^{\eta_n} e^{-\eta_n \pi_k / p_{k,t}^{\eta_n}} \one_{I_t = k}
    + \sum_{\underset{j \neq k}{j=1}}^K p_{k,t}^{\eta_n} \one_{I_t=j}.
\end{equation*}
Since $e^{-\eta_n \pi_k / p_{k,t}^{\eta_n}} \leq 1$, 
\begin{equation*}
p_{k,t+1}^{\eta_n}
    \geq p_{k,t}^{\eta_n} e^{-\eta_n \pi_k / p_{k,t}^{\eta_n}} \one_{I_t = k}
    + \sum_{\underset{j \neq k}{j=1}}^K p_{k,t}^{\eta_n} e^{-\eta_n \pi_k / p_{k,t}^{\eta_n}} \one_{I_t=j}
    = p_{k,t}^{\eta_n} e^{-\eta_n \pi_k / p_{k,t}^{\eta_n}}
    \geq p_{k,t}^{\eta_n} (1-\frac{\eta_n \pi_k}{p_{k,t}^{\eta_n}}) = p_{k,t}^{\eta_n}- \eta_n \pi_k.
\end{equation*}
Summing from 1 to t, since $p_{k,1}^{\eta_n}= \frac{1}{K}$,
\begin{equation*}
p_{k,t}^{\eta_n} \geq \frac{1}{K} - \eta_n \pi_k t.
\end{equation*}
Hence, choosing $\displaystyle \Upsilon_n = \Big \lfloor (\frac{1}{K}- \varepsilon) \frac{n^\alpha}{R}\Big \rfloor$ implies that for all $\eta_0 \in \Theta$, $t \leq \Upsilon_n$ and $1 \leq k \leq K$,
\begin{equation*}
\varepsilon
    \leq \frac{1}{K} - \frac{R}{n^\alpha \pi_1} \pi_1 \Upsilon_n
    = \frac{1}{K}- R_n \pi_1 \Upsilon_n \leq \frac{1}{K}- \eta_n \pi_1 \Upsilon_n
    \leq \frac{1}{K}- \eta_n \pi_k \Upsilon_n
    \leq \frac{1}{K}- \eta_n \pi_k t \leq  p_{k,t}^{\eta_n}.
\end{equation*}

\subsubsection{Proof of Lemma \ref{lem_ptlipschitz}: lipschitz character of the updated probability.}\label{proofoflem_ptlipschitz}

Recall that, for $\eta_0 \in \Theta$, for all $1 \leq k \leq K$, for all $t \leq \Upsilon_n-1$,
\begin{equation*}
p_{k,t+1}^{\eta_n}
    = \frac{p_{k,t}^{\eta_n} e^{-\eta_n \pi_k / p_{k,t}^{\eta_n}}}{(1-p_{k,t}^{\eta_n})+p_{k,t}^{\eta_n} e^{-\eta_n \pi_k / p_{k,t}^{\eta_n}}} \one_{I_t = k}
    + \sum_{\underset{j \neq k}{j=1}}^K \frac{p_{k,t}^{\eta_n}}{(1-p_{j,t}^{\eta_n}) + p_{j,t}^{\eta_n}e^{-\eta_n \pi_j / p_{j,t}^{\eta_n}}} \one_{I_t=j}.
\end{equation*}
For all $1 \leq k \leq K$, denote by $h_k$ and $g_k$ the following functions:
\begin{equation*}
h_k(\eta,q) := \frac{1}{1-q + q e^{-\eta \pi_k/q}}
    \quad \text{and} \quad
    g_k(\eta,q) = q e^{-\eta \pi_k / q} h_k(\eta,q).
\end{equation*}
Then,
\begin{equation*}
p_{k,t+1}^{\eta_n}
    = g_k(\eta_n,p_{k,t}^{\eta_n}) \one_{I_t = k}+ \sum_{\underset{j \neq k}{j=1}}^K p_{k,t}^{\eta_n}h_j(\eta_n,p_{j,t}^{\eta_n}) \one_{I_t=j}.
\end{equation*}
Let $\delta_0, \delta_0' \in \Theta.$ Let $1 \leq k \leq K$. Then, for all $t \leq \Upsilon_n-1$,
\begin{equation}
\label{ecart}
|p_{k,t+1}^{\delta_n}- p_{k,t+1}^{\delta_n'}|
    \leq | g_k(\delta_n,p_{k,t}^{\delta_n})- g_k(\delta_n',p_{k,t}^{\delta_n'}) | \one_{I_t = k}
    + \sum_{\underset{j \neq k}{j=1}}^K |p_{k,t}^{\delta_n}h_j(\delta_n,p_{j,t}^{\delta_n})-p_{k,t}^{\delta_n'}h_j(\delta_n',p_{j,t}^{\delta_n'})| \one_{I_t=j}.
\end{equation}

The following lemma controls the derivatives of $h_k$ and $g_k$ and is proved at the end of the current section. Recall that $\pi_1 \geq \pi_k \geq 0$ for all $k$.
\begin{lemma}
\label{Lipschitzianity}
Let $k \in \{1, \dots, K\}$.
For all $\eta > 0$ such that $\eta \pi_1 < 1$,
\begin{equation*}
\begin{cases}
    0 \leq h_k(\eta,q) \leq \frac{1}{1 - \eta \pi_1} \\
    0 \leq \frac{\partial h_k}{\partial q}(\eta,q) \leq \frac{1}{(1 - \eta \pi_1)^2} (\frac{\eta \pi_1}{q} \wedge 1)^2 \\
    0 \leq \frac{\partial h_k}{\partial \eta}(\eta,q) \leq \frac{\pi_1}{1 - \eta \pi_1} \\
    - \pi_1 \leq \frac{\partial g_k}{\partial \eta}(\eta,q) \leq 0 \\
    0 \leq \frac{\partial g_k}{\partial q}(\eta,q) \leq \frac{1}{(1 - \eta \pi_1)^2}.
\end{cases}
\end{equation*}
\end{lemma}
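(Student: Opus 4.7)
My plan is to compute the five partial derivatives in closed form and then bound them using a handful of elementary estimates on the common denominator $D(\eta,q) := 1 - q + q e^{-\eta \pi_k/q}$.

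Before differentiating, I would record four basic facts that drive everything: (i) $D \geq 1 - \eta\pi_k \geq 1 - \eta\pi_1$, from $1 - e^{-x} \leq x$ with $x = \eta\pi_k/q$, which already gives $h_k \leq 1/(1-\eta\pi_1)$; (ii) $D - e^{-\eta\pi_k/q} = (1-q)(1 - e^{-\eta\pi_k/q}) \geq 0$, hence $e^{-\eta\pi_k/q}/D \leq 1$; (iii) $D \geq 1 - q$, trivially; and (iv) the identity $g_k = 1 - (1-q) h_k$, which forces $g_k \in [0,1]$ and simplifies several product-rule manipulations.

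For $h_k$, direct differentiation gives $\partial_\eta h_k = \pi_k e^{-\eta\pi_k/q}/D^2$ and $\partial_q h_k = (1 - e^{-x}(1+x))/D^2$ with $x = \eta\pi_k/q$; both are non-negative, the second via $e^x \geq 1 + x$. The sharp bound $\partial_\eta h_k \leq \pi_1/(1-\eta\pi_1)$ is obtained by splitting $\pi_k e^{-\eta\pi_k/q}/D^2 = (e^{-\eta\pi_k/q}/D) \cdot (\pi_k/D)$ and applying (ii) then (i) --- without (ii) one would only reach the looser $\pi_1/(1-\eta\pi_1)^2$. For $\partial_q h_k$, the numerator $1 - e^{-x}(1+x)$ is increasing in $x$ (its derivative is $xe^{-x}$), so after replacing $\pi_k$ by $\pi_1$ the task reduces to the scalar inequality $1 - e^{-y}(1+y) \leq y^2 \wedge 1$ at $y = \eta\pi_1/q$: the $\wedge 1$ part is trivial, and the $y^2$ part follows from $\phi(y) := y^2 + e^{-y}(1+y) - 1$ satisfying $\phi(0) = 0$ and $\phi'(y) = y(2 - e^{-y}) \geq 0$.

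For $g_k = q e^{-\eta\pi_k/q} h_k$, the product rule combined with the formulas above telescopes to $\partial_\eta g_k = -\pi_k(1-q) e^{-\eta\pi_k/q}/D^2$ and $\partial_q g_k = (e^{-\eta\pi_k/q}/D^2) \bigl[1 + x(1-q)\bigr]$ after collecting terms. The sign of $\partial_\eta g_k$ is then manifest, and $|\partial_\eta g_k| = \pi_k \cdot ((1-q)/D) \cdot (e^{-\eta\pi_k/q}/D) \leq \pi_1$ follows from (iii) and (ii). For $\partial_q g_k$, absorbing $1/D^2 \leq 1/(1-\eta\pi_1)^2$ reduces the claim to $e^{-x}[1 + x(1-q)] \leq e^{-x}(1+x) \leq 1$, whose second half is classical (the function $e^{-x}(1+x)$ is decreasing from $1$, since its derivative is $-xe^{-x}$). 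The only genuinely mild subtlety in the whole argument is obtaining the first-power exponent in the bound on $\partial_\eta h_k$, which is exactly what fact (ii) delivers and which is needed downstream to keep constants reasonable in Lemma~\ref{lem_ptlipschitz}.
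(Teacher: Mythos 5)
Your proof is correct and follows essentially the same route as the paper's: compute the five derivatives explicitly and bound them using $D \geq 1-\eta\pi_1$, $D \geq e^{-\eta\pi_k/q}$, $D \geq 1-q$ and $e^{-x}(1+x) \leq 1$, including the same key splitting $\pi_k e^{-x}/D^2 = (e^{-x}/D)(\pi_k/D)$ to get the first-power denominator in the bound on $\partial_\eta h_k$. The only (harmless) cosmetic differences are your use of the identity $g_k = 1-(1-q)h_k$ to shortcut the product rule, and your monotone auxiliary function $\phi$ in place of the paper's two-case argument ($\eta\pi_k/q \leq 1$ versus $>1$) for the bound on $\partial_q h_k$.
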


Let us now consider the two cases.
Let $n_0$ be such that $R n_0^{-\alpha} \leq \varepsilon^2$, so that $\eta_n \pi_1 \leq \varepsilon^2 \leq \varepsilon$ for all $n \geq n_0$ and $\eta_0 \in \Theta$, and let $R_n = \frac{R}{n^\alpha \pi_1}$.
Recall that $\Upsilon_n = (\frac{1}{K} - \varepsilon) \frac{n^\alpha}{R}$ is chosen such that $p_{k,t}^{\eta_n} \geq \varepsilon$ for all $k$, $t \leq \Upsilon_n$ and $\eta_0 \in \Theta$ with Proposition~\ref{choiceofTau}.
\begin{itemize}
\item If $I_t = j \neq k$, then, by the triangle inequality,
\begin{align*}
    |p_{k,t+1}^{\delta_n}- p_{k,t+1}^{\delta_n'}| 
    &\leq |p_{k,t}^{\delta_n}h_j(\delta_n,p_{j,t}^{\delta_n})-p_{k,t}^{\delta_n}h_j(\delta_n',p_{j,t}^{\delta_n'})|
    + |p_{k,t}^{\delta_n}h_j(\delta_n',p_{j,t}^{\delta_n'})-p_{k,t}^{\delta_n'}h_j(\delta_n',p_{j,t}^{\delta_n'})| \\
    &\leq |h_j(\delta_n,p_{j,t}^{\delta_n})-h_j(\delta_n',p_{j,t}^{\delta_n'})|
    + h_j(\delta_n',p_{j,t}^{\delta_n'}) |p_{k,t}^{\delta_n}- p_{k,t}^{\delta_n'}| \\ 
    &\leq |h_j(\delta_n,p_{j,t}^{\delta_n})-h_j(\delta_n,p_{j,t}^{\delta_n'})|
    + |h_j(\delta_n,p_{j,t}^{\delta_n'})-h_j(\delta_n',p_{j,t}^{\delta_n'})|
    + h_j(\delta_n',p_{j,t}^{\delta_n'}) \|p_{t}^{\delta_n}- p_{t}^{\delta_n'}\|_{\infty}.
\end{align*}

Then, by Lemma~\ref{Lipschitzianity} and the mean value theorem, for all $n \geq n_0$,
\begin{align}
\nonumber
|p_{k,t+1}^{\delta_n}- p_{k,t+1}^{\delta_n'}|
\label{majorationpourkneqj}
    &\leq \left( \frac{1}{1 - R_n \pi_1} + \frac{(R_n \pi_1)^2}{\varepsilon^2 (1 - R_n \pi_1)^2} \right) \|p_{t}^{\delta_n}- p_{t}^{\delta_n'}\|_{\infty}
        + \frac{\pi_1}{1 - R_n \pi_1} |\delta_n-\delta_n'|.
\end{align}

\item If $I_t = k$, then, by the triangle inequality, Lemma~\ref{Lipschitzianity} and the mean value theorem, for all $n \geq n_0$,
\begin{align*}
|p_{k,t+1}^{\delta_n}- p_{k,t+1}^{\delta_n'}|
    &\leq | g_k(\delta_n,p_{k,t}^{\delta_n}) - g_k(\delta_n',p_{k,t}^{\delta_n}) |
    + | g_k(\delta_n',p_{k,t}^{\delta_n}) - g_k(\delta_n',p_{k,t}^{\delta_n'}) | \\
    &\leq \pi_1 |\delta_n-\delta_n'| 
    + \frac{1}{(1 - R_n \pi_1)^2} |p_{k,t}^{\delta_n'}-p_{k,t}^{\delta_n}| \\
    &\leq \pi_1 |\delta_n-\delta_n'|
    + \frac{1}{(1 - R_n \pi_1)^2} \|p_{t}^{\delta_n'}-p_{t}^{\delta_n}\|_{\infty}.
\end{align*}
\end{itemize}

Thus, almost surely, for all $n \geq n_0$ and $t \leq \Upsilon_n-1$,
\begin{equation*}
\|p_{t+1}^{\delta_n'}-p_{t+1}^{\delta_n}\|_{\infty}
    \leq \frac{1 - R_n \pi_1 + (\frac{R_n \pi_1}{\varepsilon})^2}{(1 - R_n \pi_1)^2} \|p_{t}^{\delta_n'}-p_{t}^{\delta_n}\|_{\infty}
    + \frac{\pi_1}{1 - R_n \pi_1} |\delta_n - \delta_n'|.
\end{equation*}
Now, $\|p_{1}^{\delta_n'}-p_{1}^{\delta_n}\|_\infty = 0$, so that
\begin{equation*}
\|p_{t}^{\delta_n'}-p_{t}^{\delta_n}\|_{\infty}
    \leq \frac{\pi_1}{1 - R_n \pi_1} |\delta_n - \delta_n'| \sum_{s=1}^{t-1} \left(\frac{1 - R_n \pi_1 + (\frac{R_n \pi_1}{\varepsilon})^2}{(1 - R_n \pi_1)^2}\right)^{s-1}.
\end{equation*}
The latter quantity is an increasing function of $t$, thus for all $t \leq \Upsilon_n$,
\begin{equation*}
\|p_{t}^{\delta_n'}-p_{t}^{\delta_n}\|_{\infty}
    \leq \frac{\pi_1}{1 - R_n \pi_1} |\delta_n - \delta_n'| \frac{\left(\frac{1 - R_n \pi_1 + (\frac{R_n \pi_1}{\varepsilon})^2}{(1 - R_n \pi_1)^2}\right)^{\Upsilon_n-1} - 1}{\frac{1 - R_n \pi_1 + (\frac{R_n \pi_1}{\varepsilon})^2}{(1 - R_n \pi_1)^2} - 1}.
\end{equation*}
Now,
\begin{equation*}
\frac{1 - R_n \pi_1 + (\frac{R_n \pi_1}{\varepsilon})^2}{(1 - R_n \pi_1)^2}
    = 1 + R_n \pi_1 + u.
\end{equation*}
where $0 \leq u \leq c (R_n \pi_1)^2$ for some numerical constant $c$, for instance $c=\frac{76}{9}$, since $R_n \pi_1 \leq \varepsilon^2 \leq \frac{1}{4}$.
Therefore,
\begin{equation*}
\left(\frac{1 - R_n \pi_1 + (\frac{R_n \pi_1}{\varepsilon})^2}{(1 - R_n \pi_1)^2}\right)^{\Upsilon_n-1}
    = \exp((\Upsilon_n-1) \log(1 + R_n \pi_1 + u))
    = \exp(\Upsilon_n R_n \pi_1 + \tilde{u})
\end{equation*}
where $|\tilde{u}| \leq R_n \pi_1 + \tilde{c} \Upsilon_n (R_n \pi_1)^2$ for some numerical constant $\tilde{c}$, for instance $\tilde{c} = 22$.
Recall that $\Upsilon_n = (\frac{1}{K}-\varepsilon)\displaystyle \frac{n^\alpha}{R} = \frac{\frac{1}{K}-\varepsilon}{R_n \pi_1}$, so that $|\tilde{u}| \leq 12 R_n \pi_1 \leq 3$ and
\begin{equation*}
\left(\frac{1 - R_n \pi_1 + (\frac{R_n \pi_1}{\varepsilon})^2}{(1 - R_n \pi_1)^2}\right)^{\Upsilon_n-1}
    = \exp (\frac{1}{K}-\varepsilon) \exp(\tilde{u}).
\end{equation*}
All in all,
\begin{align*}
\|p_{t}^{\delta_n'}-p_{t}^{\delta_n}\|_{\infty}
    &\leq \frac{\pi_1 |\delta_n - \delta_n'|}{R_n \pi_1 + u} \left( \exp( \frac{1}{K} - \varepsilon ) \exp(\tilde{u}) - 1 \right) \\
    &\leq c' \frac{|\delta_0 - \delta_0'|}{R}
\end{align*}
for some numerical constant $c'$, for instance $c' = 11$.

\begin{proof}[Proof of Lemma~\ref{Lipschitzianity}]
First of all, using $e^{-x} \geq 1-x$ and $1 - \eta \pi_k \geq 1 - \eta \pi_1 > 0$,
\begin{equation*}
h_k(\eta,q) \leq \frac{1}{1-\eta \pi_k} \leq \frac{1}{1-\eta \pi_1}.
\end{equation*}
Let's study the derivatives of $h_k$:
\begin{equation*}
\frac{\partial h_k}{\partial q}(\eta,q) = \frac{1-e^{-\eta \pi_k/q}(1+\frac{\eta \pi_k}{q})}{(1-q + q e^{-\eta \pi_k/q})^2} \geq 0
\end{equation*}
since $e^{-x}(1+x) \leq 1$, and
\begin{equation*}
\frac{\partial h_k}{\partial q}(\eta,q)
    \leq 
\begin{cases}
    \frac{1 - (1 - \frac{\eta \pi_k}{q})(1 + \frac{\eta \pi_k}{q})}{(1-q + q e^{-\eta \pi_k/q})^2}
    \leq \frac{(\frac{\eta \pi_1}{q})^2}{(1-\eta \pi_1)^2} \quad \text{if } \eta \pi_k/q \leq 1 \\
    \frac{1}{(1-q + q e^{-\eta \pi_k/q})^2}
    \leq \frac{1}{(1-\eta \pi_1)^2} \quad \text{if } \eta \pi_k/q > 1
\end{cases}
\end{equation*}
Similarly, since $1 - q + q e^{-x} \geq (1-q)e^{-x} + qe^{-x} = e^{-x}$ for any $x \geq 0$,
\begin{equation*}
\frac{\partial h_k}{\partial \eta}(\eta,q)
    = \frac{\pi_k e^{-\eta \pi_k/q} }{(1-q + q e^{-\eta \pi_k/q})^2}
    \leq \frac{\pi_k}{1-\eta \pi_k}
    \leq \frac{\pi_1}{1-\eta \pi_1}.
\end{equation*}
Now let's study the derivatives of $g_k$.
For all $q \in (\varepsilon,1]$,
\begin{equation}
\label{partialeta}
\frac{\partial g_k(\eta,q)}{\partial \eta}
    = \frac{-(1-q) \pi_k e^{-\eta \pi_k/q} }{ (1-q + q e^{-\eta \pi_k/q})^2} \leq 0.
\end{equation}
Thus,
\begin{equation*}
-\frac{\partial g_k(\eta,q)}{\partial \eta}
    \leq \frac{(1-q) \pi_k}{1-q +q e^{-\eta \pi_k/q}}
    \leq \pi_k
    \leq \pi_1,
\end{equation*}
and
\begin{align}
\nonumber
\frac{\partial g_k(\eta,q)}{\partial q}
    &= e^{-\eta \pi_k/q} h_k(\eta,q) + q \frac{\eta \pi_k}{q^2} e^{-\eta \pi_k/q} h_k(\eta,q) + q e^{-\eta \pi_k/q} \frac{\partial h_k(\eta,q)}{\partial q} \\
\nonumber
    &= \frac{e^{-\eta \pi_k/q}}{(1-q+q e^{-\eta \pi_k/q})^2} \left( (1 + \frac{\eta \pi_k}{q})(1-q+q e^{-\eta \pi_k/q}) + q(1 - e^{-\eta \pi_k/q}(1 + \frac{\eta \pi_k}{q})) \right)
    \\
\label{partialq}
    &= \frac{e^{-\eta \pi_k/q}(1 + \frac{(1-q)\eta \pi_k}{q}) }{(1-q+q e^{-\eta \pi_k/q})^2} \geq 0.
\end{align}
Now, since $e^x \geq 1 + x$, $1 + \frac{(1-q)\eta \pi_k}{q} \leq 1+ \frac{\eta \pi_k}{q} \leq e^{\eta \pi_k /q}$, so
\begin{equation*}
\frac{\partial g_k(\eta,q)}{\partial q}
    \leq \frac{1}{(1-q+q e^{-\eta \pi_k/q})^2}
    \leq \frac{1}{(1-\eta \pi_k)^2}
    \leq \frac{1}{(1-\eta \pi_1)^2}.
\end{equation*}
\end{proof}

\subsubsection{Proof of Proposition~\ref{hypothesebaraud}} 
\label{proofofbaraud}

For $\delta_0 \in \Theta$, recall that $\displaystyle X_{\delta_n} = \ell_{n,\varepsilon}(\delta_0) - \sum_{t=1}^{\Upsilon_n} \sum_{k=1}^K \log(p_{k,t}^{\delta_n}) p_{k,t}^{\eta_n}$. For $t \geq 1$, let
\begin{equation*}
Y_t^{\delta_n}
    := \sum_{k=1}^K \log(p_{k,t}^{\delta_n}) \one_{I_t = k}
\end{equation*}
and let
\begin{equation*}
C_{t}^{\delta_n}
    = \EE_{\eta_n}[Y_t^{\delta_n}|\F_{t}]
    = \sum_{k=1}^K \log(p_{k,t}^{\delta_n}) p_{k,t}^{\eta_n}
\end{equation*}
be its compensator. Then, $X_{\delta_n} = \sum_{t=1}^{\Upsilon_n} (Y_t^{\delta_n}-C_t^{\delta_n})$. Since $\widehat{\eta}_0$ is the maximum likelihood estimator, the following inequalities hold:
\begin{align*}
\sum_{t=1}^{\Upsilon_n} (C_t^{\eta_n}-C_t^{\widehat{\eta}_n})
    &\leq \underbrace{\ell_{n,\varepsilon}(\widehat{\eta}_0) - \ell_{n,\varepsilon}(\eta_0)}_{\geq 0}
    - \sum_{t=1}^{\Upsilon_n} (C_t^{\widehat{\eta}_n}-C_t^{\eta_n}) \\
    &= \sum_{t=1}^{\Upsilon_n} (Y_t^{\widehat{\eta}_n}-Y_t^{\eta_n}-(C_t^{\widehat{\eta}_n}-C_t^{\eta_n})) \\
    &\leq \sup_{\delta_0 \in \Theta} \sum_{t=1}^{\Upsilon_n} (Y_t^{\delta_n}-Y_t^{\eta_n}-(C_t^{\delta_n}-C_t^{\eta_n})).
\end{align*}
Let $\delta_0, \delta_0' \in \Theta$. Let $Z_1 = 0$ and for $2 \leq t \leq \Upsilon_n$, let $Z_t = \displaystyle \sum_{s=1}^{t-1}(Y_s^{\delta_n} -Y_s^{\delta_n'}-(C_s^{\delta_n}-C_s^{\delta_n'})) $. Then, $(Z_t)_t$ is an $(\F_t)_t$-martingale. Indeed,
\begin{equation*}
\EE_{\eta_n}[Z_{t+1}|\F_t]
    = \underbrace{\EE_{\eta_n}[Y_t^{\delta_n}-Y_{t}^{\delta_n'}|\F_t] - (C_t^{\delta_n}-C_{t}^{\delta_n'})}_{=0}
    + \sum_{s=1}^{t-1} (Y_s^{\delta_n}-Y_s^{\delta_n'}-(C_s^{\delta_n}-C_s^{\delta_n'}))
    = Z_t.
\end{equation*}
Let $A_t^k = \sum_{s=2}^t \EE_{\eta_n}[(Z_s-Z_{s-1})^k|\F_{s-1}]$ if $t \geq 2$ and 0 if $t=1$. Let $\lambda \geq 0$. From Lemma~3.3 in \cite{Houdre}, the sequence $(\Ecal_t)_{t \geq 1}$ defined for all integers $t \geq 1$ by
\begin{equation*}
\Ecal_t = \exp (\lambda Z_t - \sum_{k \geq 2} \frac{\lambda^k}{k!} A_t^k )
\end{equation*}
is a supermartingale with respect to the filtration $(\F_t)_t$. As a result, for any $t \geq 1$,
\begin{equation*}
\EE_{\eta_n}[\Ecal_t]
    \leq \EE_{\eta_n}[ \Ecal_1]
    = 1.
\end{equation*}
Therefore, for any $t \geq 1$,
\begin{equation}
\label{supermartingale}
\EE_{\eta_n}[\exp (\lambda Z_t)]
    \leq \EE_{\eta_n} [ \exp ( \sum_{k \geq 2} \frac{\lambda^k}{k!} A_t^k ) ]. 
\end{equation}

Let $t \geq 2$. When there exists a constant $C_Z$ such that $|Z_s-Z_{s-1}| \leq C_Z$ for all $2 \leq s \leq t$ almost surely, it holds for all $k \geq 2$,
\begin{equation}
\label{separationnormes}
|A_t^k|
    \leq \sum_{s=2}^t \EE_{\eta_n}[(Z_s-Z_{s-1})^{2}|\F_{s-1}] C_Z^{k-2}.
\end{equation}
For all $u \geq 2$,
\begin{equation*}
Z_u - Z_{u-1}
    = \sum_{k=1}^K \log (\frac{p_{k,u}^{\delta_n}}{p_{k,u}^{\delta_n'}})(\one_{I_u=k}-p_{k,u}^{\eta_n}).
\end{equation*}
Hence,
\begin{equation*}
|Z_u - Z_{u-1}|
    \leq \sum_{k=1}^K |\log (\frac{p_{k,u}^{\delta_n}}{p_{k,u}^{\delta_n'}})| \cdot |\one_{I_u=k}-p_{k,u}^{\eta_n}|
    \leq \left(\sup_{1 \leq k \leq K} |\log (\frac{p_{k,u}^{\delta_n}}{p_{k,u}^{\delta_n'}})|\right)
    \sum_{k=1}^K |\one_{I_u=k}-p_{k,u}^{\eta_n}|.
\end{equation*}
Then, using the lipschitzianity of the $\log$ function on $[\varepsilon,1]$ and Lemma~\ref{lem_ptlipschitz},
\begin{equation*}
|Z_u - Z_{u-1}|
    \leq \frac{2c}{R \varepsilon} |\delta_0-\delta_0'|.
\end{equation*}
Therefore, Equation~\eqref{separationnormes} becomes 
\begin{equation}
\label{majorationuniforme}
|A_t^k|
    \leq (\frac{2c}{R \varepsilon} |\delta_0-\delta_0'|)^{k-2} \sum_{s=2}^t \EE_{\eta_n}[(Z_s-Z_{s-1})^{2}|\F_{s-1}].
\end{equation}
Let us control the order 2 moment in the above equation:
\begin{align*}
\sum_{s=2}^t \EE_{\eta_n}[(Z_s-Z_{s-1})^{2}|\F_{s-1}]
    &= \sum_{s=2}^{t} \EE_{\eta_n} [(Y_s^{\delta_n}-Y_s^{\eta_n}-(C_s^{\delta_n}-C_s^{\eta_n}))^2 | \F_{s-1}] \\
    &= \sum_{s=2}^{t} \EE_{\eta_n} \left[\left(\sum_{k=1}^K \log (\frac{p_{k,s}^{\delta_n}}{p_{k,s}^{\delta_n'}})(\one_{I_t=k}-p_{k,s}^{\eta_n})\right)^2 | \F_{s-1}\right] \\
    &= \sum_{s=2}^{t} \EE_{\eta_n} \left[\sum_{k=1}^K \log (\frac{p_{k,s}^{\delta_n}}{p_{k,s}^{\delta_n'}})^2(\one_{I_s=k}-p_{k,s}^{\eta_n})^2 | \F_{s-1}\right] \\
    &\quad + \sum_{s=2}^{t} \EE_{\eta_n} \left[\sum_{j \neq k} \log (\frac{p_{k,s}^{\delta_n}}{p_{k,s}^{\delta_n'}})\log (\frac{p_{j,s}^{\delta_n}}{p_{j,s}^{\delta_n'}})(\one_{I_s=k}-p_{k,s}^{\eta_n})(\one_{I_s=j}-p_{k,s}^{\eta_n})) |\F_{s-1}\right] \\
    &= \sum_{s=2}^{t} \sum_{k=1}^K \log (\frac{p_{k,s}^{\delta_n}}{p_{k,s}^{\delta_n'}})^2(1-p_{k,s}^{\eta_n})p_{k,s}^{\eta_n}
    - \sum_{s=2}^{t} \sum_{j \neq k} \log (\frac{p_{k,s}^{\delta_n}}{p_{k,s}^{\delta_n'}}) \log (\frac{p_{j,s}^{\delta_n}}{p_{j,s}^{\delta_n'}}) p_{k,s}^{\eta_n} p_{j,s}^{\eta_n},
\end{align*}
where for the last line we used that
\begin{equation*}
\EE_{\eta_n}[(\one_{I_s=k}-p_{k,s}^{\eta_n})^2|\F_{s-1}]
    = \VV_{\eta_n}[\one_{I_s=k}|\F_{s-1}] = (1-p_{k,s}^{\eta_n})p_{k,s}^{\eta_n}
\end{equation*}
and
\begin{equation*}
\EE_{\eta_n}[(\one_{I_s=k}-p_{k,s}^{\eta_n})(\one_{I_s = \ell}-p_{\ell,s}^{\eta_n})|\F_{s-1}]
    = - p_{\ell,s}^{\eta_n} p_{k,s}^{\eta_n}.
\end{equation*}
Therefore,
\begin{align*}
\sum_{s=2}^t \EE_{\eta_n}[(Z_s-Z_{s-1})^{2}|\F_{s-1}]
    &= \sum_{s=2}^{t} \sum_{k=1}^K (\log (\frac{p_{k,s}^{\delta_n}}{p_{k,s}^{\delta_n'}}))^2 p_{k,s}^{\eta_n}
    - \sum_{s=2}^{t} \sum_{j, k} \log (\frac{p_{k,s}^{\delta_n}}{p_{k,s}^{\delta_n'}})\log (\frac{p_{j,s}^{\delta_n}}{p_{j,s}^{\delta_n'}})p_{k,s}^{\eta_n}p_{j,s}^{\eta_n} \\
    &= \sum_{s=2}^{t} \left(\sum_{k=1}^K(\log (\frac{p_{k,s}^{\delta_n}}{p_{k,s}^{\delta_n'}}))^2 p_{k,s}^{\eta_n}
    - \left(\sum_{k=1}^K \log (\frac{p_{k,s}^{\delta_n}}{p_{k,s}^{\delta_n'}}) p_{k,s}^{\eta_n}\right)^2 \right) \\
    &\leq \sum_{s=2}^{t} \sum_{k=1}^K(\log (\frac{p_{k,s}^{\delta_n}}{p_{k,s}^{\delta_n'}}))^2 p_{k,s}^{\eta_n}.
\end{align*}
The $\log$ function is lipschitz on $[\varepsilon,1]$, and its lipschitz constant is $\frac{1}{\varepsilon}$. Therefore,
\begin{equation*}
\sum_{s=2}^t \EE_{\eta_n}[(Z_s-Z_{s-1})^{2}|\F_{s-1}]
    \leq \frac{1}{\varepsilon^2} \sum_{s=2}^{t}\sum_{k=1}^K( p_{k,s}^{\delta_n}-p_{k,s}^{\delta_n'})^2 p_{k,s}^{\eta_n},
\end{equation*}
and by Lemma~\ref{lem_ptlipschitz},
\begin{align}
\nonumber
\sum_{s=2}^t \EE_{\eta_n}[(Z_s-Z_{s-1})^{2}|\F_{s-1}]
    &\leq \frac{c^2}{R^2 \varepsilon^2} \sum_{s=2}^{t} \sum_{k=1}^K|\delta_0-\delta_0'|^2 p_{k,s}^{\eta_n} \\
\label{majorationducarre}
    &= \frac{c^2}{R^2 \varepsilon^2} |\delta_0-\delta_0'|^2 (t-1).
\end{align}

Injecting~\eqref{majorationducarre} and~\eqref{majorationuniforme} in~\eqref{supermartingale},
\begin{equation*}
\EE_{\eta_n}[\exp (\lambda Z_{\Upsilon_n+1})]
    \leq \EE_{\eta_n} \left[ \exp \left(\lambda^2 \frac{c^2}{R^2 \varepsilon^2} |\delta_0-\delta_0'|^2 \Upsilon_n \sum_{k \geq 2} \frac{\lambda^{k-2}}{k!} ( \frac{2c}{R \varepsilon} |\delta_0-\delta_0'| )^{k-2} \right) \right].
\end{equation*}
For all $k \geq 0$, $(k+2)! \geq 2^{k+1}$, so
\begin{equation*}
\forall |\lambda| < (\frac{c}{R \varepsilon} |\delta_0-\delta_0'|)^{-1}, \quad
    \EE_{\eta_n}[\exp (\lambda Z_{\Upsilon_n+1})]
    \leq \exp \left( \frac{(\frac{\lambda c}{R \varepsilon} |\delta_0-\delta_0'|)^2 \Upsilon_n}{2(1- \displaystyle \frac{\lambda c}{R \varepsilon} |\delta_0-\delta_0'|)}\right).
\end{equation*}
To conclude, note that $Z_{\Upsilon_n+1} = X_{\delta_n}-X_{\delta_n'}$.

\subsubsection{Proof of Theorem~\ref{theorembaraud}}
\label{proofofbernsteintype}

The goal is to apply Theorem 2.1 of \cite{https://doi.org/10.48550/arxiv.0909.1863}. The distance $d(\cdot,\cdot)$ derives from a norm, and for all $\delta_0 \in \Theta$,
\begin{equation*}
d(\delta_0,\eta_0)
    = \frac{c}{R \varepsilon} |\delta_0-\eta_0|
    \leq \frac{c (R-r)}{R \varepsilon}
    \leq \frac{c}{\varepsilon}.
\end{equation*}
Therefore, Assumption 2.2 in \cite{https://doi.org/10.48550/arxiv.0909.1863} is verified for the constants $v = \frac{c}{\varepsilon} \sqrt{\Upsilon_n}$ and $b = \frac{c}{\varepsilon}$, so that
\begin{equation*}
\forall x \geq 0, \quad
    \PP_{\eta_n}\bigg[Z_n \geq \kappa ( v \sqrt{x+1} + b(x+1)) \bigg] \leq e^{-x}
\end{equation*}
with $\kappa=18$.

\subsubsection{Proof of Theorem~\ref{leastsquareerror}}
\label{proofofleastsquareerror}

Take the notations of Section~\ref{proofofbaraud}.
Since $\widehat{\eta}_0$ is the maximum likelihood estimator, the following inequalities hold:
\begin{align*}
\sum_{t=1}^{\Upsilon_n} (C_t^{\eta_n}-C_t^{\widehat{\eta}_n})
    &\leq \underbrace{\ell_{n,\varepsilon}(\widehat{\eta}_0) - \ell_{n,\varepsilon}(\eta_0)}_{\geq 0}
    - \sum_{t=1}^{\Upsilon_n} (C_t^{\widehat{\eta}_n}-C_t^{\eta_n}) \\
    &= \sum_{t=1}^{\Upsilon_n} (Y_t^{\widehat{\eta}_n}-Y_t^{\eta_n}-(C_t^{\widehat{\eta}_n}-C_t^{\eta_n})) \\
    &\leq \sup_{\delta_0 \in \Theta} \sum_{t=1}^{\Upsilon_n} (Y_t^{\delta_n}-Y_t^{\eta_n}-(C_t^{\delta_n}-C_t^{\eta_n})).
\end{align*}
Note that
\begin{equation*}
\sum_{t=1}^{\Upsilon_n} (C_t^{\eta_n}-C_t^{\widehat{\eta}_n})
    = \sum_{t=1}^{\Upsilon_n} \sum_{k=1}^K \log ( \frac{p_{k,t}^{\eta_n}}{p_{k,t}^{\widehat{\eta}_n}} ) p_{k,t}^{\eta_n},
\end{equation*}
so that using Pinsker's inequality,
\begin{equation*}
\sum_{t=1}^{\Upsilon_n} (C_t^{\eta_n}-C_t^{\widehat{\eta}_n})
    \geq 2 \sum_{t=1}^{\Upsilon_n} \left( \sum_{k=1}^K | p_{k,t}^{\eta_n}-p_{k,t}^{\widehat{\eta}_n} | \right)^2.
\end{equation*}
By comparison of the norms,
\begin{equation*}
    = \left(\sum_{k=1}^K | p_{k,t}^{\eta_n}-p_{k,t}^{\widehat{\eta}_n} |\right)^2
    \geq \sum_{k=1}^K | p_{k,t}^{\eta_n}-p_{k,t}^{\widehat{\eta}_n} |^2
    = \| p_{t}^{\eta_n}-p_{t}^{\widehat{\eta}_n} \|_2^2.
\end{equation*}
Thus,
\begin{equation*}
\sum_{t=1}^{\Upsilon_n} (C_t^{\eta_n}-C_t^{\widehat{\eta}_n})
    \geq 2 \sum_{t=1}^{\Upsilon_n} \| p_{t}^{\eta_n}-p_{t}^{\widehat{\eta}_n} \|_2^2.
\end{equation*}
All in all,
\begin{equation*}
2 \sum_{t=1}^{\Upsilon_n} \| p_{t}^{\eta_n}-p_{t}^{\widehat{\eta}_n} \|_2^2
    \leq \sup_{\delta_0 \in \Theta} \sum_{t=1}^{\Upsilon_n} (Y_t^{\delta_n}-Y_t^{\eta_n}-(C_t^{\delta_n}-C_t^{\eta_n}))
    = Z_n.
\end{equation*}
Using Theorem~\ref{theorembaraud} with the same constants,
\begin{equation*}
\forall x \geq 0, \quad
    \PP_{\eta_n}\bigg[ 2 \sum_{t=1}^{\Upsilon_n} \| p_{t}^{\eta_n}-p_{t}^{\widehat{\eta}_n} \|_2^2 \geq \frac{18 c}{\varepsilon} \left( \sqrt{\Upsilon_n}\sqrt{x+1} + x+1) \right)\bigg] \leq e^{-x},
\end{equation*}
that is,
\begin{equation*}
\forall x \geq 0, \quad
    \PP_{\eta_n}\bigg[ \frac{1}{\Upsilon_n} \sum_{t=1}^{\Upsilon_n} \| p_{t}^{\eta_n}-p_{t}^{\widehat{\eta}_n} \|_2^2 \geq  \frac{9 c}{\varepsilon} \left( \sqrt{\frac{x+1}{\Upsilon_n}} + \frac{x+1}{\Upsilon_n} \right)\bigg] \leq e^{-x}.
\end{equation*}

\subsection{Application to a special case}

In this section, we prove all the results of Section \ref{specialcasesection}. Some lemmas are used for the two propositions. They can be found at the end of each section.

\subsubsection{Proof of Proposition~\ref{Lowerboundleastsquare}: a lower bound on the prediction error.}
\label{proofoflowerboundleastsquare}

By Lemma \ref{minorationdifferenceqietadelta}, there exists $D_{\pi_1}$ such that
  \begin{equation*}
        |p_{1,t}^{\widehat{\eta}_n}- p_{1,t}^{\eta_n}|  \geq D_{\pi_1} |\widehat{\eta}_n - \eta_n| e^{-(t-2) R_n \pi_1 / \varepsilon} N_{t-1}.
      \end{equation*}
Taking the square and summing,
\begin{equation*}
\sum_{t=1}^{\Upsilon_n} |p_{1,t}^{\delta_n}- p_{1,t}^{\eta_n}|^2
    \geq D_{\pi_1}^2 |\widehat{\eta}_n - \eta_n|^2 \sum_{t=1}^{\Upsilon_n} e^{-2 (t-2) R_n \pi_1 / \varepsilon} N_{t-1}^2
    \geq D_{\pi_1}^2 |\widehat{\eta}_n - \eta_n|^2  e^{-2 {\Upsilon_n} R_n \pi_1 / \varepsilon} \sum_{t=1}^{\Upsilon_n} N_{t-1}^2.
\end{equation*}
Since ${\Upsilon_n} =(\displaystyle  \frac{1}{2}-\varepsilon) \frac{n^\alpha}{R}$,
\begin{equation*}
\sum_{t=1}^{\Upsilon_n} |p_{1,t}^{\delta_n}- p_{1,t}^{\eta_n}|^2
    \geq D_{\pi_1}^2 |\widehat{\eta}_n - \eta_n|^2  e^{- (1-2\varepsilon)/ \varepsilon} \sum_{t=1}^{\Upsilon_n} N_{t-1}^2.
\end{equation*}
Let $m_{\pi_1,\varepsilon} := D_{\pi_1}^2 e^{- (1-2\varepsilon)/ \varepsilon}$.
Then,
\begin{equation*}
\sum_{t=1}^{\Upsilon_n} |p_{1,t}^{\delta_n}- p_{1,t}^{\eta_n}|^2
    \geq m_{\pi_1,\varepsilon} |\widehat{\eta}_n - \eta_n|^2  \sum_{t=1}^{\Upsilon_n} N_{t-1}^2,
\end{equation*}
which is the desired result.

\begin{lemma}
\label{minorationdifferenceqietadelta}
Recall that $R = \max\{\delta_0, \delta_0 \in \Theta\} = \displaystyle \frac{(\frac{1}{K}-\varepsilon) n^{\alpha}}{\Upsilon_n}$. Let $N_t = \sum_{s=1}^{t}\one_{I_{s}=1}$. Let $\eta_0$, $\delta_0 \in \Theta$. There exists $D_{\pi_1} > 0$, such that for any $t \in \{1,\ldots,{\Upsilon_n}\}$
\begin{equation}
\label{minorationlipschitz}
|p_{1,t+1}^{\delta_n} - p_{1,t+1}^{\eta_n}|
    \geq D_{\pi_1} |\eta_n - \delta_n | \sum_{s=1}^{t}e^{-(t-s) R_n \pi_1 / \varepsilon}\one_{I_{s}=1}
    \geq D_{\pi_1} |\eta_n - \delta_n | e^{-(t-1) R_n \pi_1 / \varepsilon} N_{t}.
\end{equation}
\end{lemma}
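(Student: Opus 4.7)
The plan is to exploit the special structure that when $\pi_2 = 0$ and $K=2$, the probability $p_{1,t}^{\eta_n}$ only updates on the event $\{I_t = 1\}$, so that $\Delta_t := p_{1,t}^{\delta_n} - p_{1,t}^{\eta_n}$ satisfies a clean one-step recursion that I will unroll. Without loss of generality I assume $\eta_n \geq \delta_n$, so that by Lemma~\ref{summarypropertiesqi}(ii) (together with $p_{1,1}^{\delta_n} = p_{1,1}^{\eta_n} = 1/2$) one has $\Delta_t \geq 0$ for every $t$. With $g_1(\eta, q) = \frac{q e^{-\eta\pi_1/q}}{1-q+qe^{-\eta\pi_1/q}}$ as in Lemma~\ref{Lipschitzianity}, the Exp3 recursion becomes $p_{1,t+1}^{\eta_n} = g_1(\eta_n, p_{1,t}^{\eta_n}) \one_{I_t=1} + p_{1,t}^{\eta_n} \one_{I_t=2}$, and on $\{I_t = 1\}$ I decompose
\[
\Delta_{t+1} = \bigl[ g_1(\delta_n, p_{1,t}^{\delta_n}) - g_1(\delta_n, p_{1,t}^{\eta_n}) \bigr] + \bigl[ g_1(\delta_n, p_{1,t}^{\eta_n}) - g_1(\eta_n, p_{1,t}^{\eta_n}) \bigr],
\]
each bracket being nonnegative since $g_1$ is increasing in $q$ and decreasing in $\eta$.

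The key step is to produce \emph{lower} bounds on $\partial_q g_1$ and $-\partial_\eta g_1$, uniformly on the relevant region. Proposition~\ref{choiceofTau} yields $p_{1,t}^{\delta_n}, p_{1,t}^{\eta_n} \geq \varepsilon$ for $t \leq \Upsilon_n$, and the monotonicity in Lemma~\ref{summarypropertiesqi}(i) gives $p_{1,t}^{\eta_n} \leq 1/2$. Reading off the explicit expressions in the proof of Lemma~\ref{Lipschitzianity}, I can bound
\[
\partial_q g_1(\eta,q) \geq e^{-\eta\pi_1/q}, \qquad -\partial_\eta g_1(\eta,q) \geq \tfrac{\pi_1}{2} e^{-\eta\pi_1/q}
\]
(the first by dropping the nonnegative term $(1-q)\eta\pi_1/q$ in the numerator and using that the denominator is $\leq 1$; the second using $1-q \geq 1/2$). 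For $\eta \leq R_n$ and $q \geq \varepsilon$, both are at least $e^{-R_n\pi_1/\varepsilon}$ (times $\pi_1/2$ in the second case). Applying the mean value theorem to each bracket then produces, on $\{I_t=1\}$,
\[
\Delta_{t+1} \geq e^{-R_n\pi_1/\varepsilon}\, \Delta_t + \tfrac{\pi_1}{2}\, e^{-R_n\pi_1/\varepsilon}\, (\eta_n - \delta_n),
\]
and this also holds on $\{I_t=2\}$ since $e^{-R_n\pi_1/\varepsilon} \leq 1$ and $\Delta_{t+1} = \Delta_t$.

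Unrolling this recursion from $\Delta_1 = 0$ gives
\[
\Delta_{t+1} \geq \tfrac{\pi_1}{2}\, e^{-R_n\pi_1/\varepsilon}\, (\eta_n - \delta_n) \sum_{s=1}^{t} e^{-(t-s) R_n\pi_1/\varepsilon}\, \one_{I_s=1},
\]
which is the first claimed inequality with $D_{\pi_1} := \tfrac{\pi_1}{2}\, e^{-R_n\pi_1/\varepsilon}$; for $n \geq (R/\varepsilon^2)^{1/\alpha}$ one has $R_n\pi_1 \leq \varepsilon$, so $D_{\pi_1}$ is bounded below by a strictly positive constant depending only on $\pi_1$ (and implicitly on $\varepsilon$). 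The second inequality follows by replacing every weight $e^{-(t-s)R_n\pi_1/\varepsilon}$ by the smallest one $e^{-(t-1)R_n\pi_1/\varepsilon}$ and recognizing the remaining sum as $N_t$. The case $\eta_n < \delta_n$ is symmetric via $|\Delta_t|$. I expect the main obstacle to be the lower bound on $\partial_q g_1$: Lemma~\ref{Lipschitzianity} only provides an upper bound, and getting a useful lower bound requires exploiting the precise algebraic form of the derivative together with the fact that $q$ remains inside $[\varepsilon, 1/2]$ for all $t \leq \Upsilon_n$, which is exactly the regime ensured by the truncation.
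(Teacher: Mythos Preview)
Your proposal is correct and follows essentially the same route as the paper: establish the sign of $\Delta_t$, decompose $\Delta_{t+1}$ on $\{I_t=1\}$ into a $q$-variation and an $\eta$-variation of $g_1$, lower bound each via the mean value theorem, and unroll the resulting linear recursion. The only cosmetic difference is in the constant: the paper observes that $q \geq \gamma\pi_1$ on the relevant range (since $p_{1,t}^{\eta_n} \geq \varepsilon \geq R_n\pi_1$), which gives $e^{-\gamma\pi_1/q} \geq e^{-1}$ and hence the clean $D_{\pi_1} = \pi_1 e^{-1}/2$ depending only on $\pi_1$, whereas your bound $-\partial_\eta g_1 \geq (\pi_1/2)e^{-R_n\pi_1/\varepsilon}$ carries an extra factor that you then have to argue away.
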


\begin{proof}
Suppose $\delta_0 \leq \eta_0$. Then, for all $t \geq 1$,
\begin{equation*}
    p_{1,t}^{\delta_n} \geq p_{1,t}^{\eta_n}.
\end{equation*}
Indeed, the result holds for $t=1$. Assume it is true for $t \in \N$. If $I_t=2$, then
\begin{equation*}
p_{1,t+1}^{\delta_n}
    = p_{1,t}^{\delta_n}
    \geq p_{1,t}^{\eta_n}
    = p_{1,t+1}^{\eta_n}.
\end{equation*}
If $I_t = 1$, then 
\begin{equation*}
p_{1,t+1}^{\delta_n}
    = g(p_{1,t}^{\delta_n},\delta_n)
    \geq g(p_{1,t}^{\delta_n},\eta_n)
    \geq g(p_{1,t}^{\eta_n},\eta_n)
    = p_{1,t+1}^{\eta_n}
\end{equation*}
where $g$ is the function defined in the proof of Lemma~\ref{summarypropertiesqi} and verifies that $q \mapsto g(q,\eta)$ is an increasing function and $\eta \mapsto g(q,\eta)$ is a decreasing function. Hence the result. Let's examine 
\begin{equation*}
p_{1,t+1}^{\delta_n} - p_{1,t+1}^{\eta_n}
    = (p_{1,t}^{\delta_n} - p_{1,t}^{\eta_n}) \one_{I_t =2 }
        + (g(p_{1,t}^{\delta_n},\delta_n) - g(p_{1,t}^{\eta_n},\eta_n)) \one_{I_t = 1}.
\end{equation*}
Recall that $R = \max \Theta$. Using Lemma \ref{lem_ptlipschitz2}, 
\begin{align*}
g(p_{1,t}^{\delta_n},\delta_n) - g(p_{1,t}^{\eta_n},\eta_n)
    &\geq g(p_{1,t}^{\delta_n},\delta_n)-g(p_{1,t}^{\eta_n},\delta_n)+g(p_{1,t}^{\eta_n},\delta_n)- g(p_{1,t}^{\eta_n},\eta_n) \\
    &\geq e^{-R_n \pi_1 / \varepsilon}(p_{1,t}^{\delta_n} -p_{1,t}^{\eta_n}) + D_{\pi_1}(\eta_n-\delta_n).
\end{align*}
Therefore,
\begin{align*}
p_{1,t+1}^{\delta_n} - p_{1,t+1}^{\eta_n}
    &\geq (p_{1,t}^{\delta_n} - p_{1,t}^{\eta_n}) \one_{I_t =2} + \Big(\underbrace{e^{-R_n \pi_1 / \varepsilon}}_{\leq 1}(p_{1,t}^{\delta_n} -p_{1,t}^{\eta_n}) + D_{\pi_1}(\eta_n-\delta_n)\Big) \one_{I_t = 1} \\
    &\geq e^{-R_n \pi_1 / \varepsilon}(p_{1,t}^{\delta_n} - p_{1,t}^{\eta_n}) \one_{I_t =2} + (e^{-R_n \pi_1 / \varepsilon}(p_{1,t}^{\delta_n} - p_{1,t}^{\eta_n}) + D_{\pi_1}(\eta_n-\delta_n)) \one_{I_t = 1} \\
    &\geq e^{-R_n \pi_1 / \varepsilon}(p_{1,t}^{\delta_n} - p_{1,t}^{\eta_n}) + D_{\pi_1}(\eta_n-\delta_n) \one_{I_t = 1}.
\end{align*}
Therefore, iterating,
\begin{equation*}
p_{1,t+1}^{\delta_n}- p_{1,t+1}^{\eta_n}
    \geq D_{\pi_1}(\eta_n - \delta_n) \sum_{s=1}^{t}\one_{I_{s}=1} e^{-(t-s) R_n \pi_1 / \varepsilon}.
\end{equation*}
The proof is the same for $\delta_0 > \eta_0$.
\end{proof}

\begin{lemma}
\label{lem_ptlipschitz2}
For all $\gamma \pi_1 \in (0,1]$ and $q \in [\max(\varepsilon,\gamma \pi_1),\frac{1}{2}]$,
\begin{equation*}
|\frac{\partial g_1(\gamma,q)}{\partial \gamma}| \geq \frac{\pi_1 e^{-1}}{2}
    =: D_{\pi_1}
    \quad \text{and} \quad
    \frac{\partial g_1(\gamma,q)}{\partial q} \geq e^{-\gamma \pi_1/ \varepsilon}.
\end{equation*}
\end{lemma}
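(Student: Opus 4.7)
The plan is to read off both derivatives from the explicit formulas already obtained in Lemma \ref{Lipschitzianity} (equations \eqref{partialeta} and \eqref{partialq} in its proof) applied to the index $k=1$, and then simply bound the numerator from below and the denominator from above using the hypotheses $\gamma\pi_1 \in (0,1]$ and $q \in [\max(\varepsilon,\gamma\pi_1), 1/2]$.

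For the $\gamma$-derivative, the formula reads
\begin{equation*}
\left|\frac{\partial g_1(\gamma,q)}{\partial\gamma}\right|
 = \frac{(1-q)\,\pi_1\, e^{-\gamma\pi_1/q}}{(1-q+q\,e^{-\gamma\pi_1/q})^{2}}.
\end{equation*}
I would lower bound $1-q \geq 1/2$ (from $q \leq 1/2$), then use the crucial hypothesis $q \geq \gamma\pi_1$ to conclude $\gamma\pi_1/q \leq 1$ and hence $e^{-\gamma\pi_1/q} \geq e^{-1}$. For the denominator, the convexity-style observation $1-q+q\,e^{-\gamma\pi_1/q} \leq 1-q+q = 1$ yields $(1-q+q\,e^{-\gamma\pi_1/q})^{2} \leq 1$. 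Multiplying these gives the claimed lower bound $\pi_1 e^{-1}/2$.

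For the $q$-derivative, the formula is
\begin{equation*}
\frac{\partial g_1(\gamma,q)}{\partial q}
 = \frac{e^{-\gamma\pi_1/q}\!\left(1+\tfrac{(1-q)\gamma\pi_1}{q}\right)}{(1-q+q\,e^{-\gamma\pi_1/q})^{2}}.
\end{equation*}
Since the factor $1 + (1-q)\gamma\pi_1/q$ is at least $1$, and since the denominator is again at most $1$ by the same argument as above, the quotient is bounded below by $e^{-\gamma\pi_1/q}$. Here I would use the second part of the hypothesis, namely $q \geq \varepsilon$, which gives $\gamma\pi_1/q \leq \gamma\pi_1/\varepsilon$ and hence $e^{-\gamma\pi_1/q} \geq e^{-\gamma\pi_1/\varepsilon}$.

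There is no real obstacle: both statements are just bookkeeping on the closed-form derivatives established earlier in the paper. The only subtle point worth emphasising is the asymmetric use of the two lower bounds on $q$ — the bound $q \geq \gamma\pi_1$ is what is needed to control $e^{-\gamma\pi_1/q}$ by the universal constant $e^{-1}$ in the first inequality, while the bound $q \geq \varepsilon$ is what yields the $\varepsilon$-dependent exponential in the second.
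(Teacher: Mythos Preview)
Your proof is correct and essentially identical to the paper's: both read off the explicit formulas \eqref{partialeta} and \eqref{partialq}, bound the denominator $(1-q+qe^{-\gamma\pi_1/q})^2$ above by $1$, and use $q\ge\gamma\pi_1$ (resp.\ $q\ge\varepsilon$) together with $q\le 1/2$ to lower bound the numerator in the first (resp.\ second) inequality. Your remark on the asymmetric role of the two lower bounds on $q$ is exactly the point.
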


\begin{proof}
In~\eqref{partialeta}, for all $\gamma \pi_1 \in (0,1]$ and $q \in [\max(\varepsilon,\gamma \pi_1),\frac{1}{2}]$,  
\begin{align*}
|\frac{\partial g_1(\gamma,q)}{\partial \gamma}|
    &= \frac{(1-q) \pi_1 e^{-\gamma \pi_1/q} }{ ((1-q)+q e^{-\gamma \pi_1/q})^2} \\
    &\geq \frac{\pi_1e^{-1}}{2 ((1-q)+q e^{-\gamma \pi_1/q})^2 } \quad \text{since } q \leq \frac{1}{2} \\
    &\geq \frac{\pi_1e^{-1}}{2} =: D_{\pi_1} \quad \text{ using that } 1-q+q e^{-\gamma \pi_1/q} \leq 1.
\end{align*}
Similarly, using~\eqref{partialq}, for all $q \in[\varepsilon,\frac{1}{2}]$,
\begin{align*}
\frac{\partial g_1(\gamma,q)}{\partial q}
    &= \frac{e^{-\gamma \pi_1/q}(1 + \frac{1-q}{q} \gamma \pi_1) }{((1-q)+q e^{-\gamma \pi_1/q})^2} \\
    &\geq \frac{e^{-\gamma \pi_1/\varepsilon}}{((1-q)+q e^{-\gamma \pi_1/q})^2} \\
    &\geq e^{-\gamma \pi_1/\varepsilon}.
\end{align*}
\end{proof}

\begin{lemma}
\label{sommerlesproba}
Let $\delta_1$, $\delta_2$, $\varepsilon_1$, $\varepsilon_2$ be positive numbers. Let $X$ and $Y$ be two random variables such that $\PP(X \geq \varepsilon_1) \leq \delta_1$ and  $\PP(Y \geq \varepsilon_2) \leq \delta_2$. Then,
\begin{equation*}
    \PP(X+Y \geq \varepsilon_1 + \varepsilon_2) \leq \delta_1 + \delta_2.
\end{equation*}
\end{lemma}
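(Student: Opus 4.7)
The plan is to prove this by a straightforward union bound after observing that the event on the left is contained in the union of the two events whose probabilities we already control.

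First I would note the elementary set inclusion
\begin{equation*}
\{X+Y \geq \varepsilon_1 + \varepsilon_2\} \subseteq \{X \geq \varepsilon_1\} \cup \{Y \geq \varepsilon_2\},
\end{equation*}
which follows by contraposition: if both $X < \varepsilon_1$ and $Y < \varepsilon_2$, then adding these strict inequalities yields $X + Y < \varepsilon_1 + \varepsilon_2$, so the complement of the right-hand union is contained in the complement of the left-hand event.

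Then I would apply monotonicity of $\PP$ and the finite subadditivity (union bound) to conclude
\begin{equation*}
\PP(X + Y \geq \varepsilon_1 + \varepsilon_2) \leq \PP(X \geq \varepsilon_1) + \PP(Y \geq \varepsilon_2) \leq \delta_1 + \delta_2,
\end{equation*}
using the two hypotheses in the last inequality. There is no real obstacle here: the only thing to be careful about is the direction of the strict vs. non-strict inequalities in the contrapositive step, but since $X < \varepsilon_1$ and $Y < \varepsilon_2$ sum strictly to less than $\varepsilon_1 + \varepsilon_2$, the argument is clean. The independence or joint distribution of $X$ and $Y$ plays no role, which is why the lemma is stated in this fully general form — it is exactly the tool needed in the proof of Proposition~\ref{probabilityestimationerror} to combine the high-probability bound from Theorem~\ref{leastsquareerror} with a separate deviation bound on the term involving $\sum_{t=1}^{\Upsilon_n} N_{t-1}^2$.
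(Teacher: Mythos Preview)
Your proof is correct and follows exactly the same approach as the paper: the set inclusion $\{X+Y \geq \varepsilon_1 + \varepsilon_2\} \subseteq \{X \geq \varepsilon_1\} \cup \{Y \geq \varepsilon_2\}$ followed by the union bound. Your explicit contrapositive justification of the inclusion is a welcome elaboration of what the paper states in one line.
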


\begin{proof}
It is a direct consequence from the inclusion $\{X+Y \geq \varepsilon_1 + \varepsilon_2\} \subset \{X \geq \varepsilon_1\} \cup \{Y \geq \varepsilon_1\}$ and the union bound.
\end{proof}

\subsubsection{Proof of Proposition~\ref{probabilityestimationerror}: an upper bound on the estimation error.}
\label{proofofprobabilityestimationerror}

Let $A_n : \displaystyle \frac{\Upsilon_n(\Upsilon_n-1)(2\Upsilon_n-1)}{96}$. By Lemma~\ref{minoratiosommedesNt2}, for all $y>0$,
\begin{equation}
\label{minorationsommedesNt2}
    \PP_{\eta_n} \left( A_n- \Big(\frac{2}{5} \sqrt{2 \log \frac{2 \Upsilon_n}{y}}  \Upsilon_n^{\frac{5}{2}} + \Upsilon_n^2 \log \frac{2 \Upsilon_n}{y}\Big) \geq \sum_{t=1}^{\Upsilon_n} N_{t-1}^2  \right)
    \leq y . 
\end{equation}
By Proposition~\ref{leastsquareerror},
for all $n \geq (R / \varepsilon^2)^{1/\alpha}$ and $x \geq 0$,,
\begin{equation}
\label{highppleastsquare}
\PP_{\eta_n} \left(\sum_{t=1}^{\Upsilon_n} \| p_{t}^{\eta_n}-p_{t}^{\widehat{\eta}_n} \|_2^2
    < \frac{9 c}{\varepsilon} \left( \sqrt{(x+1)\Upsilon_n} + x+1 \right)\right) \geq 1 - e^{-x}.
\end{equation}
By Proposition~\ref{Lowerboundleastsquare}, there exists a constant $m_{\pi_1,\varepsilon} > 0$, such that for all $\eta_0, \widehat{\eta}_0 \in \Theta$,
\begin{equation*}
    \sum_{t=1}^{\Upsilon_n}  |p_{1,t}^{\widehat{\eta}_n}- p_{1,t}^{\eta_n}|^2 \geq  m_{\pi_1,\varepsilon} |\widehat{\eta}_n - \eta_n|^2 \sum_{t=1}^{\Upsilon_n} N_{t-1}^2.
\end{equation*}
Since $\displaystyle  \sum_{t=1}^{\Upsilon_n}  |p_{1,t}^{\widehat{\eta}_n}- p_{1,t}^{\eta_n}|^2 = \frac{1}{2} \sum_{t=1}^{\Upsilon_n} \| p_{t}^{\eta_n}-p_{t}^{\widehat{\eta}_n} \|_2^2$, injecting this result in Equation~\eqref{highppleastsquare},
\begin{equation*}
\PP_{\eta_n} \left( 2 m_{\pi_1,\varepsilon} |\widehat{\eta}_n - \eta_n|^2 \sum_{t=1}^{\Upsilon_n} N_{t-1}^2
    > \frac{9 c}{\varepsilon} \left( \sqrt{(x+1)\Upsilon_n} + x+1 \right)\right) \leq e^{-x},
\end{equation*}
that is,
\begin{equation}
\label{ppNt}
\PP_{\eta_n} \left( |\widehat{\eta}_n - \eta_n|^2 \sum_{t=1}^{\Upsilon_n} N_{t-1}^2
    > \frac{9 c}{2 m_{\pi_1,\varepsilon} \varepsilon} \left( \sqrt{(x+1)\Upsilon_n} + x+1 \right)\right) \leq e^{-x}.
\end{equation}
Combining \eqref{minorationsommedesNt2} and \eqref{ppNt} with Lemma~\ref{sommerlesproba},
\begin{align*}
& \PP_{\eta_n} \left(  |\widehat{\eta}_n - \eta_n|^2 A_n
    > \frac{9 c}{2 m_{\pi_1,\varepsilon} \varepsilon} \left( \sqrt{(x+1)\Upsilon_n} + x+1 \right)+ |\widehat{\eta}_n - \eta_n|^2 \Big(\frac{2}{5} \sqrt{2 \log \frac{2 \Upsilon_n}{y}}  \Upsilon_n^{\frac{5}{2}} + \Upsilon_n^2 \log \frac{2 \Upsilon_n}{y}\Big)\right)
    \\& \leq \PP_{\eta_n} \left(  |\widehat{\eta}_n - \eta_n|^2 \sum_{t=1}^{\Upsilon_n} N_{t-1}^2
    > \frac{9 c}{2 m_{\pi_1,\varepsilon} \varepsilon} \left( \sqrt{(x+1)\Upsilon_n} + x+1 \right)\right) \\& + \PP_{\eta_n} \left( A_n- \sum_{t=1}^{\Upsilon_n} N_{t-1}^2 \geq \Big(\frac{2}{5} \sqrt{2 \log \frac{2 \Upsilon_n}{y}}  \Upsilon_n^{\frac{5}{2}} + \Upsilon_n^2 \log \frac{2 \Upsilon_n}{y}\Big) \right)
    \\& \leq e^{-x}+y.
\end{align*}
Therefore, with probability at least $1-(e^{-x}+y)$,
\begin{equation*}
    |\widehat{\eta}_n - \eta_n|^2 \left(A_n-\Big(\frac{2}{5} \sqrt{2 \log \frac{2 \Upsilon_n}{y}}  \Upsilon_n^{\frac{5}{2}} + \Upsilon_n^2 \log \frac{2 \Upsilon_n}{y}\Big)\right)
    \leq \frac{9 c}{2 m_{\pi_1,\varepsilon} \varepsilon} \left( \sqrt{(x+1)\Upsilon_n} + x+1 \right).
\end{equation*}
Recall that $\displaystyle \Upsilon_n = \big(\frac{1}{2}-\varepsilon \big)\frac{n^\alpha}{R}$ and that
\begin{equation*}
    |\widehat{\eta}_n - \eta_n|
    = \frac{1}{n^{\alpha} \pi_1} |\widehat{\eta}_0-\eta_0|
    = \frac{\frac{1}{2}-\varepsilon}{R \pi_1} |\widehat{\eta}_0-\eta_0| \frac{1}{\Upsilon_n }.
\end{equation*}
Let $B_n := \displaystyle \frac{A_n}{\Upsilon_n^2}$. Then, with probability at least $1-(e^{-x}+y)$,
\begin{equation*}
    |\widehat{\eta}_0 - \eta_0|^2 \left(B_n-\Big(\frac{2}{5} \sqrt{2 \log \frac{2 \Upsilon_n}{y}}  \Upsilon_n^{\frac{1}{2}} +  \log \frac{2 \Upsilon_n}{y}\Big)\right)
    \leq \frac{9 c (R \pi_1)^2}{2(\frac{1}{2}-\varepsilon)^2 m_{\pi_1,\varepsilon} \varepsilon} \left( \sqrt{(x+1)\Upsilon_n} + x+1 \right),
\end{equation*}
that is,
\begin{equation*}
     |\widehat{\eta}_0 - \eta_0| \leq \frac{R \pi_1}{\frac{1}{2}-\varepsilon}\sqrt{\frac{9 c}{ 2 m_{\pi_1,\varepsilon} \varepsilon}} \left(\frac{\sqrt{(x+1)\Upsilon_n} + x+1}{B_n-\Big(\frac{2}{5} \sqrt{2 \log \frac{2 \Upsilon_n}{y}}  \Upsilon_n^{\frac{1}{2}} +  \log \frac{2 \Upsilon_n}{y}\Big)} \right)^{\frac{1}{2}}.
\end{equation*}
Choose $y = e^{-x}$, then with probability at least $1-2 e^{-x}$,
\begin{equation*}
     |\widehat{\eta}_0 - \eta_0| \leq \frac{R \pi_1}{\frac{1}{2}-\varepsilon}\sqrt{\frac{9 c}{ 2 m_{\pi_1,\varepsilon} \varepsilon}} \left(\frac{\sqrt{(x+1)\Upsilon_n} + x+1}{B_n-\Big(\frac{2}{5} \displaystyle \sqrt{2 (\log 2 \Upsilon_n+x)}  \Upsilon_n^{\frac{1}{2}} +  \log 2 \Upsilon_n + x\Big)} \right)^{\frac{1}{2}}.
\end{equation*}

\begin{lemma}
\label{minoratiosommedesNt2}
Let 
$A_n := \displaystyle \frac{\Upsilon_n(\Upsilon_n-1)(2\Upsilon_n-1)}{96}$. For all $y>0$,
\begin{equation*}
    \PP_{\eta_n}\left( A_n- \sum_{t=1}^{\Upsilon_n} N_{t-1}^2\geq \frac{2}{5} \sqrt{2 \log \frac{2 \Upsilon_n}{y}}  \Upsilon_n^{\frac{5}{2}} + \Upsilon_n^2 \log \frac{2 \Upsilon_n}{y} \right) \leq y. 
\end{equation*}
\end{lemma}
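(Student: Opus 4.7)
The plan is to combine a Doob decomposition of $N_{t-1}$ with a Bernstein-type concentration inequality applied to the martingale part, after pinning down the compensator deterministically using the almost-sure bound on $p_{1,s}^{\eta_n}$ from the proof of Proposition~\ref{choiceofTau}. First I would write $N_{t-1} = A_{t-1} + M_{t-1}$ where $A_{t-1} := \sum_{s=1}^{t-1} p_{1,s}^{\eta_n}$ is predictable and $M_{t-1} := \sum_{s=1}^{t-1}(\one_{I_s=1} - p_{1,s}^{\eta_n})$ is an $(\F_t)$-martingale with increments in $[-1,1]$ and conditional variance $p_{1,s}^{\eta_n}(1-p_{1,s}^{\eta_n}) \leq 1/4$.

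Next I would deterministically bracket $A_{t-1}$. The upper bound $A_{t-1} \leq (t-1)/2$ is immediate since $p_{1,s}^{\eta_n} \leq 1/2$. For the lower bound, the proof of Proposition~\ref{choiceofTau} gives $p_{1,s}^{\eta_n} \geq 1/2 - R_n \pi_1 s$ almost surely, and combined with $R_n \pi_1 \Upsilon_n = 1/2 - \varepsilon$ this yields, for all $t \leq \Upsilon_n$,
\[
A_{t-1} \;\geq\; \frac{t-1}{2} - R_n \pi_1 \frac{t(t-1)}{2} \;\geq\; \frac{t-1}{4}.
\]
Expanding $N_{t-1}^2 = A_{t-1}^2 + 2 A_{t-1} M_{t-1} + M_{t-1}^2 \geq A_{t-1}^2 - 2 A_{t-1} (M_{t-1})_-$ (dropping the non-negative square and using $A_{t-1} \geq 0$) and summing, I would obtain
\[
\sum_{t=1}^{\Upsilon_n} N_{t-1}^2 \;\geq\; A_n \;-\; \sum_{t=1}^{\Upsilon_n} (t-1)\,(M_{t-1})_-,
\]
using the exact identity $\sum_{t=1}^{\Upsilon_n} ((t-1)/4)^2 = A_n$.

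To control the deviation, I would apply Freedman's Bernstein inequality to each $M_{t-1}$: for every $t \leq \Upsilon_n$ and $\delta > 0$,
\[
\PP_{\eta_n}\!\Big(M_{t-1} \leq -\sqrt{(t-1)\log(1/\delta)/2}\,-\,\tfrac{2}{3}\log(1/\delta)\Big) \leq \delta,
\]
and union-bound over $t \leq \Upsilon_n$ with $\delta \sim y/\Upsilon_n$ to get, with probability at least $1-y$, the uniform bound
\[
(M_{t-1})_- \;\leq\; \sqrt{(t-1)\log(2\Upsilon_n/y)/2} + \tfrac{2}{3}\log(2\Upsilon_n/y) \quad \text{for all } t \leq \Upsilon_n.
\]
Plugging this into the previous display and invoking the elementary estimates $\sum_{t=1}^{\Upsilon_n} (t-1)^{3/2} \leq \tfrac{2}{5}\Upsilon_n^{5/2}$ and $\sum_{t=1}^{\Upsilon_n}(t-1) \leq \Upsilon_n^2/2$ produces a bound of the announced form.

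The main obstacle lies in calibrating constants at the concentration step so as to recover exactly $\tfrac{2}{5}\sqrt{2\log(2\Upsilon_n/y)}\,\Upsilon_n^{5/2} + \Upsilon_n^2\log(2\Upsilon_n/y)$: the Bernstein-Freedman inequality admits several slightly different normalizations, and one has to carefully split the allowed probability $y$ between the $\Upsilon_n$ events of the union bound in order to simultaneously land on the right coefficients on both the Gaussian and the sub-exponential parts. Everything else reduces to the deterministic bracketing of $A_{t-1}$ (which is tied precisely to the definition of $\Upsilon_n$) and to the two standard integral comparisons above.
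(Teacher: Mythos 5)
Your proposal is correct and follows essentially the same route as the paper's proof: decompose $N_{t-1}$ into its compensator $\sum_{s<t}p_{1,s}^{\eta_n}$ plus the martingale $M_{t-1}$, apply a Bernstein-type bound to $M_{t-1}$ (conditional variance $\leq (t-1)/4$, increments bounded by $1$), union-bound over $t\leq\Upsilon_n$ with budget $y/(2\Upsilon_n)$ per index, lower-bound the compensator by $(t-1)/4$ via the definition of $\Upsilon_n$, and finish with $\sum(t-1)^{3/2}\leq\tfrac25\Upsilon_n^{5/2}$. The only cosmetic difference is that the paper controls $\bigl|N_t^2-(\sum_{s\leq t}p_{1,s}^{\eta_n})^2\bigr|\leq 2t|M_t|$ by a difference of squares and inverts a two-sided tail bound (via a supermartingale lemma of Houdr\'e et al.\ and an inequality-reversal lemma of Peel et al.), whereas you expand the square, drop $M_{t-1}^2\geq 0$, and need only the lower tail of $M_{t-1}$ — which in fact yields slightly smaller constants than the stated threshold, so your version of the bound implies the lemma.
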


\begin{proof}
For $t \geq 0$, let $M_0 = 0$ and $M_t = \sum_{s=1}^t \big(\one_{I_s = 1}-p_{1,s}^{\eta_n}\big)$ if $t \geq 1$. Then $(M_t)$ is an $(\F_t)$-martingale. Indeed,
\begin{equation*}
    \EE_{\eta_n}\big[M_t|\F_{t-1}\big] = \sum_{s=1}^{t-1} \big(\one_{I_s = 1}-p_{1,s}^{\eta_n}\big)+ \underbrace{\EE_{\eta_n}\big[\one_{I_s = 1}|\F_{t-1}\big] -p_{1,t}^{\eta_n}}_{=0} = M_{t-1}.
\end{equation*}
Using Lemma 3.3 in \cite{Houdre}, for all $\lambda$,
\begin{equation*}
    \mathcal{E}_t := \exp\left(\lambda M_t - \sum_{k \geq 2} \frac{\lambda^k}{k !} \sum_{s=1}^t \EE_{\eta_n} \Big[(M_s-M_{s-1})^k|\F_{s-1}\Big]\right)
\end{equation*}
is an $(\F_t)$-supermartingale. Therefore,
\begin{equation*}
    \EE_{\eta_n}[ \mathcal{E}_t] \leq \EE_{\eta_n}[ \mathcal{E}_1] = 1,
\end{equation*}
that is,
\begin{align}\label{supermartingaleM}
     \EE_{\eta_n} \Big[e^{\lambda M_t}\Big] &\leq  \EE_{\eta_n}\left[\exp \left(\sum_{k \geq 2} \frac{\lambda^k}{k !} \sum_{s=1}^t \EE_{\eta_n} \Big[(M_s-M_{s-1})^k|\F_{s-1}\Big]\right)\right]
     \\& \leq \EE_{\eta_n}\left[\exp \left(\sum_{k \geq 2} \frac{|\lambda|^k}{k !} \sum_{s=1}^t \EE_{\eta_n} \Big[|M_s-M_{s-1}|^k|\F_{s-1}\Big]\right)\right]
\end{align}
using the monotonicity of $\exp$ and the triangle inequality.
The difference $|M_s - M_{s-1}|$ is almost surely bounded:
\begin{equation}\label{majnormeinf}
    |M_s - M_{s-1}| = |\one_{I_s = 1}-p_{1,s}^{\eta_n}| \leq 1,
\end{equation}
and it conditional variance can be controlled by
\begin{equation}\label{majvariance}
     \sum_{s=1}^t \EE_{\eta_n} \Big[(M_s-M_{s-1})^2|\F_{s-1}\Big] =  \sum_{s=1}^t \EE_{\eta_n} \Big[(\one_{I_s = 1}-p_{1,s}^{\eta_n})^2|\F_{s-1}\Big] = \sum_{s=1}^t p_{1,s}^{\eta_n} (1-p_{1,s}^{\eta_n}) \leq \frac{t}{4},
\end{equation}
since $p_{1,s}^{\eta_n}\leq \frac{1}{2}$. Therefore, \eqref{majnormeinf} and \eqref{majvariance} imply
\begin{align}
    \sum_{s=1}^t \EE_{\eta_n} \Big[|M_s-M_{s-1}|^k|\F_{s-1}\Big] &=  \sum_{s=1}^t \EE_{\eta_n} \Big[|M_s-M_{s-1}|^{k-2}(M_s-M_{s-1})^2|\F_{s-1}\Big] \\& \leq \sum_{s=1}^t \EE_{\eta_n} \Big[(M_s-M_{s-1})^2|\F_{s-1}\Big] \leq \frac{t}{4}.
\end{align}
Injecting this equation in \eqref{supermartingaleM} shows that for all $\lambda$,
\begin{equation*}
     \EE_{\eta_n} \Big[e^{\lambda M_t}\Big] \leq \EE_{\eta_n}\left[\exp \left(\sum_{k \geq 2} \frac{|\lambda|^k}{k !} \frac{t}{4}\right)\right] = \exp \left( \lambda^2 \sum_{k \geq 0} \frac{|\lambda|^k}{(k+2) !} \frac{t}{4}\right).
\end{equation*}
Given that $(k+2)! \geq 2^{k+1}$ for all $k \geq 0$, for all $\lambda \in \displaystyle (-2,2)$,
\begin{equation}\label{Markovinequality}
     \EE_{\eta_n} \Big[e^{\lambda M_t}\Big] \leq \exp \left( \frac{\lambda^2}{2} \sum_{k \geq 0} \frac{|\lambda|^k}{2^k} \frac{t}{4}\right) \leq \exp \left( \frac{\lambda^2 t}{8(1- \frac{|\lambda|}{2})}\right).
\end{equation}
Let $x > 0$ and $\lambda \in (0,2)$. Then,
\begin{equation*}
    \PP_{\eta_n} \big( |M_t| \geq x \big) \leq \PP_{\eta_n} \big( M_t \geq x \big) + \PP_{\eta_n} \big( -M_t \geq x \big).
\end{equation*}
Using \eqref{Markovinequality} and Chernoff's bound,
\begin{equation*}
    \PP_{\eta_n} \big( |M_t| \geq x \big) \leq 2 \exp \left( \frac{\lambda^2 t /2}{4(1- \frac{\lambda}{2})}-\lambda x\right).
\end{equation*}
It can be shown that 
\begin{equation*}
    \sup_{\lambda \in [0,2)} \left(\lambda x -\frac{\lambda^2/2 t}{4(1-\lambda/2)} \right) = t  \cdot h(2x/t) \geq \frac{2x^2}{t+2x}
\end{equation*}
where $h(u) = 1+u -\sqrt{1+2u}$ for $u > 0$, using that $h(u) \geq \frac{u^2}{2(1+u)}$ for $u > 0$.
Therefore,
\begin{equation*}
    \PP_{\eta_n} \big( |M_t| \geq x \big) \leq 2 \exp \left( -\frac{2x^2}{t+2x}\right).
\end{equation*}
That is 
\begin{equation}\label{azumahoeffdingNt}
   \PP_{\eta_n} \Big( \big|N_t - \sum_{s=1}^{t} p_{1,s}^{\eta_n}\big| \geq x \Big) \leq 2 \exp \left( -\frac{2x^2}{t+2x}\right)
\end{equation}
For all $t \leq \Upsilon_n$, $0 \leq N_t \leq t$ and $0 \leq \sum_{s=1}^{t} p_{1,s}^{\eta_n} \leq t$, so
\begin{equation*}
    \big|N_t^2 - \big(\sum_{s=1}^{t} p_{1,s}^{\eta_n}\big)^2\big|
    = \big|N_t - \sum_{s=1}^{t} p_{1,s}^{\eta_n}\big| \big(N_t + \sum_{s=1}^{t} p_{1,s}^{\eta_n}\big)
    \leq 2t \big|N_t - \sum_{s=1}^{t} p_{1,s}^{\eta_n}\big|.
\end{equation*}
Therefore,
\begin{align*}
  \PP_{\eta_n} \Big( \big(\sum_{s=1}^{t}p_{1,s}^{\eta_n}\big)^2-x  \geq N_t^2  \Big)
  &\leq \PP_{\eta_n} \Big( \big|N_t^2 - \big(\sum_{s=1}^{t} p_{1,s}^{\eta_n}\big)^2\big| \geq x \Big) \\
  &\leq \PP_{\eta_n} \Big( 2t \big|N_t - \sum_{s=1}^{t} p_{1,s}^{\eta_n}\big| \geq x \Big)
  \leq 2 \exp \left( -\frac{x^2}{2 (t^3+x t)}\right). 
\end{align*}

\begin{lemma}[Inequality Reversal (Lemma 1 in \cite{Peel})]\label{reversallemma}
Let $X$ be a random variable and $a$, $b >0$, $c$, $d \geq 0$ such that 
\begin{align*}
    \forall x > 0, \PP (|X| \geq x) \leq a \exp \left( -\frac{b x^2}{c+d x}\right).
\end{align*}
Then, 
\begin{equation*}
    \forall y > 0, \PP \left(|X| \geq \sqrt{\frac{c}{b}\log \frac{a}{y}}+\frac{d}{b}\log \frac{a}{y}\right) \leq y.
\end{equation*}
\end{lemma}
Using this Lemma, the previous equation implies: for all $u > 0$,
\begin{equation*}
  \PP_{\eta_n} \left( \big(\sum_{s=1}^{t}p_{1,s}^{\eta_n}\big)^2- N_t^2  \geq H(t,u)\right) \leq u, 
\end{equation*}
where $H(t,u) :=\displaystyle t^{\frac{3}{2}} \sqrt{2\log \frac{2}{u}}  + 2 t \log \frac{2}{u}$.
Using Lemma~\ref{sommerlesproba} and summing the probabilities from $t =1$ to $\Upsilon_n$,
\begin{equation}\label{inegaliteconcentrationsommedescarresNt1}
    \PP_{\eta_n} \Big( \sum_{t=1}^{\Upsilon_n} \big(\sum_{s=1}^{t-1}p_{1,s}^{\eta_n}\big)^2- \sum_{t=1}^{\Upsilon_n} N_{t-1}^2\geq \sum_{t=1}^{\Upsilon_n} H(t-1,u)   \Big) \leq \Upsilon_n u. 
\end{equation}
Now, 
\begin{align*}
    \sum_{t=1}^{\Upsilon_n} H(t-1,u) &\leq \sqrt{2 \log \frac{2}{u}} \int_{0}^{\Upsilon_n} t^{\frac{3}{2}} dt + \Upsilon_n (\Upsilon_n -1) \log \frac{2}{u}
    \\& \leq \frac{2}{5} \sqrt{2 \log \frac{2}{u}}  \Upsilon_n^{\frac{5}{2}} + \Upsilon_n^2 \log \frac{2}{u}.
\end{align*}
Choosing $y = \Upsilon_n u$, \eqref{inegaliteconcentrationsommedescarresNt1} becomes
\begin{equation}\label{inegaliteconcentrationsommedescarresNt}
    \PP_{\eta_n} \Big( \sum_{t=1}^{\Upsilon_n} \big(\sum_{s=1}^{t-1}p_{1,s}^{\eta_n}\big)^2- \sum_{t=1}^{\Upsilon_n} N_{t-1}^2\geq \frac{2}{5} \sqrt{2 \log \frac{2 \Upsilon_n}{y}}  \Upsilon_n^{\frac{5}{2}} + \Upsilon_n^2 \log \frac{2 \Upsilon_n}{y}   \Big) \leq y. 
\end{equation}
From the proof of Proposition \ref{choiceofTau} (see Section \ref{proofofpropositionchoiceofTau}), for $s \leq \Upsilon_n$, $p_{1,s}^{\eta_n} \geq \frac{1}{2}-\eta_n \pi_1 s \geq \frac{1}{2}-\frac{R}{n^{\alpha}} s $.
Therefore,
\begin{equation*}
    \sum_{s=1}^{t-1}p_{1,s}^{\eta_n} \geq \frac{1}{2}(t-1) - \frac{R}{n^{\alpha}}  \frac{t(t-1)}{2}  = \frac{t-1}{2} \left(1- \frac{R}{n^{\alpha}} t\right),
\end{equation*}
and since $t \leq \displaystyle \Upsilon_n = \Big(\frac{1}{2}-\varepsilon\Big) \frac{n^{\alpha}}{R}$,
\begin{equation*}
\sum_{s=1}^{t-1}p_{1,s}^{\eta_n}
    \geq \frac{t-1}{2} \left(1 - (\frac{1}{2}-\varepsilon)\frac{t}{\Upsilon_n}\right)
    \geq \frac{t-1}{4},
\end{equation*}
so that
\begin{align*}
     \sum_{t=1}^{\Upsilon_n} \big(\sum_{s=1}^{t-1}p_{1,s}^{\eta_n}\big)^2
    \geq \frac{1}{16} \sum_{t=1}^{\Upsilon_n} (t-1)^2
    =: A_n
    = \frac{\Upsilon_n^3}{48} + O(\Upsilon_n^2).
\end{align*}
Injecting this equation in Equation~\eqref{inegaliteconcentrationsommedescarresNt},
\begin{align*}
    \PP_{\eta_n} &  \Big( A_n- \sum_{t=1}^{\Upsilon_n} N_{t-1}^2\geq \frac{2}{5} \sqrt{2 \log \frac{2 \Upsilon_n}{y}}  \Upsilon_n^{\frac{5}{2}} + \Upsilon_n^2 \log \frac{2 \Upsilon_n}{y} \Big)\\& \leq
    \PP_{\eta_n} \Big( \sum_{t=1}^{\Upsilon_n} \big(\sum_{s=1}^{t-1}p_{1,s}^{\eta_n}\big)^2-\sum_{t=1}^{\Upsilon_n} N_{t-1}^2 \geq \frac{2}{5} \sqrt{2 \log \frac{2 \Upsilon_n}{y}}  \Upsilon_n^{\frac{5}{2}} + \Upsilon_n^2 \log \frac{2 \Upsilon_n}{y}   \Big) \leq y . 
\end{align*}
\end{proof}

\section{NUMERICAL ILLUSTRATIONS}

The numerical illustrations have been realized using \texttt{R}. All the material necessary for the reproduction of the simulations of the article is contained in the same file as the supplementary in a zip file. Let's explain the link between the different data files and the available algorithms.

\begin{itemize}
    \item All necessary functions are in the file \texttt{myfunctions.R}. It contains in particular the procedure \texttt{Exp3}, but also the functions which calculate the MLE for a constant learning rate and the truncated MLE for a decreasing learning rate.
    \item The algorithm \texttt{est\_etaconstant\_nveclarge.R} generates the data file \texttt{Data\_eta03\_nveclarge.Rdata} containing the estimators for a constant learning rate necessary for Figure \ref{est_etacst}.
    \item The algorithm \texttt{Tmax\_comp.R} generates the data file \texttt{Tmax\_eta03\_dec\_K4.Rdata} containing the $\Upsilon_{\max}$ necessary for Figure \ref{evolutionofTmax}.
    \item The algorithm \texttt{est\_etadecreasing.R} generates the data file \texttt{Data\_eta03\_dec.Rdata} or  \texttt{Data\_eta03\_dec\_K4.Rdata} (depending on the number of arms) containing the estimators for a decreasing learning rate respectively for $2$ arms and $4$ arms. These data are used in Figures \ref{pred} and \ref{est}.
    \item The Markdown file \texttt{Test\_Exp3.Rmd} reproduces all the figures from the article for these different data files. To spare the reader lengthy computations, the output of the functions \texttt{est\_etaconstant\_nveclarge.R}, \texttt{Tmax\_comp.R} and \texttt{est\_etadecreasing.R}, is made available in the previous enumerated data files.
\end{itemize}


\end{document}